\newcommand{\excise}[1]{}
\newcommand\RR{\mathbb{R}}
\newcommand\ZZ{\mathbb{Z}}
\newcommand\EE{\mathbb{E}}
\newcommand{\tr}{\operatorname{tr}}
\renewcommand\>{\rangle}
\newcommand{\bfx}{{X}}
\newcommand{\bfv}{{v}}
\newcommand{\bfc}{{c}}
\newcommand{\cnh}{\widehat{c}}
\newcommand{\vnh}{\widehat{V}}
\newcommand{\rnh}{\widehat{r}}
\newcommand{\Id}{\mathrm{I}}
\newtheorem{theorem}{Theorem}
\newtheorem{definition}{Definition}
\newtheorem{lemma}{Lemma}
\newtheorem{example}{Example}
\newtheorem{proposition}{Proposition}
\newtheorem{corollary}{Corollary}
\newtheorem{remark}{Remark}
\DeclareMathOperator\var{var}
\DeclareMathOperator\diag{diag}
\DeclareMathOperator\argmin{argmin}
\DeclareMathOperator\argmax{argmax}
\DeclareMathOperator\supp{Supp}
\newcommand{\cov}{\mathrm{Cov}}
\newcommand{\RNum}[1]{\uppercase\expandafter{\romannumeral #1\relax}}
\begin{document}

	\title{\mbox{}\\[-11ex]Efficient Manifold Approximation with Spherelets}
	\vspace{-5ex}
	\author{\\[1ex]Didong Li$^{1,2}$, Minerva Mukhopadhyay$^3$ and David B Dunson$^{4}$ \\ 
	{\em Department of Computer Science$^1$, Princeton University}\\
	{\em Department of Biostatistics$^2$, University of California, Los Angeles}\\	
	{\em Department of Mathematics and Statistics$^2$, Indian Institute of Technology Kanpur}\\
	{\em Department of Statistical Science$^4$, Duke University}}
	\date{\vspace{-5ex}}
	
	\maketitle
	
In statistical dimensionality reduction, it is common to rely on the assumption that high dimensional data tend to concentrate near a lower dimensional manifold.  There is a rich literature on approximating the unknown manifold, and on exploiting such approximations in clustering, data compression, and prediction.  Most of the literature relies on linear or locally linear approximations.
In this article, we propose a simple and general alternative, which instead uses spheres, an approach we refer to as spherelets. We develop spherical principal components analysis (SPCA), and provide theory on the convergence rate for global and local SPCA, while showing that spherelets can provide lower covering numbers and MSEs for many manifolds.  Results relative to state-of-the-art competitors show gains in ability to accurately approximate manifolds with fewer components. Unlike most competitors, which simply output lower-dimensional features, our approach projects data onto the estimated manifold to produce fitted values that can be used for model assessment and cross validation. The methods are illustrated with applications to multiple data sets.

	Key Words: Curvature, Dimensionality reduction, Manifold learning, Spherical principal component analysis.
	
	\section{Introduction}\label{sec:intro}
Dimensionality reduction is a key step in statistical analyses of high-dimensional data.  
If one is willing to assume that data are concentrated close to a lower-dimensional hyperplane, then Principal Components Analysis (PCA) and its variants are natural.  The focus of this article is on relaxing the assumption of a lower-dimensional linear structure to allow the true latent structure to be curved; this motivates appropriate generalizations of PCA. 

This problem relates to {\em manifold learning}, which is based on the assumption that higher-dimensional data $X_i \in \RR^D$ are often concentrated near a $d$-dimensional Riemannian manifold $M$ with $d\ll D$.  The unknown manifold $M$ is approximately flat in small local neighborhoods but has non-zero curvature.  {As a simple example to provide motivation, in Figure 1 we plot economics data collected by the U.S. Bureau of Economic Analysis,
	and retrieved from the Federal Reserve Bank of St. Louis for economic
	analysis (available from the ggplot2 R package and \url{https://fred.stlouisfed.org} \citep{economic2020UEMPMED,economic2020PCE,economic2020PSAVERT,economic2020POP,economic2020UNEMPLOY})} containing 576 multivariate observations of (1) personal consumption expenditures in billions of dollars (consume), (2) total population in thousands (pop size), (3) personal saving rate (saving), (4) median duration of unemployment in weeks (dur unemploy), and (5) number of unemployed in thousands (num unemploy).  The pairwise plots are suggestive that the data may concentrate near a one-dimensional curve.

\begin{figure}[!h]
	\centering
	\includegraphics[width=\textwidth,height=200pt]{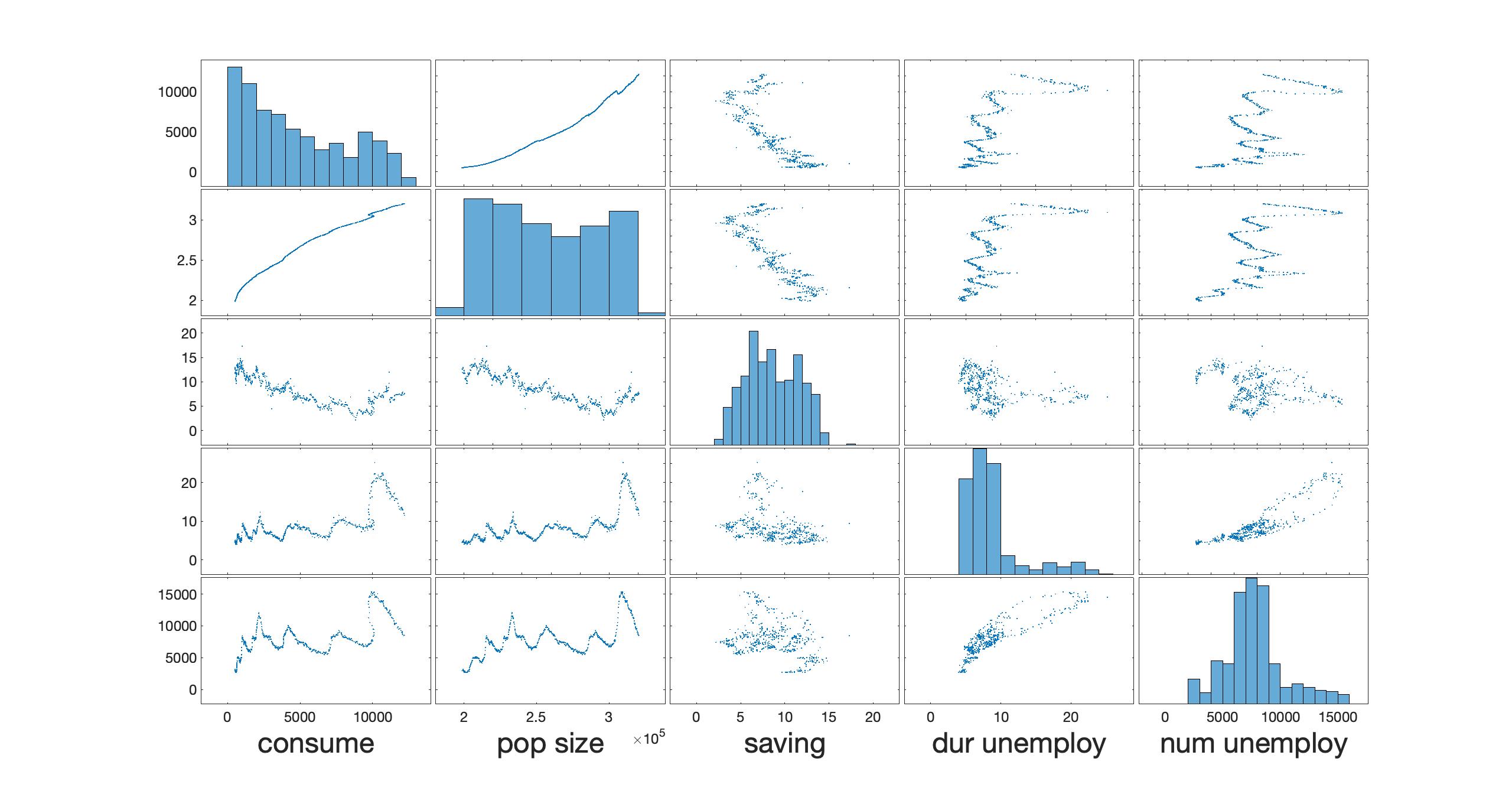} 
	\caption{Economics data showing pairwise scatterplots for each of five variables along with histograms for each variable.}
	\label{fig:economics}
\end{figure}

Much of the focus of this paper is on devising a simple method that can parsimoniously fit the data from Figure 1.  This rules out most of the existing manifold learning methods, ranging from Isomap \citep{isomap2000} to Diffusion Maps \citep{DM2006}. Such approaches focus on exploiting the manifold structure to replace the original data with lower dimensional features, but do not provide fitted values in the original data space.  For the economics data assuming a manifold dimension of $d=1$, such methods would replace the original $5$ features with a single feature.  This may provide a useful one-dimensional summary, but much of the interpretability is lost.

An alternative general strategy that allows one to obtain fitted values of data lying close to a nonlinear manifold is to rely on local approximations.  In particular, if we break up the original data domain into local regions, then within each region we could define a separate local approximation to the manifold.
When applying such approaches, by far the most common strategy is to rely on locally linear approximations, fitting a separate hyperplane within each region.  Such an approach is intuitively reasonable from a geometric perspective, as it is well known that Riemannian manifolds can be approximated via local tangent planes.  Some examples of this strategy include local PCA \citep{weingessel2000local,arias2017spectral} and geometric multiresolution analysis \citep{GMRA2011,GMRA2012,GMRA2016}.

As illustration, consider applying such an approach to the economics data of Figure \ref{fig:economics}. With some risk of misinterpretation in examining pairwise dependence plots, it appears that the one-dimensional manifold the data are concentrated near is very wiggly.  This implies that, in using local linear approximations, we will need to rely on a large number of well chosen neighborhoods to obtain adequate performance.  However, as the number of neighborhoods increases, the amount of data within each neighborhood decreases, and statistical uncertainty in estimating the local linear parameters increases.  The resulting estimator will tend to be noisy, with a tendency towards over-fitting.  The second column of Figure \ref{fig:economics_1425} shows the results of applying a state-of-the-art local linear approximation to the economics data, corresponding to an MSE of $2.5\times 10^{5}$.  The figure shows that the fitted curves are overly jagged.

\begin{figure}[!h]
	\centering
	\includegraphics[width=\textwidth]{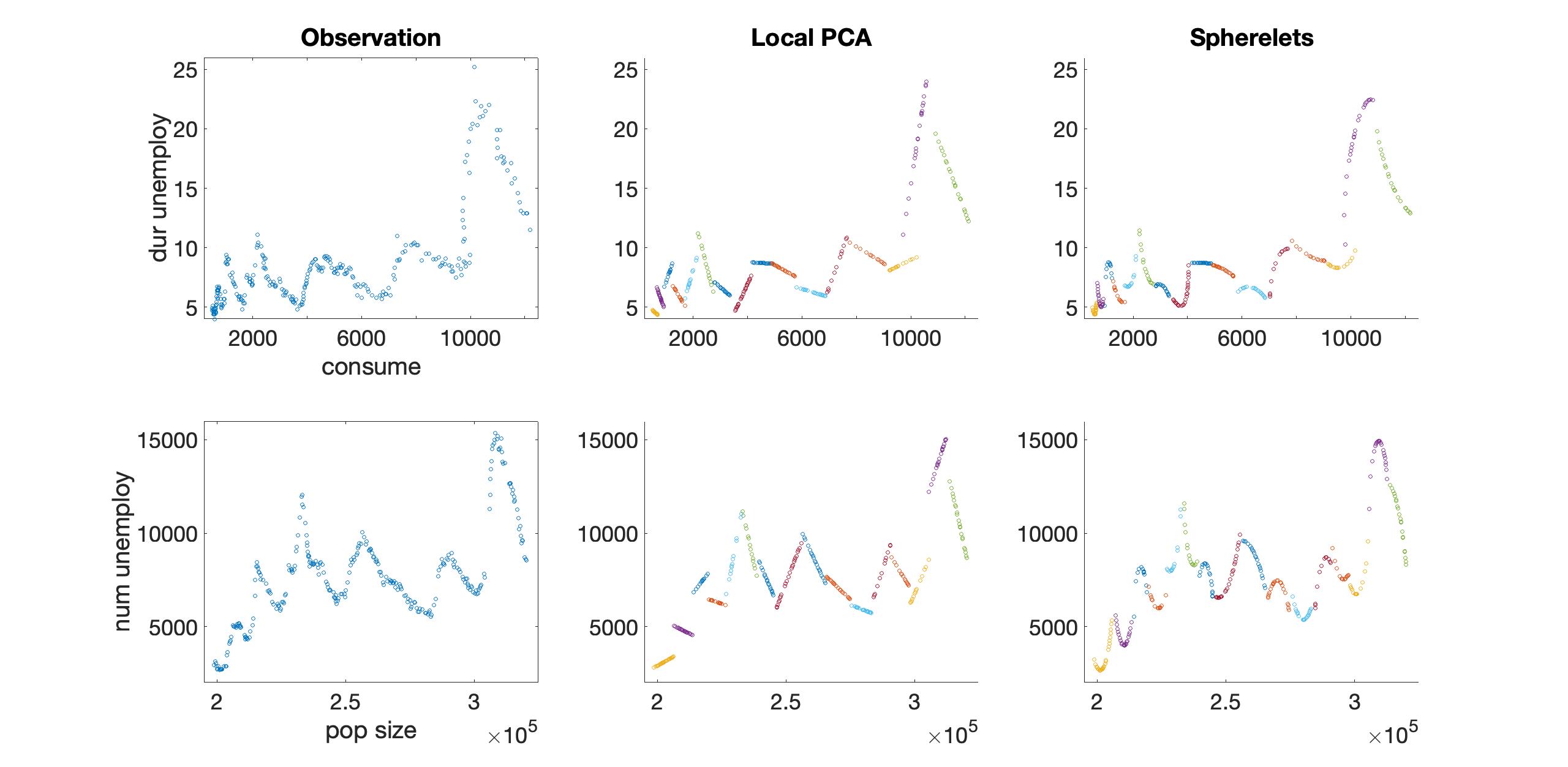} 
	\caption{Illustration of results based on fitting local PCA and local SPCA (spherelets) to the Economics data from Figure 1.  The number of components was chosen by cross validation to avoid over-fitting.
	}
	\label{fig:economics_1425}
\end{figure}

For one-dimensional manifolds there are various possibilities we could consider to improve the performance. If we are willing to assume that there is a single connected manifold, as appears to be the case for the economics data, then we could modify the linear fitting algorithm to include a constraint that the line segments in adjacent neighborhoods are connected.  Such a strategy is difficult to extend to higher dimensions, and would perform poorly if the data were instead from multiple disconnected manifolds.  Alternatively, one could apply a method designed for fitting a curve through data in $\RR^D$, such as principal curves \citep{hastie1989principal}. However, we applied principal curves to the economics data and found the curve is dramatically over-smoothed, providing very poor fit with MSE $4.0\times10^8$; hence, we do not include the plots in Figure \ref{fig:economics_1425} (see Figure \ref{fig:economics_all} in Appendix).  Principal curves cannot be applied beyond the $d=1$ case, and even for $d=1$ results are often non-stable and counter-intuitive.


Hence, we are motivated to return to the local approximation strategy, but to improve upon local PCA and alternative local linear approaches.
Much of the literature on nonlinear extensions of PCA is not relevant to our problem, because it involves replacing the original dataset with higher dimensional modifications based on applying kernels, polynomials, etc.  It would be conceptually possible to fit a quadratic or higher order surface within each neighborhood, but then we are substituting the problem of too many neighborhoods for one of too many parameters per neighborhood.  The resulting model fitting will be highly complex, and it is not clear that the bias-variance tradeoff will warrant such complexity.  

Instead, we propose a simple and efficient alternative to PCA, which uses spheres instead of hyperplanes to locally approximate manifolds. Our approach  relies on a spherical generalization of PCA, deemed SPCA. SPCA has a simple analytic form, and can be used as a generalization of PCA to incorporate curvature.  We refer to any algorithm that uses spheres for local approximation as {\em spherelets}. Spheres provide an excellent basis for locally approximating non-linear manifolds $M$ having  positive, zero, or negative Gaussian curvature that may vary across regions. A major advantage is the ability to accurately approximate $M$ with dramatically fewer pieces, without significantly increasing the number of parameters per piece. 

To give a flavor of the possible gains, we applied this approach to the economics data of Figure \ref{fig:economics}.  Based on cross validation, the optimal number of neighborhoods for local PCA and spherelets (local SPCA) were both $16$.  We color code the different pieces and show the fit in Figure \ref{fig:economics_1425} by two  pairwise plots to better visualize the learned curve. The fit is visually much better, limiting over-fitting and artifacts and capturing the smooth relationships more accurately.  This is also apparent from the out-of-sample MSE, which is $2.5\times 10^5$ for local PCA and $1.4\times 10^5$ for spherelets.

We provide strong theory support for spherelets. First, as a local result, for data generated from a sphere with measurement error, we show the rate of convergence of SPCA to the true sphere in Hausdorff distance as $n$ increases. As a global result, we provide the convergence rate for spherelets manifold estimates.  We additionally show a mathematical result bounding covering numbers for local spherical versus linear approximations, showing that dramatic improvements are possible for manifolds having sufficiently large curvature but not too many subregions having large changes in normal curvature. The sign of the Gaussian curvature has no impact on  approximation performance. In fact, spherelets work well for a broad class of Riemannian manifolds, regardless of whether the Gaussian curvature is positive or negative, or even varying in sign with location. We assume $d$ and $D$ are fixed throughout this paper, and all asymptotic theory is focused on increasing sample size $n$. 

Our simulation results show significant reductions in the number of components needed to provide an approximation within a given error tolerance.  SPCA has lower MSE than PCA and better preserves clusters. Comparing to local PCA, spherelet manifold estimates have much lower mean square error.  Substantial practical gains are shown in multiple examples. 

The paper is organized as follows. In section 2, we propose spherical principal component analysis (SPCA) and provide convergence rate theory. In section 3, we focus on manifold approximation, prove the convergence rate of spherelets and provide a  covering number theorem as additional support.  In section 4, we consider simulated and real data applications for both manifold estimation and data visualization. In section 5, we discuss some open problems and future work. Proofs justifying the SPCA estimator are in the Appendix, while other proofs are in the Supplementary Materials.

\section{Spherical  Principal Component Analysis}

\subsection{SPCA Algorithm}	\label{sec:SPCA}
Let $S_V(c,r) \coloneqq \{x: \|x-c \|=r, x-c\in V, \dim(V)=d+1\}$ be the $d$-dimensional sphere with center $c$ and radius $r$ lying in the $d+1$ dimensional affine subspace $c+V\subset \RR^D$. Throughout this paper, we do not distinguish between $V$ as a linear subspace and its matrix representation, that is, a matrix with orthonormal columns. Our goal in this section is to estimate 
$(V,c,r)$ to obtain the best approximating sphere through data $X_1,\ldots,X_n$ consisting of $n>d$ samples in $\RR^D$.
We first consider the projection of an arbitrary point $x$ to the sphere $S_V(c,r)$.
\begin{lemma}\label{sproj} For any $x\in \RR^D$, its projection to $S_V(c,r)$, or the closest point $y \in S_V(c,r)$, is  
	$$\underset{y\in S_{V}(c,r)}{\argmin}\  d^2(x,y)=\begin{cases}
		c+\frac{r}{\|VV^\top(x-c)\|}VV^\top(x-c) & VV^\top(x-c)\neq 0\\
		S_V(c,r) & VV^\top(x-c)= 0,
	\end{cases}$$
	where $c+VV^\top(x-c)$ is the projection of $x$ onto the affine subspace $c+V$.  
\end{lemma}
When $x=c$, the projection is not unique, but the $d(x,\mathrm{Proj}(x))$ is unique, which is the main focus of manifold approximation. As a result, we still treat the projection as unique without any confusion in the remaining sections. 

The projection of $x$ to the sphere in Lemma \ref{sproj} can be conducted by first projecting $x$ to a $d+1$ dimensional affine subspace and then further projecting to the sphere (see the proof of Lemma \ref{sproj}).  To find the optimal affine subspace, we use $d+1$-dimensional PCA, obtaining 
\begin{equation}
	\widehat{V} = (v_1,\cdots,v_{d+1}),\quad v_i=\mathrm{evec}_i\{(X-1\bar X^\top)^\top(X-1\bar X^\top)\}, \label{V_est}
\end{equation}
where $\mathrm{evec}_i(S)$ is the $i$th eigenvector of $S$ in decreasing order.  Letting $Y_i =\bar X+\widehat{V}\widehat{V}^{\top}(X_i-\bar X)$, we then find the optimal sphere through points $\{ Y_i \}_{i=1}^n$.  A sphere can be expressed as the set of zeros of a quadratic function $(y-c)^\top(y-c)-r^2$. When this quadratic function has positive value, $\|y-c\|>r$, so $y$ is outside the sphere, and $y$ is inside the sphere if the function has negative value. Hence, we define the loss function 
\begin{equation}\label{eqn:SPCAloss}
	\mathscr{L}(c,r)\coloneqq \sum_{i=1}^n\left((Y_i-c)^\top(Y_i-c)-r^2\right)^2.
\end{equation}
\begin{theorem}\label{SPCA}
	The minimizer of (\ref{eqn:SPCAloss}) is given by
	\begin{equation}\label{eqn:SPCAsolution}\widehat{c}=\frac{1}{2}H^{-1}\xi,~~\widehat{r}=\frac{1}{n}\sum_{i=1}^n\|Y_i-\widehat{c}\|
	\end{equation}
	where
	$H=\sum_{i=1}^n\big(Y_i-\bar Y)(Y_i-\bar Y\big)^\top$ and $
	\xi=\sum_{i=1}^n \Big(Y_i^\top  Y_i-\frac{1}{n}\sum_{j=1}^nY_j^\top Y_j\Big)\big(Y_i-\bar Y\big).$
\end{theorem}
We refer to the resulting estimates $(\widehat{V},\widehat{c},\widehat{r})$ as (empirical) {\em Spherical PCA} (SPCA). If we replace the sample mean by expectation, the  corresponding estimator is called population SPCA.

\begin{remark}\label{spcaremark}
	Alternatively, we could have minimized $\sum_{i=1}^n d^2(X_i,S_V(c,r)),$ corresponding to the sum of squared residuals, also known as geometric loss.  However, the resulting optimization problem is non convex, lacks an analytic solution, and iterative algorithms may be slow to converge, while only producing local minima. Instead, we consider the algebraic loss function in Equation (\ref{eqn:SPCAloss}), which is more robust with respect to noise \citep{coope1993circle} and admits a closed-form solution.
\end{remark}

\begin{corollary}\label{lossfunctions}
	If $X_i \in S_V(c,r)$ for all $i$, SPCA will find the same minimizer as the loss function in the above Remark, corresponding to exactly $(V,c,r)$.
\end{corollary}

The number of unknown parameters of $d$-dimensional PCA is $\mathrm{O}(Dd)$, while the number of unknown parameters of SPCA is $\mathrm{O}(Dd)+D+1=\mathrm{O}(Dd)$. In addition, the computational cost for the first step of SPCA is the same as PCA. The  additional cost of SPCA comes from calculating $c$ and $r$, which are both linear in $D$ and $n$ and hence dominated by PCA complexity. As a result, the two algorithms have the same order of computational cost. The key motivation for SPCA is to maintain simplicity, both conceptually and computationally, while improving performance by relaxing the linearity assumption.

\subsection{Asymptotics of SPCA}\label{sec:SPCAasym}

In this section, we show that if the data are concentrated around a sphere, then SPCA can recover this sphere with high probability. Assume $Y\sim \rho$ where $\supp(\rho)=S(V_0,c_0,r_0)$. Let $\epsilon\sim N(0,\sigma^2\Id_D)$ be Gaussian noise and $X=Y+\epsilon$. Let $X_1,\cdots,X_n$ be i.i.d observations and let the empirical solution of SPCA be $\widehat{V}_n,~\widehat{c}_n,~\widehat{r}_n$. 

We denote the population covariance matrix and sample covariance matrix by $\Sigma$ and $\hat{\Sigma}_n$, respectively. It is clear that all eigenvalues of $\Sigma$ are positive. Furthermore, we rely on the following assumption:

\begin{enumerate}[label=(A)]
	\item The first $d+1$ eigenvalues of $\Sigma$ are all distinct, denoted by $\lambda_1>\lambda_2>\cdots>\lambda_{d+1}> \sigma^2>0$ .
\end{enumerate}
\begin{theorem}\label{thm:SPCAerror}
	Under assumption (A), the following hold
	$$\|\widehat{V}_n-V_0\|\leq \mathrm{o}_p\left(\frac{\sigma\log n}{n^{1/2}}\right),~~\|\widehat{c}_n-c_0\|\leq \mathrm{O}(\sigma^2)+\mathrm{o}_p\left(\frac{\sigma\log n}{n^{1/2}}\right),~~|\widehat{r}_n-r_0|\leq \mathrm{O}(\sigma^2)+\mathrm{o}_p\left(\frac{\sigma\log n}{n^{1/2}}\right).$$
	
\end{theorem}

The above theorem provides an upper bound on the error rate in estimating each sphere parameter. Given observations with measurement error, SPCA can recover the true parameters of the sphere at the parametric rate with respect to the sample size up to a log factor and an asymptotic error depending on the measurement error variance. The following corollary controls the Hausdorff distance (denoted by $d_H$) between the true sphere and the estimated sphere by SPCA.
\begin{corollary}\label{cly:SPCAHausdorff}
	Under the same assumption as Theorem \ref{thm:SPCAerror},
	$$d_H(S(V_0,c_0,r_0),S(\widehat{V}_n,\widehat{c}_n,\widehat{r}_n))\leq C\sigma^2+\mathrm{o}_p\left(\frac{\sigma \log n}{n^{1/2}}\right).$$
\end{corollary}
For previous theoretical results on manifold estimation under Hausdorff loss, refer to 
\cite{genovese2012manifold,kim2015tight}.  It is typical in the literature on asymptotic theory for manifold approximation to assume that either the data are noiseless \citep{fefferman2019fitting,aamari2019nonasymptotic,sober2019manifold,genovese2012manifold}, the noise is perpendicular to the manifold and bounded \citep{genovese2012minimax}, or the level of measurement error decreases with the sample size \citep{fefferman2018fitting, GMRA2016,aamari2019nonasymptotic}; this would allow us to remove the asymptotic bias in the above bounds.  

\section{Manifold Approximation}
Most manifolds cannot be adequately approximated by a single PCA or SPCA.  Hence, in this section, we consider using local SPCA to approximate the manifold locally by spherelets.

\subsection{Local SPCA}\label{sec:localSPCA}
Assume $Y\sim \rho$ where $\supp(\rho)=M$. Let $\epsilon\sim N(0,\sigma^2\Id_D)$ be Gaussian noise and $X=Y+\epsilon$ with i.i.d observations $X_1,\cdots,X_n$. A single sphere will typically not be sufficient to approximate the entire manifold $M$, but instead we partition $\RR^D$ into local neighborhoods and implement SPCA separately in each neighborhood.  This follows similar practice to popular implementations of local PCA, but we apply SPCA locally instead of PCA.  We divide $\RR^D$ into non-overlapping subsets $C_1,\ldots,C_K$.  For the $k$th subset, we let $X_{[k]} = \{ X_i: X_i \in C_k \}$, $(\widehat{V}_k,\widehat{c}_k,\widehat{r}_k)$ denote the results of applying SPCA to data $X_{[k]}$, $\mathrm{Proj}_k$ denote the projection map from $x \in C_k$ to $y \in S_{\widehat{V}_k}(\widehat{c}_k,\widehat{r}_k)$ obtained by Lemma \ref{sproj}, and $\widehat{M}_k = S_{\widehat{V}_k}(\widehat{c}_k,\widehat{r}_k)\cap C_k $.  Then, we approximate $M$ by $\widehat{M} = \bigcup_{k=1}^K \widehat{M}_k$.  

In general, $\widehat{M}$ will not be continuous or a manifold but instead is made up of a collection of pieces of spheres chosen to approximate the manifold $M$.  There are many ways in which one can choose the subsets $\{ C_k \}_{k=1}^K$, but in general the number of subsets $K$ will be chosen to be increasing with the sample size with a constraint so that the number of data points in each subset cannot be too small, as then $\widehat{M}_k$ cannot be reliably estimated.  Below we provide theory on mean square error properties of the estimator $\widehat{M}$ under some conditions on how the subsets are chosen but without focusing on a particular algorithm for choosing the subsets.

There are a variety of algorithms for multiscale partitioning of the sample space, ranging from cover trees \citep{cover2006} to METIS \citep{METIS1998}, to iterated PCA \citep{iteratedPCA2009}. As the scale becomes finer, the number of partition sets increases exponentially and the size of each set decreases exponentially. Assume $U\subset M$ is an arbitrary submanifold of $M$, $\rho_U=\rho|_U$ is the probability measure of data $X_i$ conditionally on $X_i\in U$ and $\mathrm{diam}(U)=\sup_{x,y\in U} d(x,y)=\alpha_U.$
For example, if we bisect the unit cube in $\RR^D$ $j$ times, then the diameter of each piece will be $\alpha_U\propto 2^{-j}$ {which decays to zero with $j$}. The approximation error depends on $\alpha_U$: as $\alpha_U\rightarrow0$, each local neighborhood is smaller so linear or spherical  approximations perform better.

\begin{theorem}\label{thm:mfderror}
	Addition to assumption (A), assume \newline
	(B): There exists $\delta>0$ such that for any submanifold $U\subset M$, $r_U\geq \delta$, where $r_U$ is the radius of the sphere obtained by population SPCA {(defined after Theorem \ref{SPCA})} on $U$ with respect to the measure $\rho_U$. \newline 
	(C): The partition $\{C_1,\cdots,C_K\}$ is regular in the sense that $\mathrm{diam}(C_k)^d=\mathrm{O}(n_k/n)$, where $n_k$ is the number of samples in $C_k$. \newline 
	Then the manifold approximation uniform error rate is
	$$\sup_{x\in M}d(x,\widehat{M})\leq  C\sigma^2+o_p\left(\frac{\sigma\log n}{n^{\frac{2}{d+4}}}\right).$$
\end{theorem}

As discussed after Corollary \ref{cly:SPCAHausdorff}, under stronger assumptions on the noise, the bias term $\sigma^2$ can be removed so that the rate is $n^{-\frac{2}{d+4}}$ up to a $\log$ factor while the optimal rate for manifold approximation is $n^{-\frac{2}{d+2}}$ \citep{genovese2012minimax}.
Assumption (B) is a very weak regularity condition on the manifold $M$, which rules out extreme kinks in $M$ leading to unbounded curvature and hence arbitrarily small radius $r_U$ of the best fitting sphere in a local region $U \subset M$ containing the kink. Assumption (C) is also {reasonable}. For example, suppose  
the density function of $\rho$, denoted by $f_\rho$, is strictly positive. By compactness of $M$, $f_\rho$ is bounded above and below by a positive number, and hence $\rho(C_k)\sim \mathrm{Vol}(C_k)\sim\mathrm{diam}(C_k)^d$. Since $\rho(C_k)\sim n_k/n$, (C) follows. { In addition, throughout this paper, we assume $D$ and $d$ are fixed and all asymptotic theories are for $n\to\infty$. 
}

\subsection{Covering Numbers}\label{sec:covering}
Theorem \ref{thm:mfderror} does not imply that applying SPCA in local neighborhoods leads to a better rate than applying PCA. This is not surprising since we are not restricting the curvature. When the curvature is zero, we expect SPCA and PCA to have similar performance. However, when curvature is not approximately zero, SPCA is expected to have notably improved performance except for very small local regions. This is consistent with the empirical results in the following section. 	In this section, we provide geometric evidence in favor of spherelets over local PCA through covering numbers. This covering number theory is mathematical and does not involve data or distributional assumptions.

We define the covering number as the minimum number of local bases needed to approximate the manifold within $\epsilon$ error.  Our main theorem shows the covering number of spherelets is smaller than that of hyperplanes. We assume $M$ to be compact, otherwise the covering number is not well-defined. 
\begin{definition}
	Let $M$ denote a $d$-dimensional compact $C^3$ Riemannian manifold embedded in $\RR^D$, and $\mathcal{B}$ a 
	collection of $d$-dimensional subsets of $\RR^D$.  Then the $\epsilon>0$  covering number $N_\mathcal{B}(\epsilon,M)$ is defined as
	\begin{eqnarray*}
		N_\mathcal{B}(\epsilon,M)\coloneqq \inf_{K\in \ZZ_+}\bigg\{K: \exists\{C_k, \mathrm{Proj}_k,B_k\}_{k=1}^K\text\ s.t.\ \Big\|x-\mathrm{Proj}(x)\Big\|\leq \epsilon,\ \forall x\in M\bigg\},
	\end{eqnarray*}
	where $\{C_k\}_{k=1}^K$ is a partition of $\RR^D$, $B_k\in\mathcal{B}$, $\mathrm{Proj}_k: C_k\rightarrow B_k,\ x\mapsto \underset{y\in B_k}{\argmin} \|x-y\|^2$ is the corresponding local projection and $\mathrm{Proj}(x)\coloneqq\sum_{k=1}^K \mathbf{1}_{\{x\in C_k\}} \mathrm{Proj}_k (x)$ is the global projection. 
\end{definition}

The above covering number is the minimal number of bases in dictionary $\mathcal{B}$ needed to approximate 
$M$ with $\epsilon$ error. We focus on two choices of $\mathcal{B}$: all $d$-dimensional hyperplanes in $\RR^D$, denoted by $\mathcal{H}$, and all $d-$dimensional spheres in $\RR^D$, denoted by $\mathcal{S}$. 
Including spheres with infinite radius in $\mathcal{S}$, we have  $\mathcal{H}\subset\mathcal{S}$ implying the following Proposition.
\begin{proposition} For any compact $C^3$ Riemannian manifold $M$, and $\epsilon>0$, 
	$N_{\mathcal{S}}(\epsilon,M)\leq N_{\mathcal{H}}(\epsilon,M).$
\end{proposition}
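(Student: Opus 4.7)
The plan is to obtain the inequality as an immediate consequence of the definition, once the inclusion $\mathcal{H}\subset\mathcal{S}$ is justified. Since $N_\mathcal{B}(\epsilon,M)$ is an infimum over admissible cover configurations whose basis elements come from $\mathcal{B}$, monotonicity in $\mathcal{B}$ will give the result.

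First, I would fix the convention that a $d$-dimensional hyperplane is regarded as a $d$-dimensional sphere of infinite radius. Concretely, for any $H\in\mathcal{H}$ and any point $p\in H$ with unit normal $\nu$ (inside the ambient $(d{+}1)$-plane spanned by $H$ and $\nu$), the spheres $S_r\coloneqq\{x:\|x-(p+r\nu)\|=r\}$ (intersected with that $(d{+}1)$-plane) converge to $H$ in Hausdorff distance on any compact set as $r\to\infty$, and moreover for every fixed $x\in\mathbb{R}^D$, $\argmin_{y\in S_r}\|x-y\|^2\to\argmin_{y\in H}\|x-y\|^2$. Declaring the limiting object to belong to $\mathcal{S}$ gives $\mathcal{H}\subset\mathcal{S}$.

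Second, I would check monotonicity of the covering number in the dictionary. If $N_\mathcal{H}(\epsilon,M)=K$, then there exist $\{C_k,\mathrm{Proj}_k,B_k\}_{k=1}^K$ with $B_k\in\mathcal{H}$, a partition $\{C_k\}$ of $\mathbb{R}^D$, and $\|x-\mathrm{Proj}(x)\|\leq\epsilon$ for all $x\in M$. Since each $B_k$ also lies in $\mathcal{S}$, and the local projection $y\mapsto\argmin_{z\in B_k}\|y-z\|^2$ is defined intrinsically by $B_k$ and does not depend on which dictionary it is regarded as a member of, the very same triple is an admissible $\epsilon$-cover by elements of $\mathcal{S}$. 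Taking infimum, $N_\mathcal{S}(\epsilon,M)\leq K=N_\mathcal{H}(\epsilon,M)$.

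The only mildly delicate step is the first one, i.e.\ justifying that hyperplanes may legitimately be counted as spheres of infinite radius; this is a matter of convention rather than a real obstacle, and once adopted the rest of the argument is a one-line monotonicity statement about an infimum over a larger feasible set.
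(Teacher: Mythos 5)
Your argument is exactly the paper's: the paper declares that spheres of infinite radius are included in $\mathcal{S}$, giving $\mathcal{H}\subset\mathcal{S}$, and then the inequality follows by monotonicity of the infimum over a larger dictionary. Your write-up merely spells out the limiting-sphere justification and the monotonicity step, both of which the paper treats as immediate.
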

The proposition implies that an oracle focused on approximating $M$ with error $\le \epsilon$ using either spheres or hyperplanes will
never need to use more spheres than hyperplanes.  The following Theorem provides an improved comparison.

\begin{theorem}[Covering Number]\label{mainthm}
	Assume $M$ is a compact $C^3$ $d$-dimensional Riemannian manifold. Then there exists constants $C=C(M)$ and $\delta=\delta(M)>0$ such that $\forall \epsilon\leq \delta$
	\begin{equation}\label{eqn:coveringnumber}
		N_{\mathcal{H}}(\epsilon,M)\leq C\epsilon^{-\frac{d}{2}},\qquad
		N_\mathcal{S}(\epsilon,M)\leq CV_\epsilon \epsilon^{-\frac{d}{3}}+C(V-V_\epsilon)\epsilon^{-\frac{d}{2}},
	\end{equation}
	where $V$ is the (Riemannian) volume of $M$ while $V_\epsilon\in[0,V]$ is the volume of a submanifold of $M$ that is locally well approximated by spheres; see Appendix \ref{sec:Thm8} for more details.
\end{theorem}
When $d=1$, $M=\gamma$ is a curve and we have the following Corollary.
\begin{corollary} \label{clr:coveringcurv}For any $\epsilon>0$ and compact $C^3$ curve $\gamma$, 
	$N_\mathcal{S}(\epsilon,\gamma)\leq C\epsilon^{-\frac{1}{3}}$.
\end{corollary}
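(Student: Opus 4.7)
The plan is to reduce to the Main Theorem after specializing to $d=1$, and then to sharpen the leading constant by carrying out the underlying Taylor estimate directly in one dimension. First I would note that for a curve, $UT_p\gamma=\{\pm v\}$ consists of two antipodal unit vectors, and since reversing the initial direction of a geodesic only flips its parameterization, $K(p,v)=K(p,-v)$. Hence $\sup_v K(p,v)-\inf_v K(p,v)=0$ for every $p$, so $F_\epsilon=\gamma$ and $V_\epsilon=V$. Plugging this into (\ref{spherecoveringnumber}) with $d=1$ collapses the bound to $V(6\epsilon/(3+T))^{-1/3}$, which already has the desired form up to the constant $3+T$ versus $T$.

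To remove the $3$, I would revisit the local spherical approximation on $M_\epsilon=\gamma$. Parameterize $\gamma$ locally by arc length $s$ near a fixed point $p=\gamma(0)$, and let $\mathcal{O}_p$ denote the osculating circle at $p$, i.e.\ the unique circle through $p$ with the same unit tangent and the same curvature as $\gamma$. By construction $\mathcal{O}_p$ agrees with $\gamma$ in position, tangent, and second derivative at $p$, so the Taylor expansion of $\gamma(s)-\mathcal{O}_p(s)$ begins at order three. Using Definition~\ref{UTM}, which bounds the norm of the third derivative of $\exp_p(tv)$ by $T$, yields
\begin{equation*}
\big\|\gamma(s)-\mathrm{Proj}_{\mathcal{O}_p}(\gamma(s))\big\|\leq \tfrac{T}{6}|s|^3+O(s^4).
\end{equation*}
Setting the right-hand side at most $\epsilon$ gives $|s|\lesssim(6\epsilon/T)^{1/3}$.

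Finally, covering $\gamma$ by arcs of length $\lesssim(6\epsilon/T)^{1/3}$ centered at such points requires $\lesssim V(6\epsilon/T)^{-1/3}$ arcs, and each arc is approximated within $\epsilon$ by its osculating circle, which yields the Corollary. The \emph{main obstacle} is purely conceptual: one must recognize that the constant $3$ appearing inside $3+T$ in the Main Theorem comes from the directional spread $\sup_v K(p,v)-\inf_v K(p,v)$ allowed on $F_\epsilon$; in one dimension this spread is identically zero, so that contribution drops out and we recover the cleaner constant $T$.
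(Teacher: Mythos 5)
Your proof is correct and follows essentially the same approach as the paper's: the paper proves this corollary in Appendix~A by directly invoking Proposition~\ref{quadtaylor} (the osculating-circle Taylor estimate $\|\gamma(s)-C(s)\|\leq\tfrac{T}{6}|s|^3$) and covering the curve by arcs of length $(\tfrac{6\epsilon}{T})^{1/3}$, which is exactly your final two steps. Your added observation that $F_\epsilon=\gamma$ (so $V_\epsilon=V$) and that the $3$ in $3+T$ arises from the directional spread $\sup_v K(p,v)-\inf_v K(p,v)$, which vanishes in $d=1$, is a correct and useful explanation of why the constant sharpens, though the paper reaches the same conclusion without detouring through the general bound.
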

\begin{remark}
	The  curse of dimensionality comes in through the term $\epsilon^{-\frac{d}{2}}$, but we can decrease its impact from $\epsilon^{-\frac{d}{2}}$ to $\epsilon^{-\frac{d}{3}}$ using spheres instead of planes.
\end{remark}
\begin{proposition}\label{tight}
	The upper bounds of covering numbers  $N_{\mathcal{H}}(\epsilon,M)$ and $N_{\mathcal{S}}(\epsilon,M)$ are both tight {in terms of the rate in $\varepsilon$}.
\end{proposition}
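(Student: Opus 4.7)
The plan is to exhibit explicit manifolds for which the two upper bounds in Theorem~\ref{mainthm} are attained up to a multiplicative constant. Tightness for \eqref{hypercoveringnumber} and for \eqref{spherecoveringnumber} is demonstrated by different witnesses, each chosen to match the features of $M$ that the respective bound tracks.

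For \eqref{hypercoveringnumber}, I would take $M$ to be the round $d$-sphere of radius $r=1/K$ in $\RR^{d+1}$, so the maximum curvature equals $K$ and the volume is the fixed constant $V$. A Taylor expansion of $\exp_p$ shows that a geodesic ball $B(p,t)\subset M$ departs from the tangent hyperplane $T_pM$ by $t^2/(2r) + O(t^4) = Kt^2/2 + O(t^4)$. Hence the largest geodesic ball approximable by a single hyperplane within error $\epsilon$ has radius $\lesssim \sqrt{2\epsilon/K}$ and volume $\lesssim (2\epsilon/K)^{d/2}$. A packing argument, e.g.\ lifting each piece through the projection and using the bi-Lipschitz character of the projection onto $T_pM$ at small scales, yields the matching lower bound $N_\mathcal{H}(\epsilon,M)\gtrsim V(2\epsilon/K)^{-d/2}$.

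For \eqref{spherecoveringnumber}, I would build a witness in two parts. To realize the $V_\epsilon$-term, take a manifold that is globally $\epsilon$-spherical (so $V_\epsilon = V$) but not locally a sphere, with $T$ bounded below by a positive constant; a $C^3$ curve of non-constant curvature is the cleanest instance. A third-order Taylor expansion of $\exp_p$ shows that the best local osculating sphere matches $M$ to order $2$, leaving residual $\sim Tt^3/6$, so the largest geodesic ball approximable to within $\epsilon$ has radius $\sim (6\epsilon/(3+T))^{1/3}$ and volume $\sim (6\epsilon/(3+T))^{d/3}$. The same packing argument gives $\gtrsim V_\epsilon(6\epsilon/(3+T))^{-d/3}$ pieces in this region. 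To realize the additional $(V-V_\epsilon)$-term, attach a region of large directional variation of normal curvature (hence outside $F_\epsilon$); on such a region the best local sphere does essentially no better than a hyperplane, and the hyperplane analysis above contributes $\sim (V-V_\epsilon)(2\epsilon/K)^{-d/2}$ further pieces.

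The main difficulty is establishing the matching lower bounds rather than constructing the examples. The ``per-piece'' computations give only an upper bound on the volume each basis can cover; turning this into a lower bound on the number of pieces requires a non-overlap argument showing that the preimages under $\mathrm{Proj}_k$ of pairwise disjoint patches on $M$ occupy essentially disjoint volume on $B_k$, which follows from the bi-Lipschitz behaviour of projection to a tangent plane (or to an osculating sphere) at the scales $\sqrt{2\epsilon/K}$ and $(6\epsilon/(3+T))^{1/3}$ respectively. A second subtlety is that the additive splitting in \eqref{spherecoveringnumber} must cleanly decompose the witness manifold into its spherical and non-spherical parts without double-counting the interface; this can be handled by choosing the two regions separated by a macroscopic margin independent of $\epsilon$, so that the boundary contribution is $O(1)$ and absorbed in the hidden constant.
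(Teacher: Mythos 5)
Your proposal is correct in spirit and takes a genuinely different, more elaborate route than the paper. The paper's own proof is extremely terse: it exhibits two plane curves, $\gamma(t)=(t,t^2)$ for the $\mathcal{H}$-bound and $\gamma(t)=(t,t^3)$ for the $\mathcal{S}$-bound, chosen so that $\gamma''$ (respectively $\gamma'''$) is constant, which forces the tangent-line (respectively osculating-circle) residual to be exactly of leading order with no higher-order cancellation; the matching lower bound is then asserted rather than argued. In contrast, you use the round $d$-sphere of radius $1/K$ as the $\mathcal{H}$-witness, which has the advantage of working in every intrinsic dimension $d$ rather than only $d=1$, and you attempt a hybrid witness for the $\mathcal{S}$-bound that would realize \emph{both} additive terms in \eqref{spherecoveringnumber} simultaneously. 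The paper does not do this: its $\mathcal{S}$-witness is a curve, for which $F_\epsilon = M$ automatically (every one-dimensional manifold is an $\epsilon$-sphere), so $V_\epsilon = V$ and the second term vanishes identically. Thus the paper only shows each term is achievable in isolation, while you aim at the stronger statement that the two-term bound itself is tight. You also correctly flag that the real work is the lower bound via a packing/non-overlap argument, which the paper elides entirely. One small caution: your $\mathcal{H}$-witness (the round sphere) has $N_\mathcal{S}(\epsilon,M)=1$ since the sphere is itself in the dictionary, so it is a degenerate witness for \eqref{spherecoveringnumber} and you are right to use a separate manifold for the second bound; and the hybrid $\mathcal{S}$-witness requires $d\ge 2$ to make $F_\epsilon \neq M$ possible, so if you want to keep both constructions in the same dimension you cannot reuse the $d=1$ curve idea there. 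Neither issue is a gap, but the hybrid construction should be made explicit (e.g. gluing a piece of a surface with strongly unequal principal curvatures, such as a saddle, to a nearly umbilical cap across a neck kept at a scale independent of $\epsilon$) before the proposal would count as a complete proof.
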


The covering number depends on the geometry of the manifold, particularly the curvature of geodesics on the manifold and not the sectional curvature or Ricci curvature.  Although spheres have positive Gaussian/sectional curvature, they can be used as a dictionary to estimate manifolds with negative Gaussian/sectional curvature. For example when the manifold is a 2-dimensional surface, $V_\epsilon$ is determined by the difference of the two principal curvatures, not the Gaussian curvature or mean curvature.  When $d=1$, the absolute curvature determines the radius of the osculating circle and the sign of the curvature determines which side of the tangent line the center of the circle is on. This provides intuition for why spherelets works well for both positive and negative curvature spaces. 

The bounds in our main theorem are {\em tight}, implying that spherelets often require many fewer pieces than locally linear dictionaries to approximate $M$ to any fixed accuracy level $\epsilon$; particularly large gains occur when a non-negligible subset of $M$ is covered by the closure of points having not too large change in curvature of geodesics along different directions.  As each piece involves $O(D)$ unknown parameters, these gains in covering numbers should lead to real practical gains in statistical performance; indeed this is what we have observed in applications.

\section{Applications, Algorithms and Examples}\label{sec:experiments}
This section contains a variety of simulation studies and real data applications of SPCA and spherelets.
To measure performance in analyzing data, we focus on the mean squared error (MSE),  $\frac{1}{n}\sum_{i=1}^n \|X_i-\widehat{X}_i\|$, where $\widehat{X}_i$ is the fitted value of $X_i$ by a dimensionality reduction method. Most dimension reduction methods do not provide $\widehat{X}_i$ but instead replace $X_i$ with a lower-dimensional summary; such methods are not directly comparable to global or local SPCA.

\subsection{Simulation study of global SPCA}

We first verify the convergence rates in Theorem \ref{thm:SPCAerror} on simulated data. For a specific ambient and intrinsic dimension pair $(D,d)$, 
we sample $Y_1,\cdots,Y_n$ from a von Mises-Fisher distribution on $S^d$ with sample size $n$. Then we sample Gaussian noise $\epsilon_i\sim N(0,\sigma^2 \Id_D)$, and let $X_i = c_0+r_0V_0Y_i+\epsilon_i$, where $c_0\in\RR^{D}$, $r_0>0$ and $V_0\in \RR^{D\times(d+1)}$ with orthonormal columns. The $X_i$ are close to the sphere centered at $c_0$ with radius $r_0$ in linear subspace $V_0$. The true parameters are generated randomly. 

First we fix $\sigma^2 = \{0.1,0.01\}$ {(the results are similar for larger $\sigma^2$ so we present the plots for two different $\sigma^2$ to make the figure clearer)}. Let $\widehat{V}$, $\widehat{c}$ and $\widehat{r}$ be the estimated parameters by SPCA. Figure \ref{fig:SPCA_rates_n} shows the error rate with respect to sample size $n$ for two choices of $(D,d)$. The  $x$-axis is $\log$ sample size while the $y$-axis is $\log(\|V_0-\widehat{V}\|)$, $\log(\|c_0-\widehat{c}\|)$ and $\log(|r_0-\widehat{r}|)$. The rates for all three parameters are $n^{-1/2}$. In Theorem \ref{thm:SPCAerror}, when $\sigma^2$ is fixed, the upper bound of the error rate for $V$ is $n^{-1/2}$ up to a $\log n$ factor, which is asymptotically negligible compare to $\sqrt{n}$. While the rates for $c$ and $r$ are affected by the bias term $\mathrm{O}(\sigma^2)$, the rates are very close to $n^{-1/2}$ when $C_{v}\log n/n^{1/2}\gg C_b\sigma$, 
where $C_b$ and $C_v$ are the constants in Theorem \ref{thm:SPCAerror} for the bias and variance, respectively. 

There is a subtle difference between the rate of $c$ and $r$ because of the constant $C_b$ that controls the bias. For the center $c$, $C_b\sim 1/\lambda_{d+1}^2$ where $\lambda_{d+1}$ is the $(d+1)$-th eigenvalue of the covariance matrix of $Y$, which is supported on the $d$-dimensional sphere without noise. As a result, $\lambda_{d+1}$ 
is often much larger than $\sigma^2$; otherwise $Y$ will be concentrated around the equator of $S^d$ or $S^{d-1}$.  $\lambda_{d+1}/\sigma$ can be viewed as the ``signal-to-noise ratio'' (SNR). The constant $C_b$ in the bias term for $c$ is proportional to $1/\mathrm{SNR}^2$, which is usually very small; hence the rate for $c$ is observed as $n^{-1/2}$ in our experiments. For radius $r$, the constant $C_b\sim \sqrt{d+1}$, which is independent of the SNR and often larger than the constant for $c$. As a result, we may observe attenuation of the $n^{-1/2}$ rate for $r$ when $n$ is large enough so that $C_{v}\log n/n^{1/2}\lesssim C_b\sigma$. For more details of the constant $C_b$, see the proof of Theorem \ref{thm:SPCAerror} in the Supplementary Materials.
\begin{figure}[!h]
	\centering
	\includegraphics[width=0.49\textwidth,height=150pt]{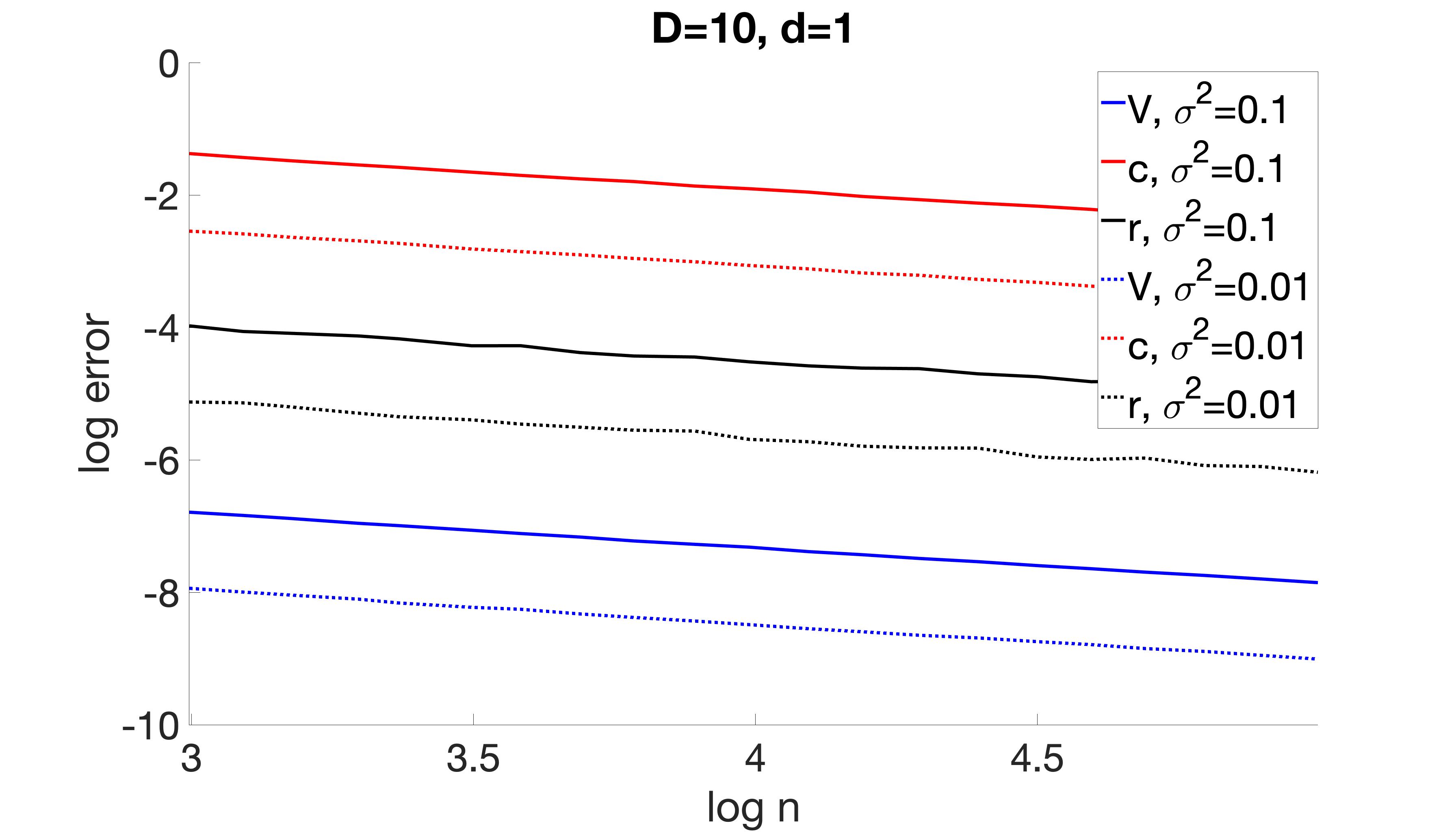}
	\includegraphics[width=0.49\textwidth,height=150pt]{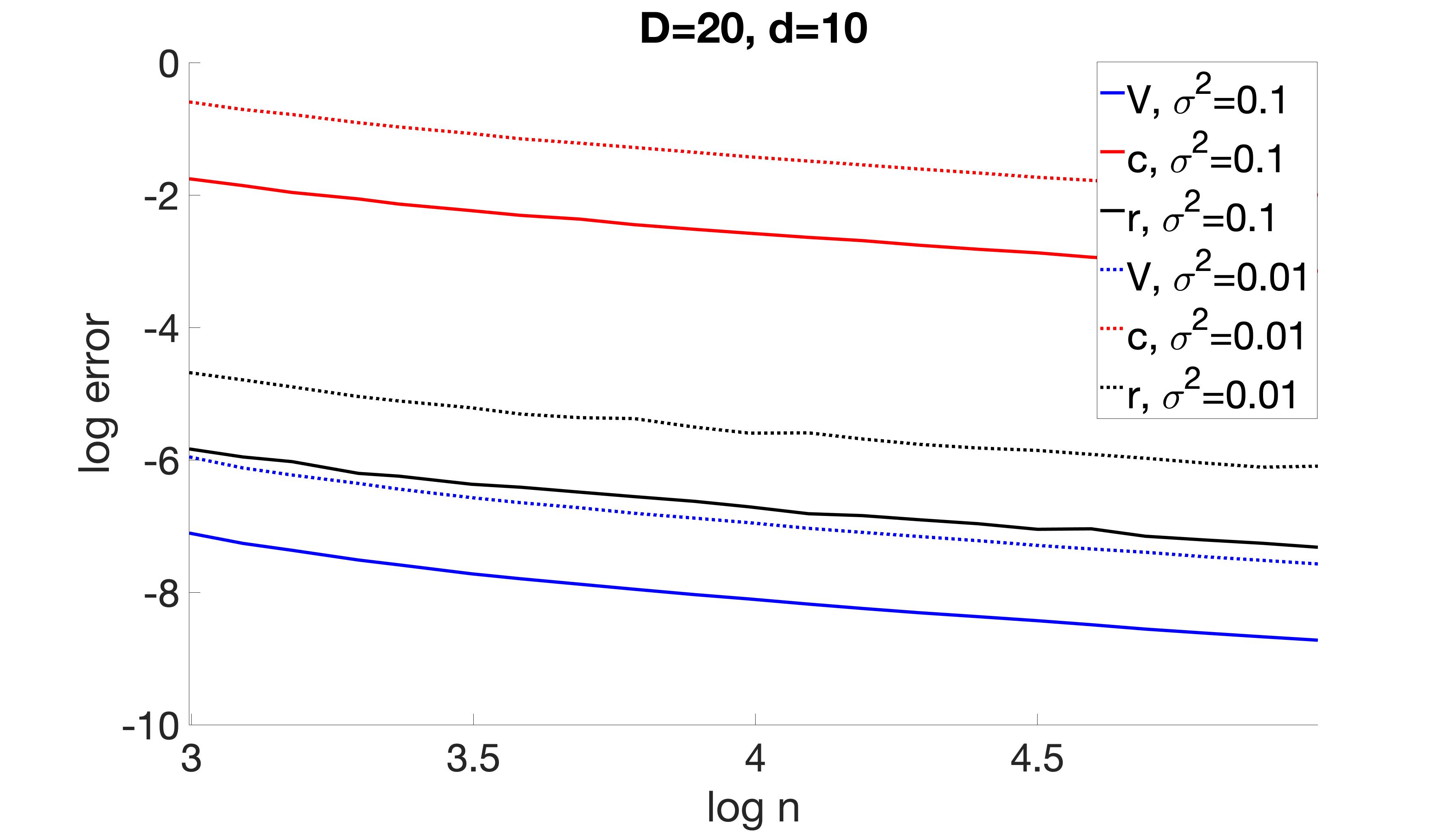}
	\caption{Error rates in estimating the sphere parameters $V$ (blue), $c$ (red), and $r$ (black) for $\sigma^2=0.1$ (solid lines) and $\sigma^2=0.01$ (dotted lines) as $n$ increases. The left panel is for a circle, and the right a 10-dimensional hyper-sphere.}
	\label{fig:SPCA_rates_n}
\end{figure}

Next we fix $n\in\{100,1000\}$ and vary $\sigma^2$. Figure \ref{fig:SPCA_rates_sigma2} shows that the error rates in estimating the sphere parameters as $\sigma^2$ varies are all $(\sigma^2)^{1/2}$. In Theorem \ref{thm:SPCAerror}, when $n$ is fixed, the upper bound of the error rate with respect to $\sigma$ is $\mathrm{O}\left(\sigma^2+\sigma\right)=\mathrm{O}\left(\sigma\right)$ since $\sigma$ dominates $\sigma^2$ when $\sigma\to 0$. 

\begin{figure}[!h]
	\centering
	\includegraphics[width=0.49\textwidth,height=150pt]{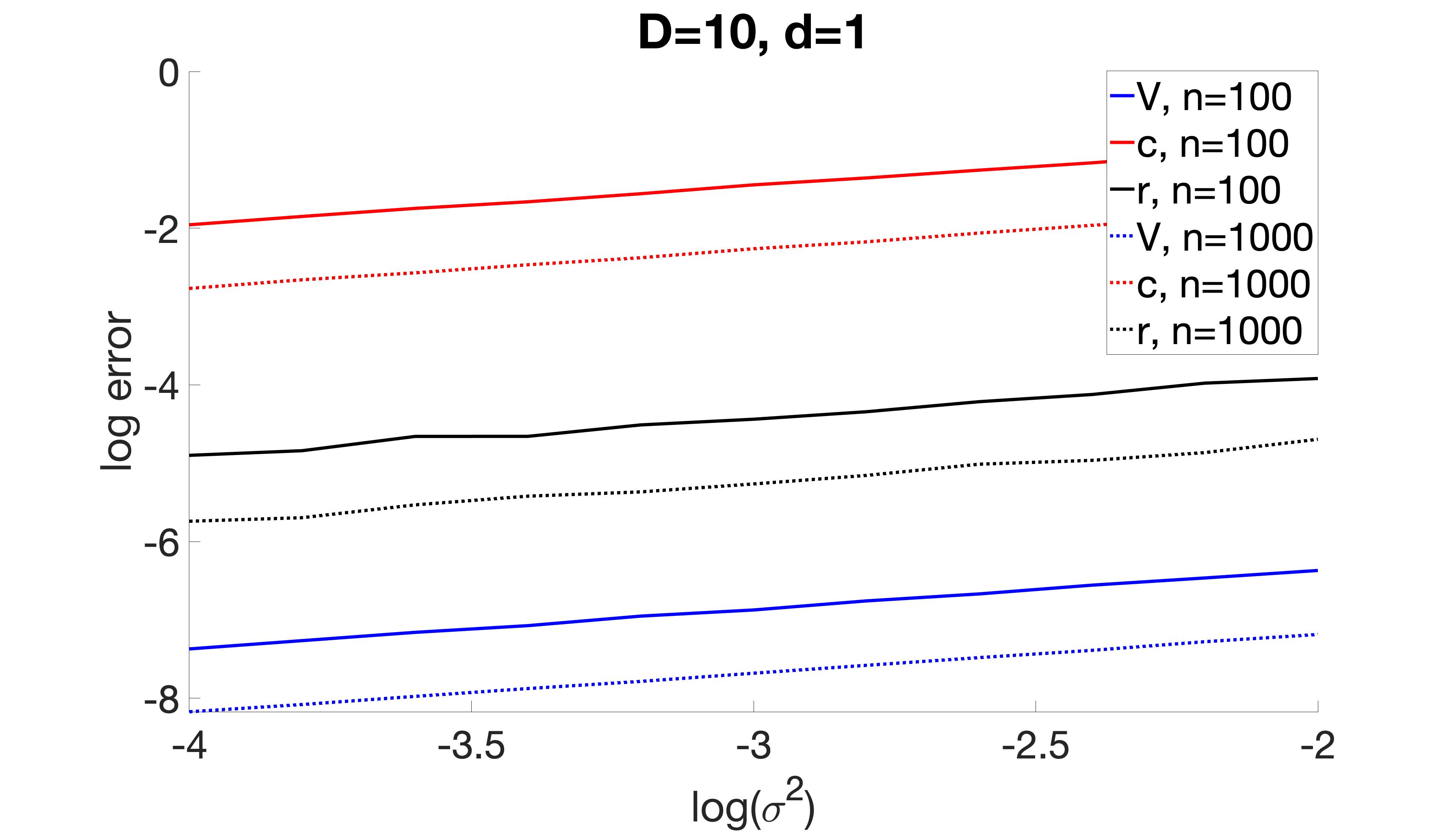}
	\includegraphics[width=0.49\textwidth,height=150pt]{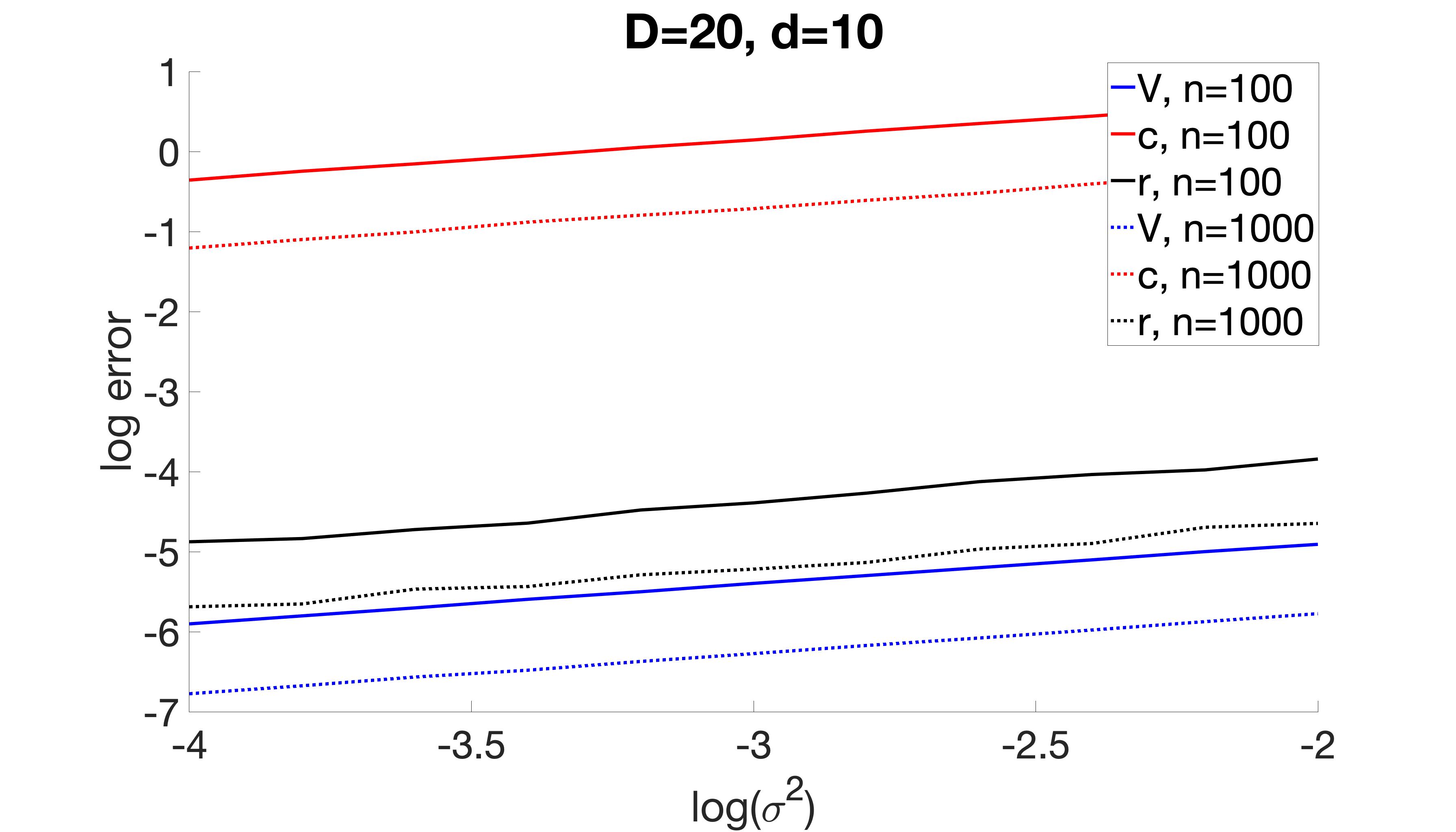}
	\caption{Error rates in estimating the sphere parameters $V$ (blue), $c$ (red), and $r$ (black) for $n=100$ (solid lines) and $n=1000$ (dotted lines) as $\sigma^2$ increases. The left panel is for a circle, and the right a 10-dimensional hyper-sphere.}
	\label{fig:SPCA_rates_sigma2}
\end{figure}

\subsection{Application to data visualization}

A common focus of dimensionality reduction algorithms is on data visualization.  To illustrate the use of SPCA for data visualization we consider an application to a banknote dataset.  The data consist of 1372 $400 \times 400$ pixel images of genuine and fake banknotes.  Based on these images, 4 features are extracted using a wavelet analysis and it is of interest to investigate differences between authentic and fake banknotes.  For data visualization, we choose $d=2$ for all algorithms, and compare SPCA with PCA, t-distributed stochastic neighbor embedding (tSNE, \cite{tSNE2008}), uniform manifold approximation and projection (UMAP, \cite{mcinnes2018umap}), locally linear embedding (LLE, \cite{lle2000}) and multidimensional scaling (MDS, \cite{kruskal1964multidimensional}). 

In Figure \ref{fig:Cluster_Banknote}
we plot the 2-dimensional embedding of tSNE, UMAP, LLE, MDS, PCA, and SPCA. For PCA we show the first two principal components.  For SPCA, we first project the data to the 2-dimensional sphere and then obtain the polar angle and azimuthal angle.

The results are shown in Figure \ref{fig:Cluster_Banknote}, which shows a clear separation between the genuine and fake banknotes for SPCA.  Interesting, (locally) linear methods including PCA, LLE and MDS fail to separate the two groups, at least based on only two components.  The popular tSNE and UMAP methods do well at separating genuine and forged banknotes, but in a highly complex way that shows many sub-clusters in the data.  These approaches are much more complex that SPCA, including computationally, and there has been concern in the literature that tSNE and UMAP may show artifactual structure in the data \citep{wattenberg2016use}.  

\begin{figure}[!h]
	\centering
	\includegraphics[width=\textwidth,height=150pt]{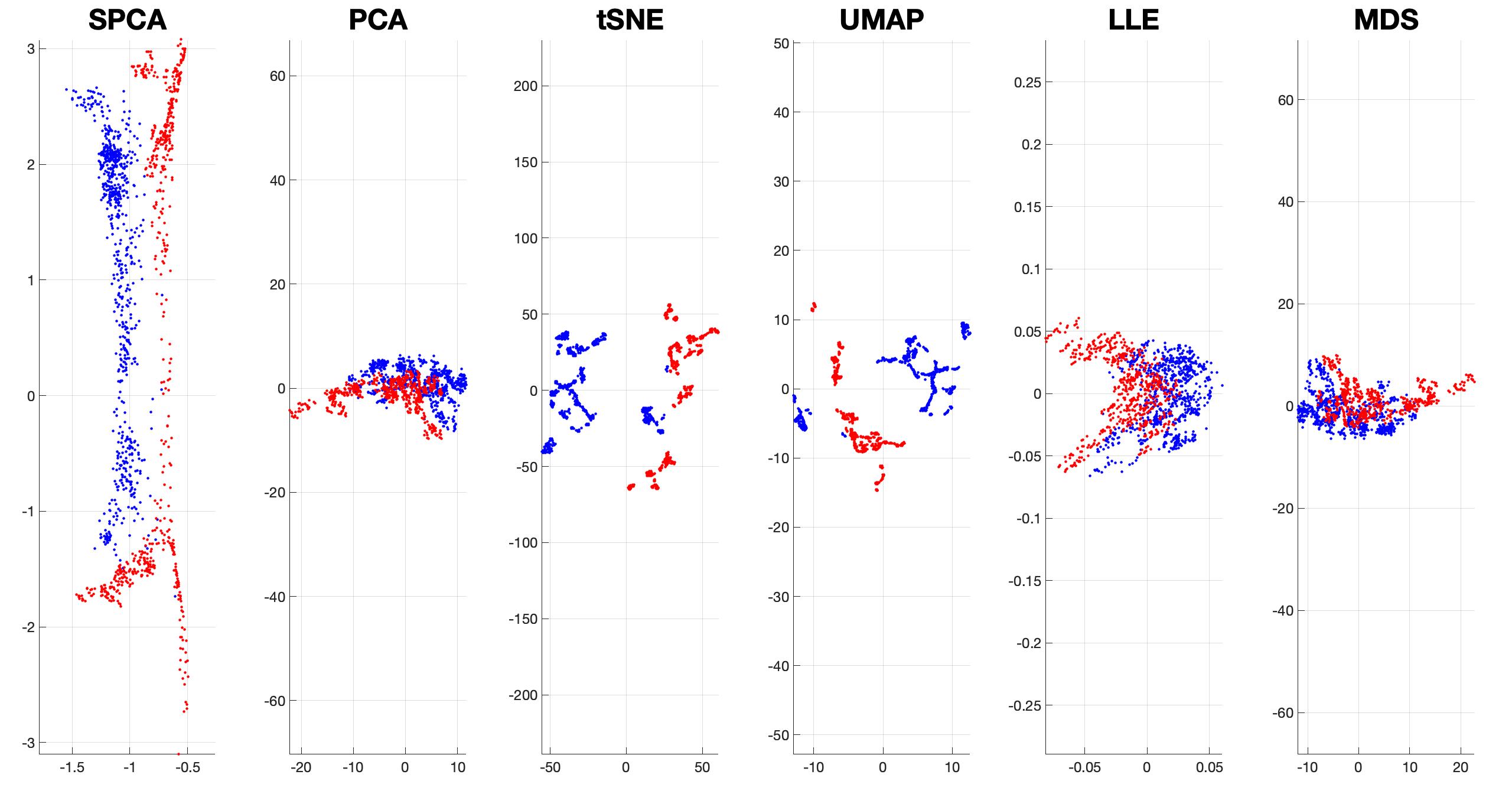}
	\caption{2-dimensional representation of Banknote data by six dimension reduction algorithms with two clusters: blue for authentic and red for forged}
	\label{fig:Cluster_Banknote}
\end{figure}


\subsection{Manifold approximation via spherelets} \label{manifoldapprox}
In manifold approximation we attempt to find an estimator of the unknown manifold $M$, say $\widehat M$. When the underlying manifold is complex, a single sphere is not enough, motivating local SPCA or spherelets.
As local SPCA provides an estimator of a submanifold $U\subset M$ in a neighborhood, we split $\RR^D$ into subsets 
$C_1,\ldots,C_k$
and apply local SPCA to estimate the manifold in each subset. 

Let $M_k=C_k\cap M$ be the sub-manifold of $M$ restricted to $C_k$. Let 
$\widehat{M}_k$ denote the estimate of $M_k$ based on applying SPCA to the data within $C_k$, and 
set $\widehat{M}=\bigcup_{k=1}^K \widehat{M}_k$. The map which projects a data point $x$ to the estimated manifold $\widehat{M}$ is denoted by $\widehat{\mathrm{Proj}}: \mathbb{R}^D\rightarrow \widehat{M}$. Algorithm \ref{MSEalg} describes the calculation of $\widehat{\mathrm{Proj}}$ and $\widehat{M}$ given a partition of $\RR^D$.

\begin{algorithm}
	\SetKwData{Left}{left}\SetKwData{This}{this}\SetKwData{Up}{up}
	\SetKwFunction{Union}{Union}\SetKwFunction{FindCompress}{FindCompress}
	\SetKwInOut{Input}{input}\SetKwInOut{Output}{output}
	\Input{Data $X_1,\cdots,X_n$; intrinsic dimension $d$; Partition $\{C_k\}_{k=1}^K$}
	\Output{
		The Estimated manifold $\widehat{M}$ of $M$ and the projection map $\widehat{\mathrm{Proj.}}$
	}
	\BlankLine
	\For{$k=1:K$}{
		\emph{Define $X_{[k]}=X \cap C_k$}\;
		\vskip2pt
		\emph{Calculate $\widehat{V}_k,\widehat{c}_k,\widehat{r}_k,$ by \eqref{eqn:SPCAsolution}}\;
		\vskip2pt
		\emph{Calculate $\widehat{\mathrm{Proj}}_k(x)=\widehat{c}_k+\frac{\widehat{r}_k}{\|\widehat{V}_k \widehat{V}_k^\top(x-\widehat{c}_k)\|}(x-\widehat{c}_k)$}\;
		\vskip2pt
		\emph{Calculate $\widehat{M}_k=S_{\widehat{V}_k}(\widehat{c}_k,\widehat{r}_k)\cap C_k $}\;
	}
	\emph{Calculate $\widehat{\mathrm{Proj}}(x)=\sum_{k=1}^K {\bf1}_{\{x\in C_k\}}\widehat{\mathrm{Proj}}_k(x)$, and $\widehat{M}=\bigcup_{k=1}^K \widehat{M}_k$}.
	\caption{Spherelets algorithm to estimate the manifold and projection map by applying local SPCA.}
	\label{MSEalg}
\end{algorithm}

We apply spherelets to multiple examples.  The first two (Euler spiral, cylinder) are toy examples using 
knowledge of the manifold to choose the partition. The subsequent examples (Euler spiral, economics, user knowledge) use a multiscale scheme to choose $C_1,\ldots,C_k$, and compare with local PCA.

{\bf Euler Spiral}. The Euler spiral,
$\gamma(s)=\left[ \int_{0}^s \cos(t^2)\mathrm{dt}, \int_{0}^s \sin(t^2)\mathrm{dt}\right], s\in [0,2],$
is a common example in the manifold learning literature, having curvature linear with respect to the arc length $s$. We generate $s_i$ uniformly from $[0,2]$ and then add Gaussian noise to $\gamma(s_i)$.  We uniformly partition $s\in[0,2]$ to obtain 
$C_1,\ldots,C_k$. The first panel in Figure \ref{fig:rate_Spiral} shows the convergence rate is $n^{-0.57}$, which is better than the $n^{-0.4}$ rate in Theorem \ref{thm:mfderror}. 
The remaining panels show the projected data with different sample size and different number of partitions{, along with the true spiral}. When the number of partitions is $3$, the approximation performance is excellent.

\begin{figure}[!h]
	\centering
	\includegraphics[height = 120pt,width=0.33\textwidth]{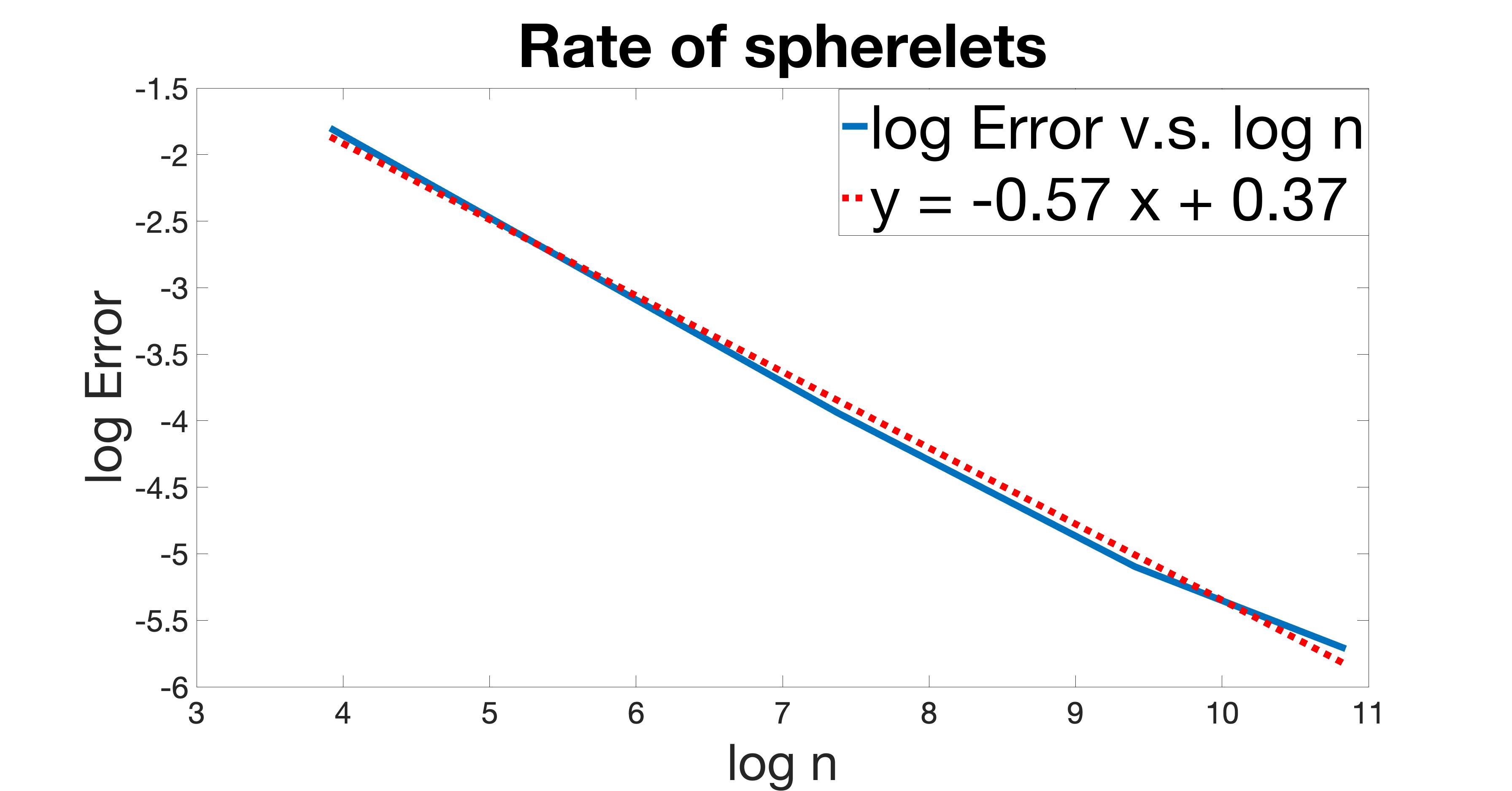}
	\includegraphics[height = 120pt,width=0.3\textwidth]{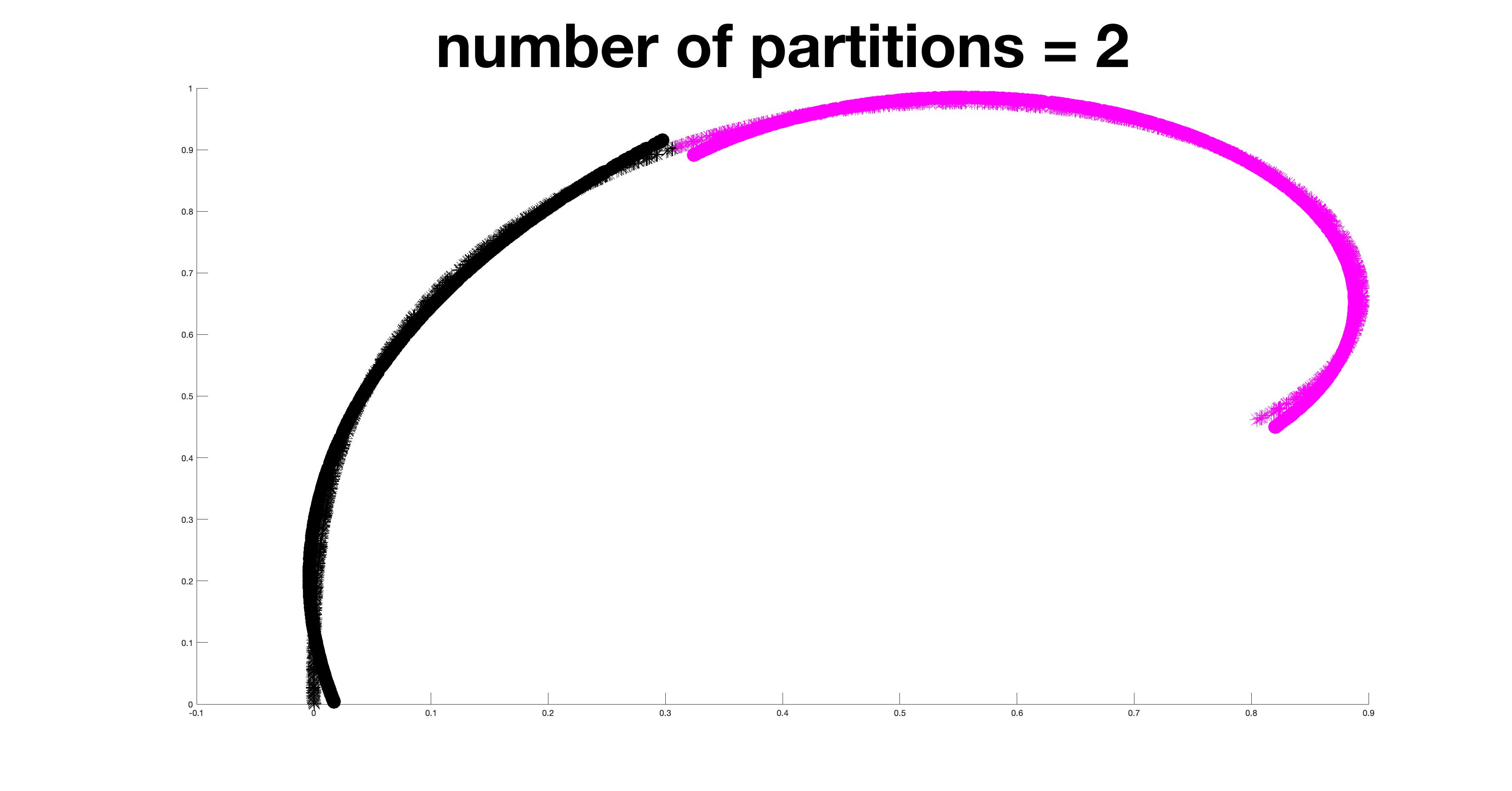}
	\includegraphics[height = 120pt,width=0.33\textwidth]{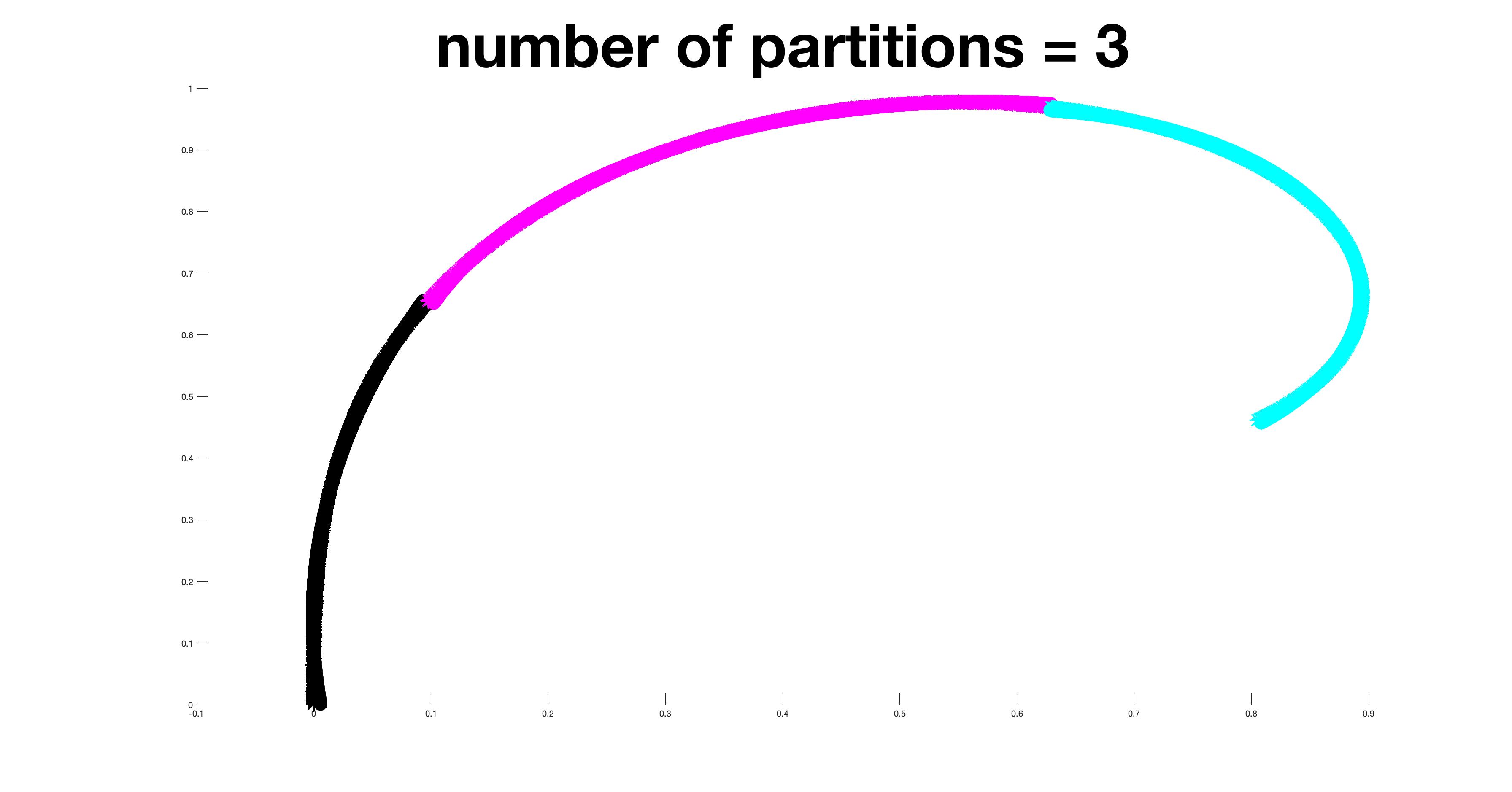}
	\caption{Spherelets on Euler spiral, left: convergence rate of MSE w.r.t. sample size; mid: projected samples to the estimated manifold with 2 circles; right: projected samples to the estimated manifold with 3 circles; colors
		correspond to different local neighborhoods.}
	\label{fig:rate_Spiral}
\end{figure}

{\bf Cylinder}. The cylinder
$\{(x=\cos\theta,y=\sin\theta,z):\theta\in[0,2\pi],z\in[0,1]\}$ is a surface with zero Gaussian curvature. The principal curvatures are $0$ and $1$. We independently sample $\theta_i$ and $z_i$ uniformly from $[0,\pi]$ and $[0,1]$, respectively. $C_1,\ldots,C_k$ are obtained by uniformly partitioning $z\in[0,1]$. The first panel in Figure \ref{fig:rate_cylinder} shows the convergence rate is $n^{-0.35}$, slightly better than the $n^{-1/3}$ rate in Theorem \ref{thm:mfderror}. The last two panels show how spherelets approximate the cylinder, where the number of partitions is $2$ and $3$, respectively.

\begin{figure}[!h]
	\centering
	\includegraphics[height = 120pt,width=0.33\textwidth]{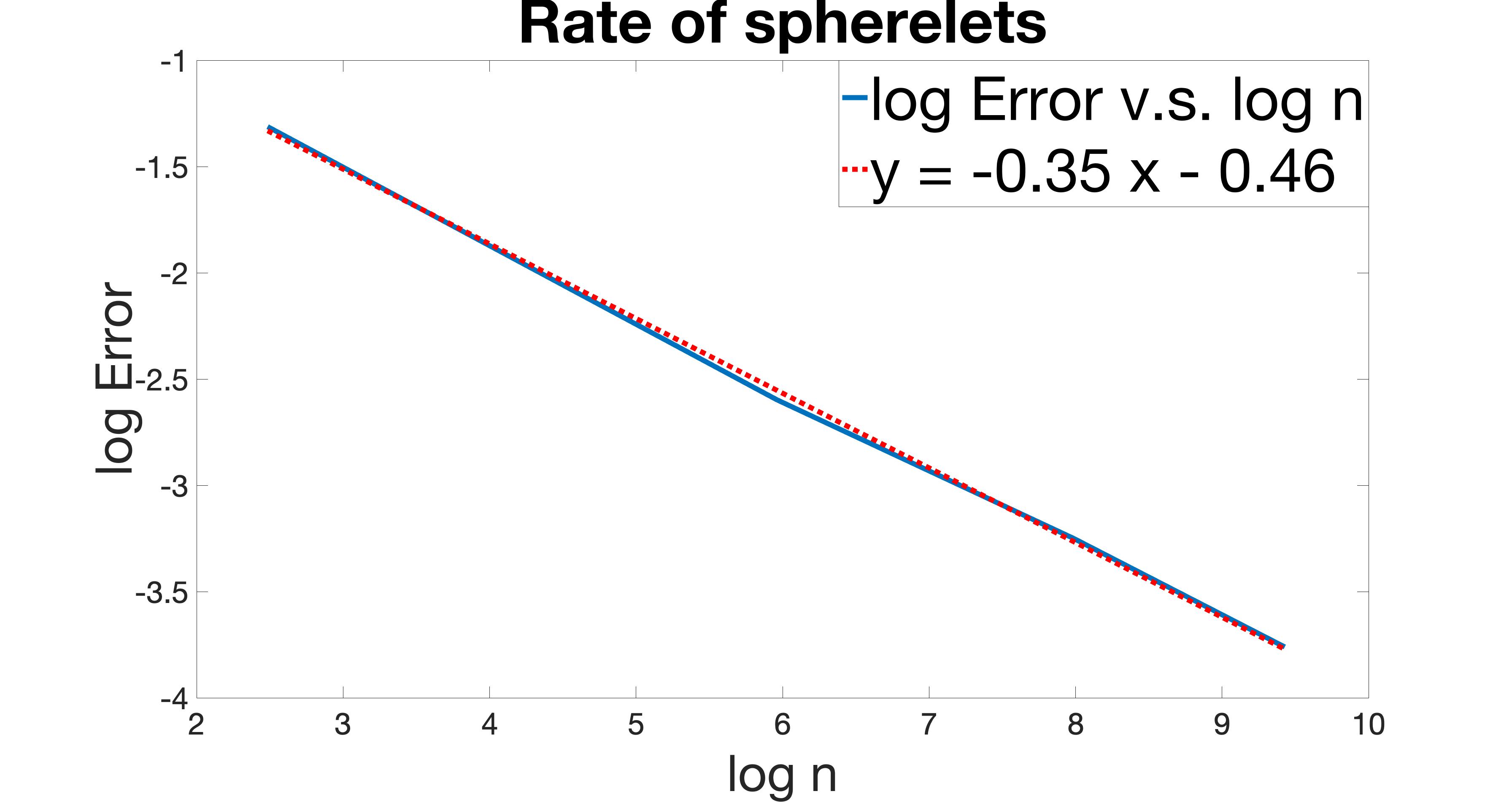}
	\includegraphics[height = 120pt,width=0.3\textwidth]{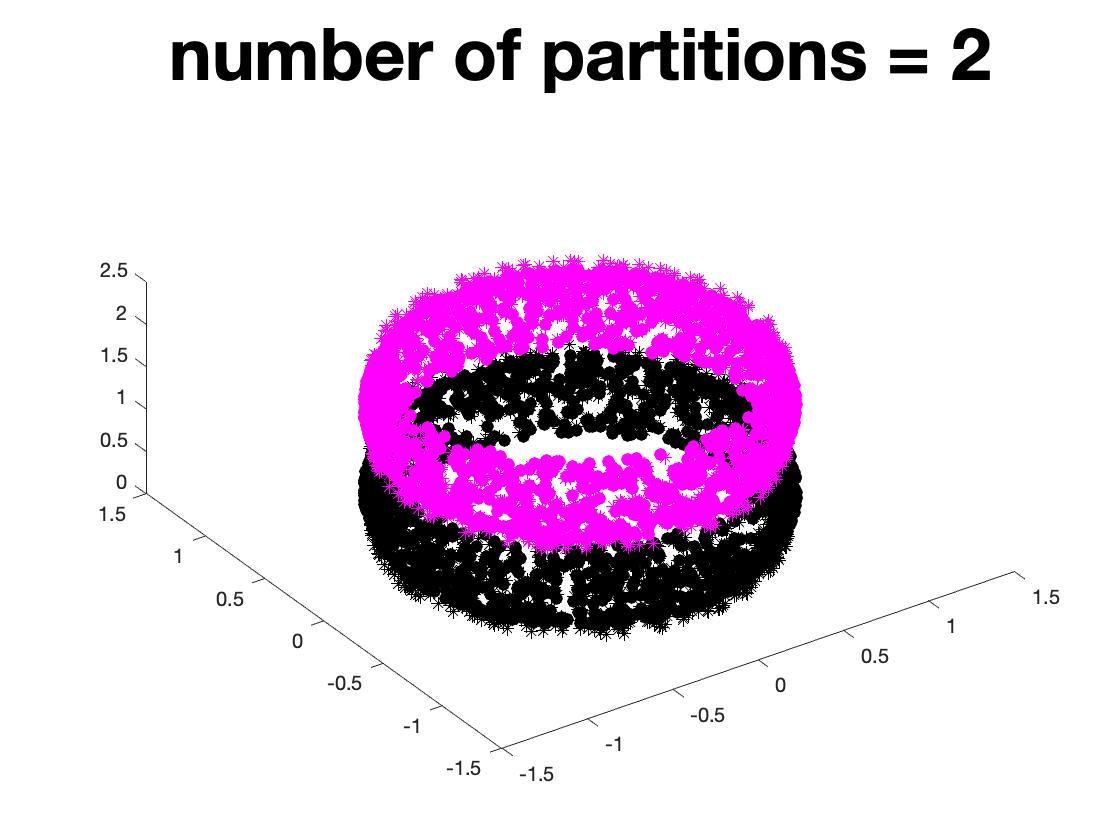}
	\includegraphics[height = 120pt,width=0.33\textwidth]{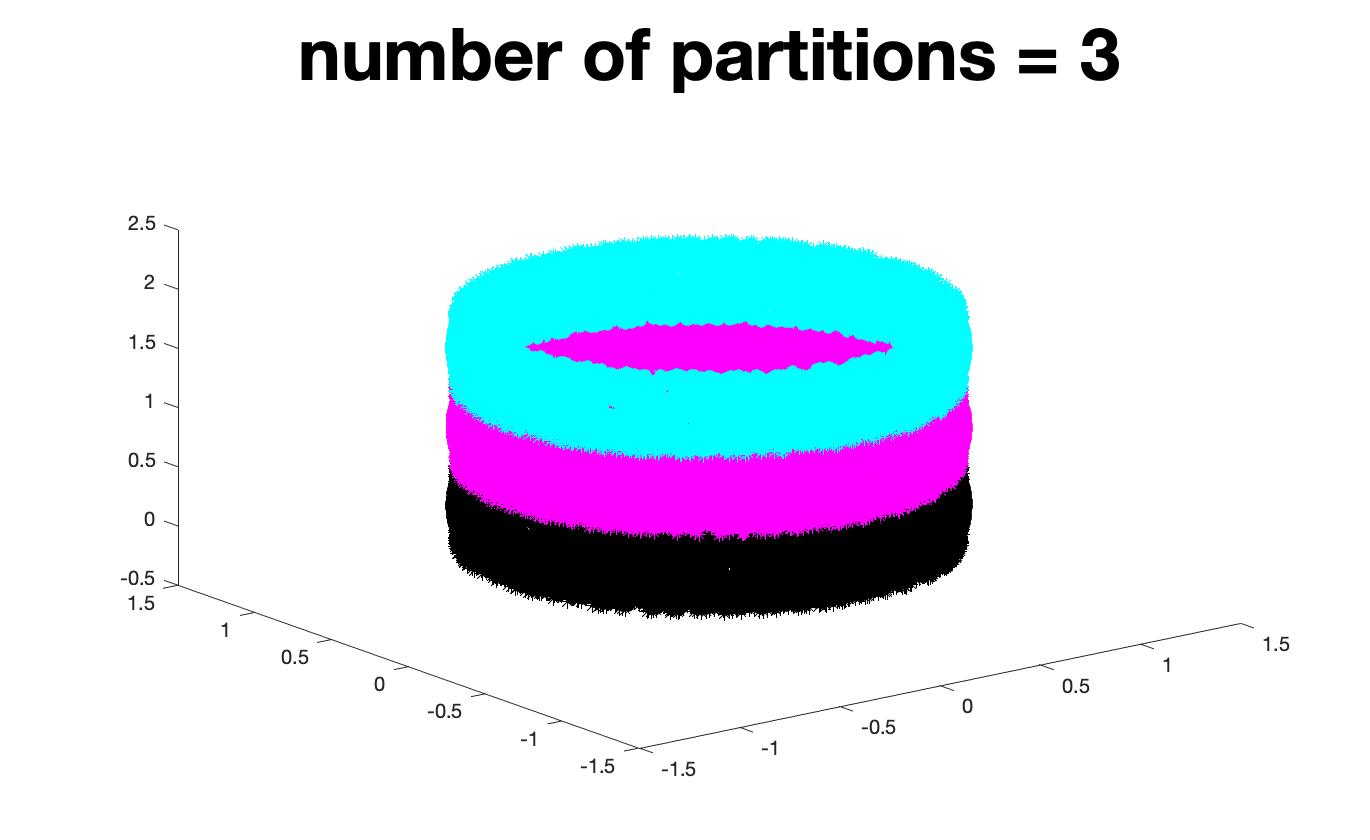}
	\caption{Spherelets on cylinder, left: convergence rate of MSE w.r.t. sample size; mid: projected samples to the estimated manifold with 2 spheres; right: projected samples to the estimated manifold with 3 spheres; colors represent different local neighborhoods.}
	\label{fig:rate_cylinder}
\end{figure}

\paragraph{Estimating the partition}

Next, we study the relation between number of partitions and MSE and compare with PCA. There are many existing partitioning algorithms for subdividing the sample space into local neighborhoods. Popular algorithms, such as cover trees, iterated PCA and METIS, have a multi-scale structure, repeatedly partitioning $\RR^D$ until a stopping condition is achieved. 
For simplicity, for local SPCA and PCA, we consider a multi-scale partitioning scheme which iteratively splits the sample space based on the first principal component score until a predefined bound $\epsilon$ on MSE is met for each of the partition sets or the sample size $n_k$ no longer exceeds a minimal value $n_0$. 
If $\mathrm{MSE}_k>\epsilon$ and $n_k>n_0$, we calculate $\mathrm{PC}_1=(X_{[k]}-\mu_k) v_{1,k}$, where $\mu_k=\bar{X}_{[k]}$ and $v_{1,k}$ is the first eigenvector of the covariance matrix of $X_{[k]}$. Next we split $C_k$ into two sub-partitions $C_{k,1}$ and $C_{k,2}$ based on the sign of $\mathrm{PC}_1$, i.e, $i^{th}$ sample of $X_{[k]}$ is assigned to $C_{k,1}$ if $\mathrm{PC}_{1,i}>0$ and to $C_{k,2}$ otherwise. We estimate the intrinsic dimension $d$ as corresponding to the elbow point in the dimension v.s. MSE plot for PCA or SPCA, following common practice in PCA. 
Estimating the intrinsic dimension $d$ is known to be a difficult problem for algorithms lacking 
fitted values $\widehat{X}_i$ for $X_i$; see \cite{levina2005maximum, lin2008riemannian, little2009estimation, facco2017estimating}.
\begin{figure}[!h]
	
	\centering
	\includegraphics[width=0.8\textwidth, height=150pt]{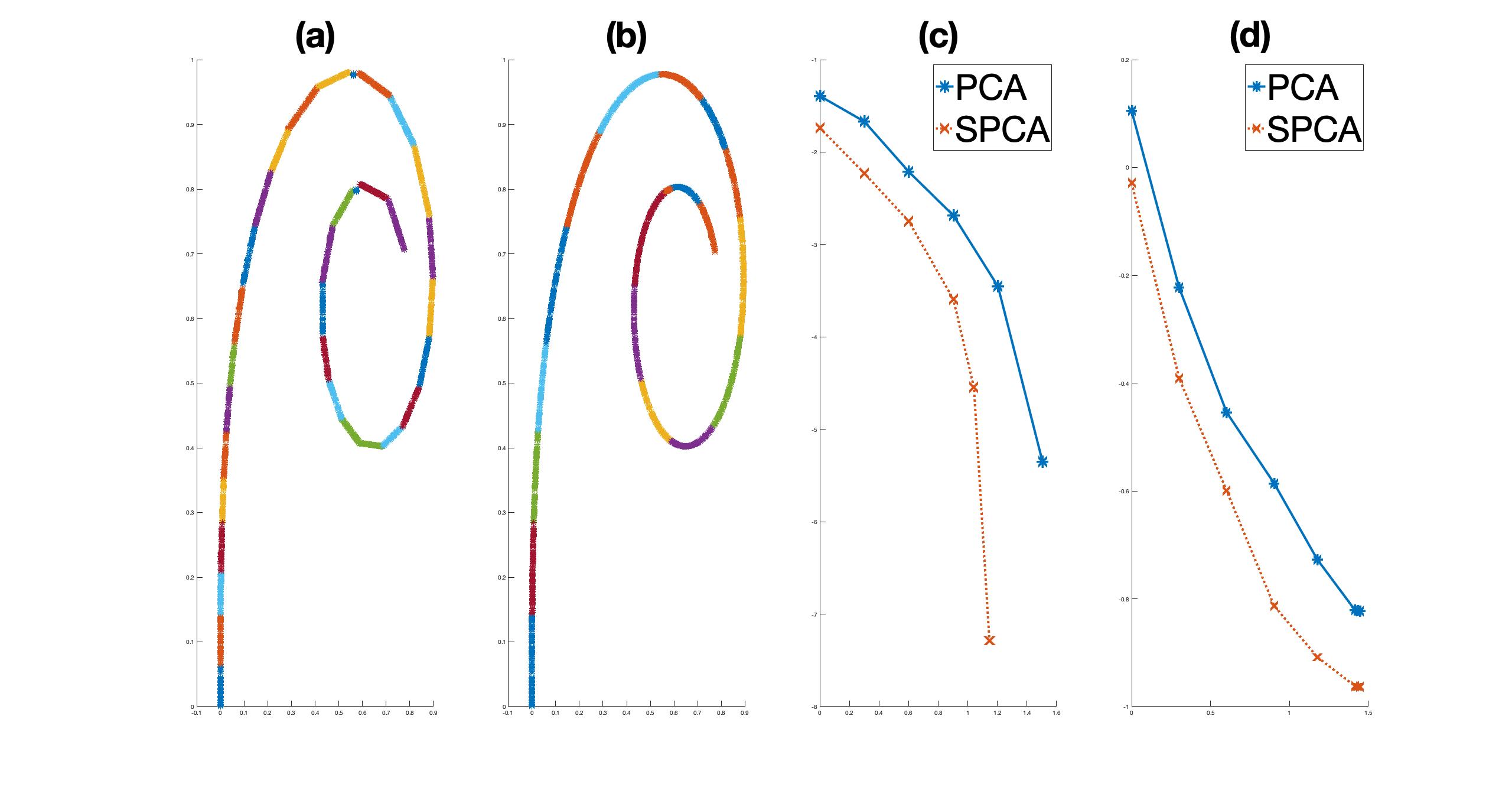}
	\caption{Projection of Euler spiral data by local PCA (a) and local SPCA (b) colored by partition; plot of log(MSE) vs. log(number of partitions) for Euler spiral (c) and Economics data (d).}
	\label{fig:spherelets_MSE}
\end{figure}

{\bf Euler Spiral}. We generate $2500$ training and $2500$ test samples from the Euler spiral with Gaussian noise with variance $0.01$. Figure \ref{fig:spherelets_MSE}(a) and (b) show the projected test dataset with different partitions described in different colors. It is clear that there are fewer pieces of circles than lines and the estimated $\widehat{M}$ is smoother in the second panel, reflecting better approximation by SPCA. Figure \ref{fig:spherelets_MSE}(c) shows the comparative performance of local PCA and SPCA with respect to log of number of partitions vs log of MSE. Clearly SPCA has much better performance than PCA, as it requires only 14 partitions to achieve an MSE of about $10^{-7}$, while PCA requires 120 partitions to achieve a similar error. 

{\bf Economics}. As introduced in Section \ref{sec:intro}, the Economics dataset has $D=5$ attributes and $576$ samples. We choose $460$ samples randomly as the training set and the remaining $116$ samples as the test set for cross validation. We choose $d=1$ as the first principal component explains $99.7\%$ of the variance. Figure \ref{fig:spherelets_MSE}(d) shows that SPCA has much better performance than PCA, as it requires only 8 partitions to achieve an MSE of $0.81$, while PCA requires 26 partitions to achieve a similar error.



{\bf User knowledge}. These data were collected to assess students’ knowledge of Electrical DC Machines \citep{kahraman2013development}. There were $n=258$ students who were studied to obtain $D=5$ attributes including study time for goal object materials, repetition number for goal object materials, study time for objects related to the goal object, exam performance for related objects and exam performance for the goal object. We choose $206$ samples randomly as the training set and the remaining $52$ samples as the test set for cross validation. We chose $d=4$ because we did not observe any sudden drop (also known as elbow point) in the MSE plot in Figure \ref{fig:spherelets_MSE_User} left panel. Figure  \ref{fig:spherelets_MSE_User} right panel compares performance of PCA and SPCA. Overfitting can occur as the number of partitions becomes too large, forcing the sample size per partition to be too low.  The optimal number of partitions for SPCA is $8$, which is significantly lower than the optimal number for PCA.  In addition, the MSE for spherelets with $8$ partitions is $0.25$, which is significantly lower than the minimal value obtained by local PCA.
\begin{figure}
	
	\centering
	\includegraphics[width=0.8\textwidth, height=120pt]{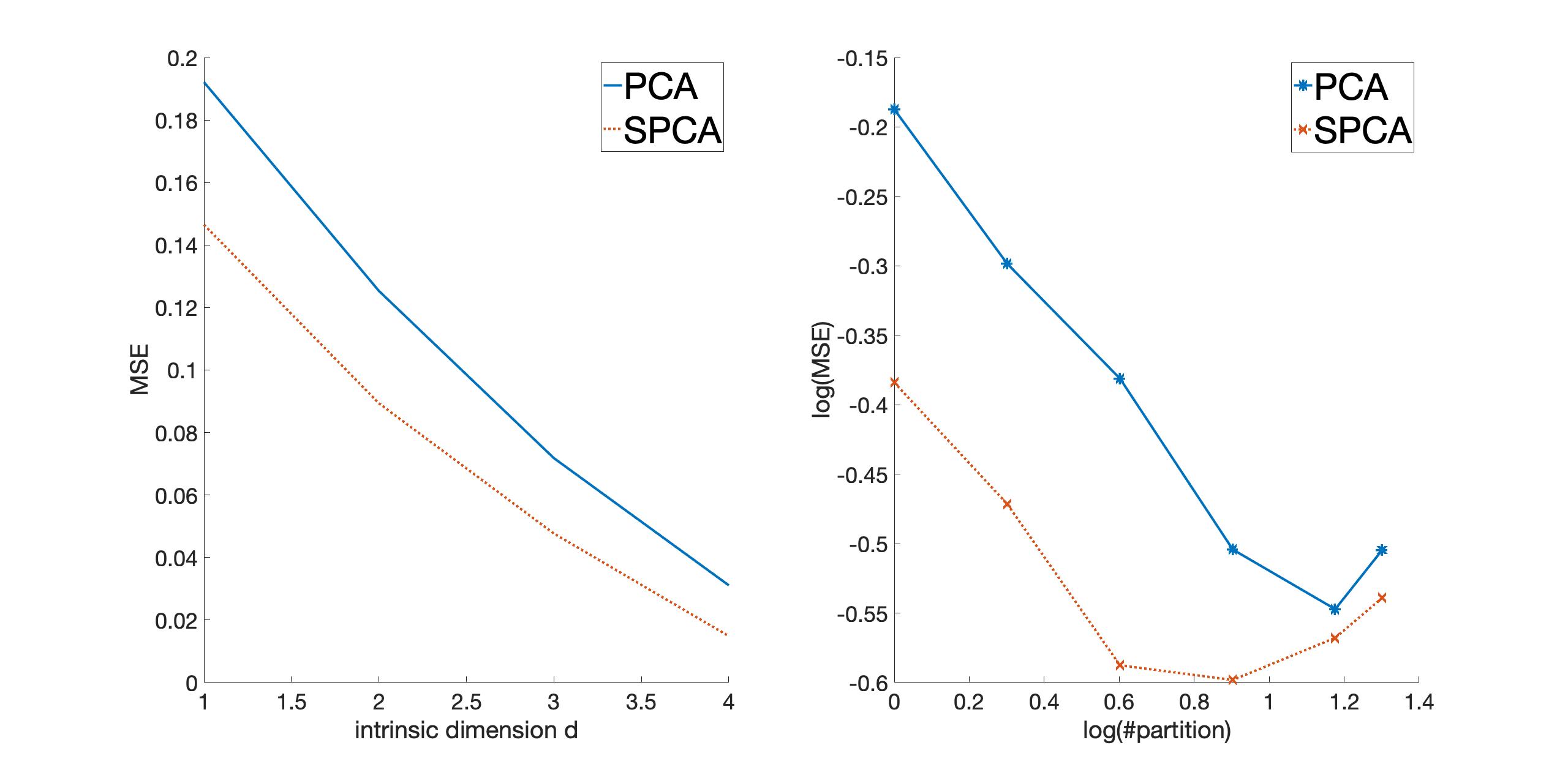}
	
	\caption{Left: MSE of local PCA and spherelets for different dimension; right: log(MSE) vs. log(number of partitions) for the user knowledge dataset}
	\label{fig:spherelets_MSE_User}
\end{figure}

The above examples show that spherelets has smaller MSE than local PCA given the same number of partitions. Equivalently, given a fixed error $\epsilon$, the number of spheres needed to approximate the manifold is smaller than that of hyperplanes. This coincides with the statement of Theorem \ref{mainthm}. When training sample size is small to moderate, there will be limited data available per piece and local PCA will have high error when the number of pieces is sufficiently large to obtain an accurate approximation to the manifold.

\section{Discussion}

There are several natural next directions building on the spherelets approach. The current version of spherelets is not constrained to be connected, so that the estimate $\widehat{M}$ of the manifold $M$ will in general be disconnected.  We view this as an advantage in many applications, because it avoids restricting consideration to manifolds that have only one connected component and instead accommodates true disconnectedness.  Nevertheless, in certain applications it is very useful to obtain a connected estimate; for example, when we have prior knowledge that the true manifold is connected and want to use this knowledge to improve statistical efficiency and produce a more visually appealing and realistic estimate. A typical case is in imaging when $D$ is 2 or 3 and $d$ is 1 or 2 and we are trying to estimate a known object from noisy data.  
Possibilities for incorporating connectedness constraints include (a) producing an initial $\widehat{M}$ using spherelets and then closing the gaps through linear interpolation; and (b) incorporating a continuity constraint directly into the objective function, to obtain essentially a type of higher dimensional analogue of splines. 


An additional direction is improving the flexibility of the basis by further broadening the dictionary beyond simply pieces of spheres.  Although one of the main advantages of spherelets is that we maintain much of the simplicity and computational tractability of locally linear bases, it is nonetheless intriguing to include additional flexibility in an attempt to obtain more concise representations of the data with fewer pieces.  Possibilities we are starting to consider include the use of quadratic forms to obtain a higher order local approximation to the manifold and extending spheres to ellipses.  In considering such extensions, there are major statistical and computational hurdles; the statistical challenge is maintaining parsimony, while the computational one is to obtain a simple and scalable algorithm.  As a good compromise to clear both of these hurdles, one possibility is to start with spherelets and then perturb initial sphere estimates (e.g., to produce an ellipse) to better fit the data.

Another important direction is to study the optimal partitioning for spherelets. Existing partitioning algorithms are locally linear, mainly designed for local PCA. A spherical partitioning algorithm needs to be developed to improve spherelets. With such partitioning, it might be possible to verify the covering number upper bound numerically.

Finally, there is substantial interest in scaling up to very large $D$ cases; the current algorithm will face problems in this regard similar to issues faced in applying usual PCA to high-dimensional data.  To scale up spherelets, one can potentially leverage on scalable extensions of PCA, such as sparse PCA (\cite{sparsePCA2006}, \cite{sparsePCA2009}).  The availability of a very simple closed form solution to spherical PCA makes such extensions conceptually straightforward, but it remains to implement such approaches in practice and carefully consider appropriate asymptotic theory.   In terms of theory, it is interesting to consider optimal rates of simultaneously estimating $M$ and the density of the data on (or close) to $M$, including in cases in which $D$ is large and potentially increasing with sample size.  

\section*{Acknowledgement}
DL and DD were supported by United States Office of Naval Research, N00014-14-1-0245 and N00014-16-1-2147; United States National Institutes of Health, 5R01ES027498-02. 

\section{Appendix}
\subsection{Economics data fit}
As the Economics data have clear evidence in favor of $d=1$, we also consider applying principal curves in addition to local PCA and local SPCA (spherelets).  We present the data and fitted values for representative pairs of variables in Figure \ref{fig:economics_all}. We find that principal curves over-smooths the data and does not have competitive performance relative to the local PCA-based methods.  In addition, spherelets clearly have the best performance for all pairs of features, with more gain when the data exhibit more curvature.
\begin{figure}[!h]
	\begin{center}
		\includegraphics[width=1\textwidth]{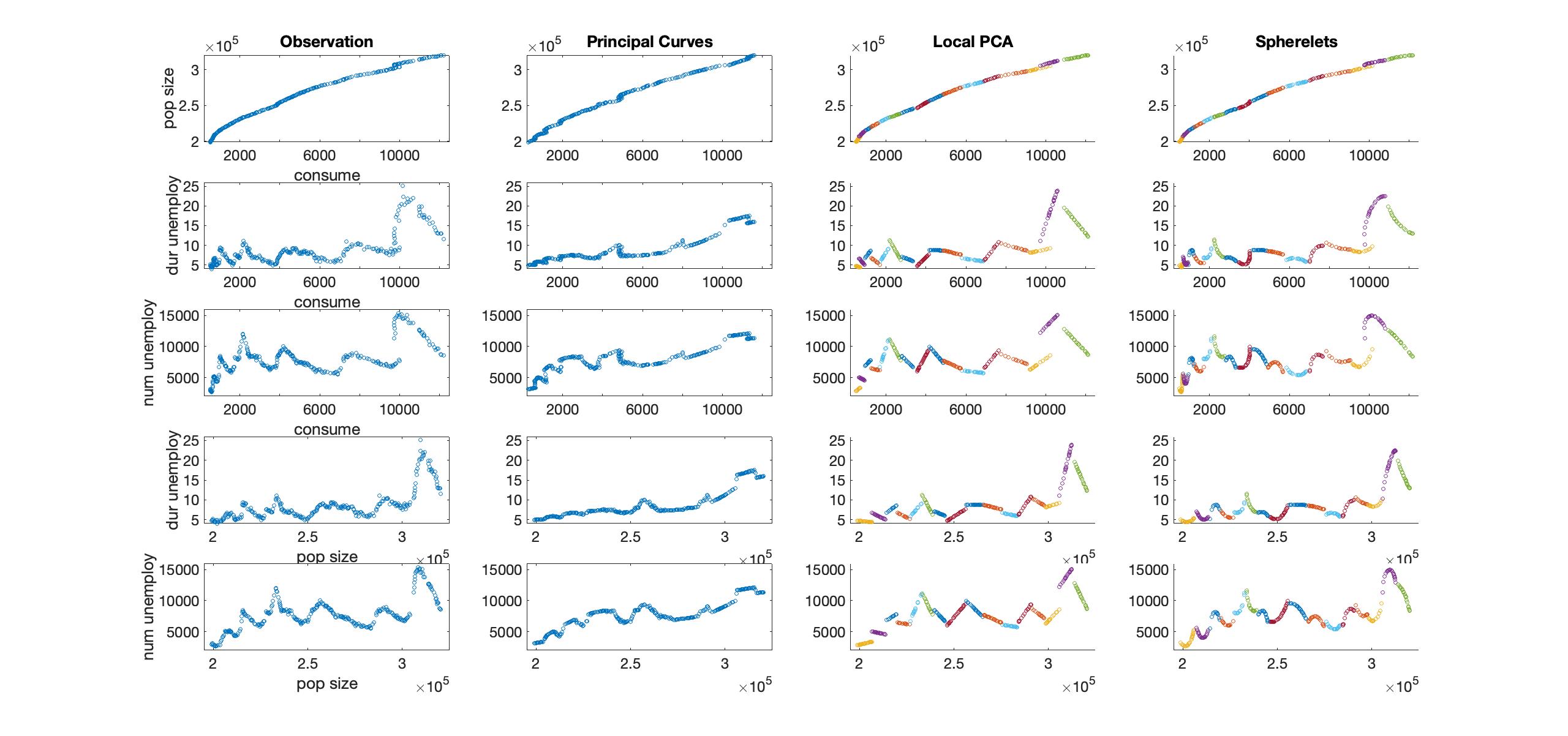}
		\caption{Illustration of results based on fitting principal curves, local PCA and local SPCA (spherelets) to the Economics data from Figure \ref{fig:economics}.}
		\label{fig:economics_all}
	\end{center}
\end{figure}

\subsection{SPCA solution}
In this section we prove the SPCA solution shown in  Section \ref{sec:SPCA}.
\begin{proof}[Proof of Lemma \ref{sproj}]
	Let $\Phi_{V,c}$ be the orthogonal projection to the affine subspace $c+V$; that is, $\Phi_{V,c}(x)=c+VV^\top (x-c)$. Then observe that $x-\Phi_{V,c}(x)\perp \Phi_{V,c}(x)-y$, $\forall y\in S_{V}(c,r)$, so 
	$$\|x-y\|^2=\|x-\Phi_{V,c}(x)+\Phi_{V,c}(x)-y\|^2=\|x-\Phi_{V,c}(x)\|^2+\|\Phi_{V,c}(x)-y\|^2.$$
	That is, the optimization problem $\underset{y\in S_V(c,r)}{\argmin}\|x-y\|^2$ is equivalent to $\underset{{y\in S_V(c,r)}}{\argmin}\|\Phi_{V,c}(x)-y\|^2$. Since the second problem only involves the affine subspace $c+V$, we can translate it to the following problem:
	$$\underset{y\in S(c,r)\subset\RR^{d+1}}{\argmin}\|x-y\|^2,$$
	where $x$ is any point in $\RR^{d+1}$ and $S(c,r)=\{y\in \RR^{d+1}:\  \|y-c\|=r\}$. So we only need to prove
	$$\Psi_{V,c}(x)\coloneqq \underset{y\in S(c,r)}{\argmin}\|x-y\|^2=c+\frac{r}{\|x-c\|}(x-c).$$
	On one hand,
	\begin{align*}
		\|x-\Psi_{V,c}(x)\|^2&=\left\|x-c-\frac{r}{\|x-c\|}(x-c)\right\|^2=\left\|\left(1-\frac{r}{\|x-c\|}\right)\left(x-c\right)\right\|^2\\
		&=\left(1-\frac{r}{\|x-c\|}\right)^2\|x-c\|^2=\left(\|x-c\|-r\right)^2.
	\end{align*}
	On the other hand, for any $y\in S(c,r)$,
	\begin{align*}
		\|x-y\|^2&=\|x-c+c-y\|^2=\|x-c\|^2+\|c-y\|^2-2(x-c)^\top(y-c)\\
		&=\|x-c\|^2+r^2-2(x-c)^\top (y-c)\geq \|x-c\|^2+r^2-2\|x-c\|\|y-c\|\\
		&=\|x-c\|^2+r^2-2r\|x-c\|=(\|x-c\|-r)^2=\|x-\Psi_{V,c}(x)\|^2. 
	\end{align*}
\end{proof}

\begin{proof}[Proof of Theorem \ref{SPCA}]
	By simple calculation, we can show that  
	$$\widehat{r}^2\coloneqq\underset{r}{\argmin }\ \mathscr{L}(c,r)=\frac{1}{n}\sum_{i=1}^n\|Y_i-c\|^2.$$
	Hence, if we adopt the same notation: 
	$$\mathscr{L}(c)\coloneqq \sum_{i=1}^n(\|Y_i-c\|^2-\widehat{r}^2)^2=\sum_{i=1}^n\Big\{ \|Y_i-c\|^2-\frac{1}{n}\sum_{j=1}^n\|Y_j-c\|^2)\Big\}^2,$$
	it suffices to minimize $\mathscr{L}(c)$ to obtain $\widehat{c}$.
	$$\mathscr{L}(c)=\sum_{i=1}^n\Big\{ Y_i^\top Y_i-2c^\top Y_i-\frac{1}{n}\sum_{j=1}^n(Y_j^\top Y_j-2c^\top Y_j)\Big\}^2.$$
	Letting $l_i=Y_i^\top Y_i$, $\displaystyle{\bar l=\frac{1}{n}\sum_{i=1}^n {l_i}}$ and $\displaystyle{\bar Y=\frac{1}{n}\sum_{i=1}^n {Y_i}}$, then
	\begin{align*}
		\mathscr{L}(f)&=\sum_{i=1}^n\Big\{ Y_i^\top Y_i-2c^\top Y_i-\frac{1}{n}\sum_{j=1}^n(Y_j^\top Y_j-2c^\top Y_j)\Big\}^2\\
		&=\sum_{i=1}^n\Big(l_i-2c^\top Y_i-\bar l+2c^\top \bar Y\Big)^2=\sum_{i=1}^n\Big(   (l_i-\bar l)-2c^\top (Y_i-\bar Y)\Big)^2\\
		&=\sum_{i=1}^n \bigg\{4c^\top (Y_i-\bar Y)(Y_i-\bar Y)^\top c-4(l_i-\bar l)c^\top (Y_i-\bar Y)+(l_i-\bar l)^2\bigg\}\\
		&=4c^\top \sum_{i=1}^n(Y_i-\bar Y)(Y_i-\bar Y)^\top c-4c^\top \sum_{i=1}^n(l_i-\bar l)(Y_i-\bar Y)+\sum_{i=1}^n(l_i-\bar l)^2
	\end{align*}
	is a quadratic function. So $\widehat c=\frac{1}{2}H^{-1}\xi$ where
	$$H=\sum_{i=1}^n\big(Y_i-\bar Y)(Y_i-\bar Y\big)^\top,\quad 
	\xi=\sum_{i=1}^n \bigg(Y_i^\top  Y_i-\frac{1}{n}\sum_{j=1}^nY_j^\top Y_j\bigg)\big(Y_i-\bar Y\big).$$
\end{proof}
Then we prove Corollary \ref{lossfunctions}, that is, when the data are sampled from some sphere $S_{V_0}(c_0,r_0)$, then SPCA and the sum of squared residuals have the same minimizer:
$$(V_0,c_0,r_0)=(\widehat{V},\widehat c,\widehat{r})
=\underset{V,c,r}\argmin\ \sum_{i=1}^n
d^2(Y_i,S_V(c,r))\eqqcolon\underset{V,c,r}\argmin\ \widetilde{\mathscr{L}}(c). $$

\begin{proof}[Proof of Corollary \ref{lossfunctions}]
	
	When $X_i\in S_{V_0}(c_0,r_0)$, both geometric and algebraic loss functions are zero at $(V_0,c_0,r_0)$, which is the minimizer of both algorithms.

\end{proof}	
\subsection{A deeper look at Theorem 8} \label{sec:Thm8}
We define some key geometric features of the manifold $M$ involved in the covering number proof.
\begin{definition}\label{UTM}
	Let $M$ be a $d$-dimensional $C^3$ Riemannian manifold. The volume of $M$ is denoted by $V$. Let $\kappa:T^1M\rightarrow \RR$ denote the curvature of the geodesic on $M$ starting from point $p$ with initial direction $v$:
	$$\kappa(p,v):=\left\|\frac{d^2 \exp_p(tv)}{dt^2}\bigg|_{t=0}\right\|,$$
	where $T^1M\coloneqq \bigcup_{p\in M}\left\{v\in T_pM\mid \|v\|=1\right\}$ is the unit sphere bundle over $M$ and $\exp_p(\cdot)$ is the exponential map from the tangent plane at $p$ to $M$. 
	Then $\displaystyle{\kappa_{\max}\coloneqq \sup_{(p,v)\in T^1M}|\kappa(p,v)|<\infty}$ is the maximum curvature. Similarly,
	$$T\coloneqq \sup_{(p,v)\in T^1M} \left\|\frac{d^3 \exp_p(tv)}{dt^3}\bigg|_{t=0}\right\|$$
	is the maximum of the absolute rate of change of the curvature. 
\end{definition}
\begin{definition}
	Given any $\epsilon>0$ and letting $T_pM$ denote the tangent plane to $M$ at $p \in M$, 
	$$F_\epsilon\coloneqq\left\{p\in M: \sup_{v\in T^1_pM}\kappa(p,v)-\inf_{v\in T^1_pM}\kappa(p,v) \leq \left(\frac{2\epsilon}{\kappa_{\max}}\right)^\frac{1}{2}\right\}$$
	is called the set of $\epsilon$-spherical points on $M$, where $T^1_pM$ is the unit ball in tangent space $T_pM$. Let $B(p,\epsilon)$ be the geodesic (open) ball centered at $p$ with radius $\epsilon$, then $\displaystyle{M_\epsilon:=\bigcup_{p\in F_\epsilon} B\left(p,\frac{1}{2}\left(\frac{6\epsilon}{3+T}\right)^\frac{1}{3}\right)}$	is called the spherical submanifold of $M$, and the volume is $V_{\epsilon}\coloneqq \mathrm{Vol}(M_\epsilon)$. $M$ is called an $\epsilon$ sphere if $V_{\epsilon}=V$.
\end{definition}

{$V_\epsilon$ is non-decreasing with respect to $\epsilon$}, and it is possible that $V_\epsilon=V$ but $M_\epsilon\neq M$. In this case, $M\setminus M_\epsilon$ has zero Riemannian measure so it does not impact the manifold approximation given observations from the manifold. As a result, we will not consider zero measure sets in the following sections.  \begin{example}
	A space form, a complete, simply connected Riemmanian manifold of constant sectional curvature, is an $\epsilon$ sphere for any manifold dimension $d$ and $\epsilon>0$.
\end{example}
\begin{example}
	A one dimensional manifold (a curve) is an $\epsilon$ sphere for any $\epsilon>0$.
\end{example}


Now we present a nontrivial surface, called the Enneper's surface (a minimal surface) and calculate the spherical points and spherical submanifold explicitly. 

\begin{example}
	Let $M=\left\{(u-\frac{1}{3}u^3+uv^2,-v-u^2v+\frac{1}{3}v^3,u^2-v^2)\in\RR^3|u^2+v^2\leq R^2\right\}$ be the compact truncation of the Enneper surface, which is an interesting surface in differential geometry that has varying curvature. In fact this surface is a minimal surface, that is, the mean curvature is zero and the two principal curvatures are mutually additive inverse to each other. By definition, $M$ is a compact smooth surface. We calculate the spherical points, spherical submanifold as well as its volume. 
	\begin{proposition}\label{Enneper}For the Enneper surface, when $\epsilon <\frac{4}{(R^2+1)^2}$, there are no spherical point so $F_\epsilon=M_\epsilon=\emptyset$ and $V_\epsilon=0$. When $ \epsilon \geq \frac{4}{(R^2+1)^2}$,	
		$$F_\epsilon=\left\{(u-\frac{1}{3}u^3+uv^2,-v-u^2v+\frac{1}{3}v^3,u^2-v^2):\frac{2}{\sqrt{\epsilon}}-1\leq u^2+v^2\leq R^2\right\},$$
		and $T= 
		\begin{cases}
			\frac{1}{1024\sqrt{7}} & R\geq \frac{1}{\sqrt{7}} \\ \frac{4R}{(R^2+1)^4} & R<\frac{1}{\sqrt{7}}.
		\end{cases}$
		When $R\geq\frac{1}{\sqrt 7}$,  $$M_{\epsilon}=\Big\{(u-\frac{1}{3}u^3+uv^2,-v-u^2v+\frac{1}{3}v^3,u^2-v^2)|u^2+v^2\geq\alpha^2\Big\},$$
		where $\alpha=\max\left\{0,\sqrt{\frac{2}{\sqrt\epsilon}-1}-\left(\frac{6\epsilon}{3+\frac{1}{1024\sqrt7}}\right)^{\frac{1}{3}}\right\}$, then
		$V_{\epsilon}=\pi\Big(R^2+\frac{1}{2}R^4-\alpha^2-\frac{1}{2}\alpha^4\Big), 
		\frac{V_{\epsilon}}{V}=\frac{2R^2+R^4-2\alpha^2-\alpha^4}{2R^2+R^4}.$ 
		An extreme case is $\epsilon\geq \frac{4}{(R^2+1)^2}$ and $\sqrt{\frac{2}{\sqrt\epsilon}-1}-\left(\frac{6\epsilon}{3+\frac{1}{1024\sqrt7}}\right)^{\frac{1}{3}}\leq0$. In this case, although $F_\epsilon\neq M$, the union of small geodesic balls centered on the spherical points is $M$, that is, $M_\epsilon=M$, so $\alpha=0$ and $\frac{V_{\epsilon}}{V}=1$, which means $M$ is an $\epsilon$ sphere.

	\end{proposition}
	\begin{proof}
		The first fundamental form of Enneper's surface is $E=(1+u^2+v^2)^2$, $F = 0$, $G = (1+u^2+v^2)^2$, the two principal curvatures are $k_1 = \frac{2}{(1+u^2+v^2)^2},k_2 = -\frac{2}{(1+u^2+v^2)^2},k_2$. So the Gaussian curvature is $-\frac{4}{(1+u^2+v^2)^4}$ and the mean curvature is $0$ so $K=4$. 
		Then all other quantities including $F_\epsilon$ and $M_\epsilon$ can be directly calculated, see \cite{fischer2017mathematical,weisstein2009enneper} for more details.	    
	\end{proof}
\end{example}

\bibliographystyle{chicago} 
\bibliography{sample}

\newpage
\section*{Supplementary Materials}
\section{SPCA Asymptotics}\label{sec:SPCAasym}
In this section, we prove the SPCA asymptotic results from Section 2.2. 
\begin{proof}[Proof of Theorem 4]
	Let $V^*,c^*,r^*$ be the population SPCA solution, then the error term splits into bias and variance:
	$$\|c_0-\widehat{c}_n\|\leq \underbrace{\|c_0-c^*\|}_{\text{Bias}}+\underbrace{\|c^*-\widehat{c}_n\|}_{\text{Variance}}.$$
	It suffices to bound the bias and variance separately. We handle the bias term in Lemma \ref{lem:SPCAbias} and variance in Lemma \ref{lem:SPCAvar}, and then Theorem 4 follows.
\end{proof}

\begin{lemma}\label{lem:SPCAbias}
	Let $V^*,c^*,r^*$ be the population SPCA solution, then $V^*=V_0$,
	$$\|c^*-c_0\|\leq \frac{\sigma^2}{2\widetilde{\lambda}_{d+1}(\widetilde{\lambda}_{d+1}+\sigma^2)}\|\xi_{Y}\|+\frac{\sigma^2}{2(\widetilde{\lambda}_{d+1}+\sigma^2)}\| V_0V_0^\top\EE Y\|,$$
	$$|{r^*}^2-{r_0}^2|\leq 2\|\EE Y-c_0\|\|c_0-c^*\|+\|c_0-c^*\|^2+(d+1)\sigma^2,$$
	where $\xi_Y = \EE\left[(Y^\top Y-\EE(Y^\top Y))(Y-\EE Y)\right]$ and $\widetilde{\lambda}_{d+1}$ is the $d+1$th eigenvalue of the covariance matrix of $Y$.
\end{lemma}
\begin{proof}
	Let $\Sigma_X=\cov(X)$ and $\Sigma_Y=\cov(Y)$. First observe that $$\Sigma_Y = [v_1,\cdots,v_D]\diag\{\widetilde{\lambda}_1,\cdots,\widetilde{\lambda}_{d+1},0,\cdots,0\}[v_1,\cdots,v_D]^\top$$ with eigenvectors $v_1,\cdots,v_D$ since $Y$ is supported on $V$, a $d+1$-dimensional subspace. It is  clear that $V=\mathrm{span}\{v_1,\cdots,v_{d+1}\}$. Recall that $\Sigma_X = \Sigma_Y+\sigma^2\Id_D$, then $\Sigma_X =[v_1,\cdots,v_D] \diag\{\widetilde{\lambda}_1+\sigma^2,\cdots,\widetilde{\lambda}_{d+1}+\sigma^2,\sigma^2,\cdots,\sigma^2\}[v_1,\cdots,v_D]^\top$ and the eigenvectors of $X$ are also $v_1,\cdots,v_D$. Recall that $V^*$ consists of the first $d+1$ eigenvectors of $\Sigma_Y$, so $V^*=V_0$. 
	
	Let $\widehat{X}=\EE X+ V^*{V^*}^\top(X-\EE X)$ and $\widehat{Y}=\EE Y+ V_0{V_0}^\top(Y-\EE Y)$ be the projection of $X$ and $Y$ to the linear space $V^*=V_0$, then $\widehat{X}=\widehat{Y}+\widehat{\epsilon}$ where $\widehat{\epsilon}=V_0V_0^\top \epsilon$. The projected noise $\widehat{\epsilon}\sim N(0,\sigma^2 V_0V_0^\top)$. 
	The population solution of $SPCA$ is then given by 
	\begin{equation}\label{eqn:c*}
		c^* = -\frac{1}{2}\EE\left[(\widehat{X}-\EE \widehat{X})(\widehat{X}-\EE \widehat{X})^\top\right]^{-1
		}\EE\left[(\widehat{X}^\top \widehat{X}-\EE(\widehat{X}^\top \widehat{X}))(\widehat{X}-\EE \widehat{X})\right].
	\end{equation}
	Recall that $c^*=\underset{c}{\argmin} \var(\|X-c\|^2)$ and $c_0=\underset{c}{\argmin}\var(\|Y-c\|^2)$ since $\var(\|Y-c_0\|^2)=0$ a.s. Letting $\sigma^2\to 0$, all moments of $X$ converge to the corresponding moments of $Y$, so $\var(\|X-c\|^2)\xrightarrow[]{\sigma^2\to0}\var(\|Y-c_0\|^2)$. By convexity of the objective function, the minimizer is unique so $c^*\to c_0$ as $\sigma^2\to 0$ and Equation (\ref{eqn:c*}) becomes 
	$$c_0 = -\frac{1}{2}\EE\left[(\widehat{Y}-\EE \widehat{Y})(\widehat{Y}-\EE \widehat{Y})^\top\right]^{-1
	}\EE\left[(\widehat{Y}^\top \widehat{Y}-\EE(\widehat{Y}^\top \widehat{Y}))(\widehat{Y}-\EE \widehat{Y})\right].$$
	First observe that 
	$\widehat{Y}=Y$, $\EE \widehat{X}=\EE X = \EE Y = \EE\widehat{Y},$
	$\cov(\widehat{Y})=\Sigma_{\widehat{Y}}=\Sigma_Y$,$\cov(\widehat{X})=\Sigma_{\widehat{X}}=V^*{V^*}^\top \Sigma_X V^*{V^*}^\top=\Sigma_Y+\sigma^2V_0V_0^\top.$
	Then observe that
	\begin{align*}
		\xi_{\widehat{X}} & = \EE\left[(\widehat{X}^\top \widehat{X}-\EE(\widehat{X}^\top \widehat{X}))(\widehat{X}-\EE \widehat{X})\right]\\
		& = \EE\left[(\widehat{Y}+\widehat{\epsilon})^\top (\widehat{Y}+\widehat{\epsilon})-\EE((\widehat{Y}+\widehat{\epsilon})^\top (\widehat{Y}+\widehat{\epsilon})))(\widehat{Y}+\widehat{\epsilon}-\EE (\widehat{Y}+\widehat{\epsilon}))\right]\\
		& = \EE\left[\left(\widehat{Y}^\top \widehat{Y}-\EE \widehat{Y}^\top \widehat{Y}+\widehat{\epsilon}^\top \widehat{\epsilon}-\EE\widehat{\epsilon}^\top\widehat{\epsilon}+2\widehat{\epsilon}^\top \widehat{Y}\right)\left(\widehat{Y}-\EE \widehat{Y}+\widehat{\epsilon}\right)\right]\\
		& =\EE\left[\left(\widehat{Y}^\top \widehat{Y}-\EE(\widehat{Y}^\top \widehat{Y})\right)\left(\widehat{Y}-\EE \widehat{Y}\right)\right]+\EE\left[\left(\widehat{Y}^\top \widehat{Y}-\EE \widehat{Y}^\top \widehat{Y}\right)\widehat{\epsilon}\right]\\
		& ~~~+ \EE\left[\left(\widehat{\epsilon}^\top \widehat{\epsilon}-\EE\widehat{\epsilon}^\top\widehat{\epsilon}+2\widehat{\epsilon}^\top \widehat{Y}\right)(\widehat{Y}-\EE \widehat{Y})\right]+ \EE\left[\left(\widehat{\epsilon}^\top \widehat{\epsilon}-\EE\widehat{\epsilon}^\top\widehat{\epsilon}+2\widehat{\epsilon}^\top \widehat{Y}\right)\widehat{\epsilon}\right]\\
		& = \xi_{\widehat{Y}} +2\EE\left[\widehat{Y}^\top\widehat{\epsilon} \widehat{\epsilon}\right]= \xi_{\widehat{Y}}+2\sigma^2V_0V_0^\top\EE Y.
	\end{align*}
	Let $U=[v_1,\cdots,v_D]$, then $\Sigma_Y = U\diag\{\widetilde{\lambda}_1,\cdots,\widetilde{\lambda}_{d+1},0\cdots,0\}U^\top$. Now we can compare $c^*$ and $c_0$:
	\begin{align*}
		&\|c^*-c_0\| = \left\|-\frac{1}{2}\Sigma_{\widehat{X}}^{-1}\xi_{\widehat{X}}+\frac{1}{2}\Sigma_{\widehat{Y}}^{-1}\xi_{\widehat{Y}}\right\|\\
		& = \frac{1}{2}\left\| \left(\Sigma_Y+\sigma^2V_0V_0^\top\right)^{-1}\left(\xi_{\widehat{Y}}+\sigma^2V_0V_0^\top\EE Y\right)-\Sigma_Y^{-1}\xi_{\widehat{Y}}\right\|\\
		& = \frac{1}{2}\left\|U\diag\left\{\frac{1}{\widetilde{\lambda}_1+\sigma^2},\cdots,\frac{1}{\widetilde{\lambda}_{d+1}+\sigma^2},0,\cdots,0\right\}U^\top\xi_{\widehat{Y}}-\Sigma_Y^{-1}\xi_{\widehat{Y}}+\left(\Sigma_Y+\sigma^2V_0V_0^\top\right)^{-1}\sigma^2V_0V_0^\top\EE Y\right\|\\
		& \leq \frac{1}{2}\left\|U\diag\left\{\frac{-\sigma^2}{\widetilde{\lambda}_1(\widetilde{\lambda}_1+\sigma^2)},\cdots,\frac{-\sigma^2}{\widetilde{\lambda}_{d+1}(\widetilde{\lambda}_{d+1}+\sigma^2)},0,\cdots,0\right\}U^\top\xi_{\widehat{Y}}\right\|\\
		&\hspace{0.05cm}+\frac{1}{2}\left\|U\diag\left\{\frac{\sigma^2}{\widetilde{\lambda}_1+\sigma^2},\cdots,\frac{\sigma^2}{\widetilde{\lambda}_{d+1}+\sigma^2},0,\cdots,0\right\}U^\top V_0V_0^\top\EE Y\right\|\\
		&\leq \frac{\sigma^2}{2\widetilde{\lambda}_{d+1}(\widetilde{\lambda}_{d+1}+\sigma^2)}\|\xi_{Y}\|+\frac{\sigma^2}{2(\widetilde{\lambda}_{d+1}+\sigma^2)}\| V_0V_0^\top\EE Y\|.
	\end{align*}
	Then we consider the radius:
	\begin{align*}
		\left|{r^*}^2-r_0^2\right| & = \left|\EE[\|\widehat{X}-c^*\|^2]-\EE[\|Y-c_0\|^2]\right|\\
		& = \left|\EE\left[\|Y+\widehat{\epsilon}-c^*\|^2]-\EE[\|Y-c_0\|^2\right]\right|\\
		& = \left|\EE\|Y-c_0+c_0-c^*+\widehat{\epsilon}\|^2-\EE\|Y-c_0\|^2\right|\\
		& = \left|\EE\left[2(Y-c_0)^\top(c_0-c^*+\widehat{\epsilon})+\|c_0-c^*\|^2+\|\widehat{\epsilon}\|^2+2(c_0-c^*)^\top\widehat{\epsilon}\right]\right|\\
		& = \left|2(\EE Y-c_0)^\top (c_0-c^*)+\|c_0-c^*\|^2+(d+1)\sigma^2\right|\\
		& \leq 2\|\EE Y-c_0\|\|c_0-c^*\|+\|c_0-c^*\|^2+(d+1)\sigma^2,
	\end{align*}
\end{proof}

\begin{lemma}\label{lem:SPCAvar}
	Let $v_i$ be eigenvectors of $\Sigma$ and $\widehat{v}_i$ be the corresponding eigenvectors of $\widehat{\Sigma}$, then under assumption (A), 
	$$\|\widehat{\bfc}_{n} -\bfc \|=o_p\left( \sigma n^{-1/2} \log n\right), ~~ \left|\widehat{r}_{n}-r \right|=o_p\left( \sigma n^{-1/2} \log n\right),~~\|\widehat{v}_i-v_i\|=o_p(n^{-1/2}\sigma\log n),~\forall i.$$
\end{lemma}

\begin{proof}
	Under assumption (A) \cite{Anderson_1963} showed asymptotic normality of the first $d+1$ eigenvalues and corresponding eigenvectors, when the underlying distribution is normal. \cite{Davis_1977} extended that result to general populations with finite fourth moment. Although the explicit forms of the joint distribution of the eigenvalues or eigenvectors are not available  \citep{Davis_1977}, the asymptotic distributions of  eigenvalues with multiplicities one, and the corresponding eigenvectors, are explicitly obtained. 
	
	Under assumption (A), let $\widehat{\lambda}_{j}$ be the $j$-th largest eigenvalue of the sample covariance matrix, for $j=1,\ldots,d+1$, then $\sqrt{n} \left( \widehat{\lambda}_{j}-\lambda_{j}\right)$ asymptotically follows a normal distribution with mean $0$ and variance $\kappa_{jjjj}+2\lambda_{j}^{2}$, where $\kappa_{jjjj}$ depends on the fourth order moment of $W=\Gamma^{T}(X-E(X)),$ where $\Gamma$ is the full population loading matrix. 
	
	Further, if $\widehat{v}_{j}$ is the sample eigenvector corresponding to the $j$-th largest eigenvalue, and $v_{j}$ is the $j$-th population eigenvector,  $\sqrt{n} \left(\widehat{v}_{j}-v_{j} \right)$ asymptotically follows a normal distribution with mean $0$ and variance $ \Xi_{j}$, where $\Xi_{j}$ is a positive definite matrix with finite components depending on the fourth cumulant of $W$.
	
	Finally, if $\sqrt{n}(x-\mu)$ is asymptotically normal with mean $0$ and covariance matrix $\Sigma$ with finite components, then it is easy to see that $\sqrt{n}\| x-\mu\|/(\log n)=o_{p}(1)$. 
	
	The remainder of the proof is split into three sections.
	\begin{enumerate}[label=\Roman*]
		\item {\it We first show that $\sqrt{n}(\sigma\log n)^{-1}\left(\cnh - \bfc \right) \xrightarrow{p} 0$.} The proof is split into three sub-parts.
		
		\begin{enumerate}[label = \roman*]
			\item {\it Showing $\sqrt{n}(\sigma\log n)^{-1}\| \widehat{H}^{+} - H^{+}\| \xrightarrow{p} 0$, with the spectral norm defined as $\|A\|=\sup\{\|Ax\|_{2}: \|x\|=1 \} $.} 
			
			Let $U=(V~ W)$, $\Lambda=\diag(\Lambda_{0}, \Lambda_{1})$. Then $H=VV^T\Sigma VV^T=V\Lambda_{0} V^T$. Similarly, $\widehat{H}=\vnh \widehat{\Sigma}_{d+1} \vnh^T$, where $\widehat{\Sigma}_{d+1}$ is the vector of first $(d+1)$ largest eigenvalues of $\widehat{\Sigma}$. Further, from the properties of Moore-Penrose inverse, we have $H^{+}=V\Lambda^{-1}_{0}V^{T}$, and similarly $\widehat{H}^{+}=\vnh \widehat{\Lambda}^{-1} \vnh^T$.
			
			Let $\|\cdot \|_F$ denote the Frobenius norm. Note that by triangle inequality
			\begin{equation}\label{eq_1}
				\|\vnh \widehat{\Lambda}^{-1} \vnh^T - V\Lambda^{-1}_{0}V^{T}\| 
				\leq  \|(\vnh-V) \widehat{\Lambda}^{-1} \vnh^T \| + \|V (\widehat{\Lambda}^{-1}-\Lambda^{-1}_{0}) \vnh^T \| + \| V\Lambda_{0}^{-1}(\vnh-V)^{T} \|.  
			\end{equation}
			The first term in right hand side (RHS) of (\ref{eq_1}), $\|(\vnh-V) \widehat{\Lambda}^{-1} \vnh^T \|=\|\vnh \widehat{\Lambda}^{-1} (\vnh-V)^{T}   \|=\| \widehat{\Lambda}^{-1} (\vnh-V)^{T}   \| \leq \|(\vnh-V)\widehat{\Lambda}^{-1} \|_F^2 \leq \widehat{\lambda}_{d+1}^{-1} \|(\vnh-V) \|_F^{2}$ where $\widehat{\lambda}_{d+1}$ is the $(d+1)^{th}$ largest eigenvalue of $\widehat{\Lambda}$, as $\vnh^T\vnh=I_{d+1}$.
			Further, $\sqrt{n} (\sigma \log n)^{-1} \|\vnh-V \|_F= \sqrt{ n (\sigma\log n)^{-2} \sum_{i=1}^{d+1}  \left\|\widehat{\bfv}_{i}-\bfv_{i} \right\|^2 } \xrightarrow{p} 0. $ 
			As $\lambda_{d+1}$ is bounded away from zero, by (A), $\widehat{\lambda}_{d+1}^{-1}=O_p(1)$. Thus, the first part of (\ref{eq_1}) converges to zero in probability after multiplying by $\sqrt{n}(\sigma \log n)^{-1}$.
			
			Consider the second term in RHS of (\ref{eq_1}). We have $$\|V(\widehat{\Lambda}^{-1}-\Lambda^{-1})\vnh^T\|\leq \|V(\widehat{\Lambda}^{-1}-\Lambda^{-1})\vnh^T\|_F=\|\widehat{\Lambda}^{-1}-\Lambda^{-1}\|_F
			,$$ as $V^TV=\vnh^T\vnh=I_{d+1}.$ 
			By the delta method, we have $\sqrt{n} \left\{ \left(\widehat{\lambda}_{d+1}\right)^{-1}-\left(\lambda_{d+1}\right)^{-1} \right\}=O_{p}(1)$.
			Thus, the second part of RHS of (\ref{eq_1}) converges to zero in probability after multiplying by $\sqrt{n}(\sigma\log n)^{-1}$.
			Finally, following a similar argument as before, it can be shown that the third part of RHS of (\ref{eq_1}) converges to zero in probability after multiplying by $\sqrt{n}(\sigma\log n)^{-1}$.
			
			
			\item {\it We next show that $\sqrt{n} \left(\sigma \log n \right)^{-1}\| \widehat{\xi} - { \xi}\|\xrightarrow{p} 0$.}
			
			As $E(\bfx_i)={0}$, we have 
			${ \xi}=E\left(\| V^T \bfx \|^2  VV^T \bfx   \right) - E \left( \| V^T \bfx \|^2  \right) E\left(  VV^T \bfx   \right).$
			First we will show the following three convergences in order:
			\begin{eqnarray}
				\sqrt{n} \left(\sigma \log n \right)^{-1} \left| \frac{1}{n} \sum_{i=1}^{n} \|\vnh^T \bfx_i \|^2  - E\left(\| V^T \bfx \|^2 \right) \right| &\xrightarrow{p}& 0\label{eq_2}\\ 
				\sqrt{n} \left(\sigma \log n \right)^{-1} \left| \frac{1}{n} \sum_{i=1}^{n} \vnh \vnh^T \bfx_i   - E\left(V V^T \bfx \right) \right| &\xrightarrow{p}& 0 \label{eq_3}\\ 
				\sqrt{n} \left( \sigma \log n \right)^{-1} \left|  \frac{1}{n} \sum_{i=1}^{n} \|\vnh^T \bfx_i \|^2 \vnh \vnh^T \bfx_i  - E\left(\| V^T \bfx \|^2  VV^T \bfx   \right) \right| &\xrightarrow{p}& 0\label{eq_4}
			\end{eqnarray} 
			
			The convergence in (\ref{eq_2}) is proved in two steps:
			\begin{eqnarray}
				\sqrt{n} \left(\sigma \log n \right)^{-1} \left|\frac{1}{n} \sum_{i=1}^{n} \|\vnh^T \bfx_i \|^2 -\frac{1}{n} \sum_{i=1}^{n} \|V^{T} \bfx_i \|^2  \right| \xrightarrow{p} 0	\label{eq_5}
			\end{eqnarray} 
			\begin{eqnarray}
				\sqrt{n} \left( \sigma \log n \right)^{-1} \left| \frac{1}{n} \sum_{i=1}^{n} \|V^T \bfx_i \|^2-  E\left(\| V^T \bfx \|^2 \right)\right| &\xrightarrow{p}& 0\label{eq_6}
			\end{eqnarray} 
			To see (\ref{eq_5}), observe that the left hand side (LHS) of (\ref{eq_5}) is proportional to 
			\begin{eqnarray*}
				\left| \frac{1}{n} \sum_{i=1}^{n} \|\bfx_i\|^2 \left\{ \left(\frac{\bfx_i}{\|\bfx_i\| }\right)^T \left( \vnh\vnh^T -VV^T \right) \frac{\bfx_i}{\|\bfx_i \| }\right\} \right|
				\leq \left| \lambda_{\max} \left(\vnh\vnh^T-VV^T \right)\right|  \frac{1}{n} \sum_{i=1}^{n} \|\bfx_i\|^2 .
			\end{eqnarray*}
			Note that $\|\bfx_i\|^2$s, $i=1,2,\ldots,n$, are independent random variables having mean $E\left( \bfx^{T}\bfx\right)=E(\tr \Sigma)=\tr(\Lambda_D)$ and finite variance by assumption (A). Therefore, by the Strong Law of Large Numbers (SLLN) (see, e.g., \cite{slln2008}) the following holds:
			\begin{eqnarray*}
				\frac{1}{n} \sum_{i=1}^{n} \|\bfx_i\|^2 \xrightarrow{a.s.} \tr(\Lambda_D)\Longrightarrow \frac{1}{n} \sum_{i=1}^{n} \|\bfx_i\|^2=O_{p}(1). 
			\end{eqnarray*}
			Further, recall that for a symmetric $A$, $\lambda_{\max}(A) \leq \sigma_{\max}(A)\leq \|A\|_{F}$, where $\sigma_{\max}(A)$ is the largest singular value of $A$. Thus 
			\begin{eqnarray*}
				\lambda_{\max} \left(\vnh\vnh^T-VV^T \right)\leq\left\|\vnh\vnh^T-VV^T \right\|_{F} 
				\leq \left\|\vnh\vnh^T-V\vnh^T \right\|_{F} +  \left\|V\vnh^T-VV^T \right\|_{F}.
			\end{eqnarray*}
			Now,
			\begin{eqnarray*}
				\sqrt{n} \left( \sigma \log n \right)^{-1} \left\| \left(\vnh^T-V \right) \vnh^T  \right\|_F = \sqrt{n} \left(\sigma \log n \right)^{-1} \left\| \vnh^T-V \right\|_F \xrightarrow{p}0,
			\end{eqnarray*}
			as shown in part {\bf i}.

			To see (\ref{eq_6}) note that $\bfx_i^T VV^T \bfx_i=w_{i}$, for $i=1,2,\ldots, n$, where $w_{i}$s are independent random variables having finite moments by (A). Therefore, by the Lideberg-Feller CLT (see \cite{Serfling_book}) the following holds:
			\begin{eqnarray*}
				\sqrt{n} \left(\sigma \log n \right)^{-1} \left| \frac{1}{n} \sum_{i=1}^{n} \|V^T \bfx_i \|^2 - E\left(\| V^T \bfx \|^2 \right) \right| &\xrightarrow{p}& 0.
			\end{eqnarray*} 
			Thus (\ref{eq_2}) is proved.
			
			\vskip10pt
			Next consider the convergence in (\ref{eq_3}). As before we split the proof in two parts:
			\begin{eqnarray}
				\sqrt{n} \left(\sigma \log n \right)^{-1}	\left\|\frac{1}{n} \sum_{i=1}^{n} \left(\vnh \vnh^T -VV^T \right) \bfx_i \right\|^2 &\xrightarrow{p}& 0, \label{eq_7} \\
				\sqrt{n} \left(\sigma \log n \right)^{-1}\left\|\frac{1}{n} \sum_{i=1}^{n} VV^T \bfx_i - E \left( VV^T \bfx \right) \right\|^2  &\xrightarrow{p}& {0}.\label{eq_8}
			\end{eqnarray}
			
			To see (\ref{eq_7}) observe as before that the LHS of (\ref{eq_7}) is less than or equal to
			\begin{eqnarray*}
				\frac{1}{n} \sum_{i=1}^{n}\left\| \left(\vnh \vnh^T -VV^T \right) \bfx_i \right\|^2
				\leq \left\| \vnh \vnh^T -VV^T  \right\|_F^2 \left( \frac{1}{n} \sum_{i=1}^{n}  \| \bfx_i\|^2 \right),
			\end{eqnarray*}
			as $\|A X \|^{2} \leq \lambda_{\max} (A^{T} A) \|X\|^2 =\sigma_{\max}^2 (A) \|X\|^2 \leq \|A\|_{F}^{2} \|x\|^2$ for any symmetric matrix $A$.
			
			As before, we write
			\begin{eqnarray*}
				\left\| \vnh \vnh^T -VV^T  \right\|_F^2 \leq
				\left\| \vnh \vnh^T - \vnh V^T  \right\|_F^2+ \left\| \vnh V^T -VV^T  \right\|_F^2. 	
			\end{eqnarray*}
			Thus following similar arguments, as provided earlier, one can show that (\ref{eq_7}) holds. 
			Next consider (\ref{eq_8}). Let ${ u}_i=VV^T \bfx_i$. 
			As $u_i$s are independent and third order moments of $u_i$s are finite, by Lindeberg Feller CLT, (\ref{eq_8}) is satisfied.  
			Finally, we show (\ref{eq_4}). Again, we split the proof in two main steps:
			\begin{eqnarray} 
				\sqrt{n} (\sigma\log n)^{-1}\left\| n^{-1} \sum_{i=1}^{n} \left( \|\vnh^T \bfx_i \|^2 \vnh \vnh^T \bfx_i - \|V^T \bfx_i \|^2 VV^T \bfx_i \right) \right\|  &\xrightarrow{p}& 0 \label{eq_9}\\
				\sqrt{n} (\sigma\log n)^{-1}\left\| n^{-1} \sum_{i=1}^{n}  \|V^T \bfx_i \|^2 VV^T \bfx_i - E\left( \|V^T \bfx \|^2 VV^T \bfx \right) \right\| &\xrightarrow{p}& 0 \label{eq_10}
			\end{eqnarray}
			To show (\ref{eq_9}), we split the problem into two parts
			\begin{eqnarray}
				\sqrt{n} (\sigma \log n)^{-1}\left\| n^{-1} \sum_{i=1}^{n} \left( \|\vnh^T \bfx_i \|^2 \vnh \vnh^T \bfx_i - \|\vnh^T \bfx_i \|^2 VV^T \bfx_i \right) \right\| &\xrightarrow{p}& 0 \label{eq_11}\\
				\sqrt{n} (\sigma \log n)^{-1}\left\| n^{-1} \sum_{i=1}^{n} \left( \|\vnh^T \bfx_i \|^2 VV^T  \bfx_i - \|V^T \bfx_i \|^2 VV^T \bfx_i \right) \right\| &\xrightarrow{p}& 0 \label{eq_12}
			\end{eqnarray}
			The proof of (\ref{eq_11}) is similar to that of (\ref{eq_7}), except here we have to show that $n^{-1}\sum_{i=1}^{n} \|\vnh^T\bfx_i \|^2 \|\bfx_i\|$ is bounded in probability. To show the boundedness, it is enough to show that $n^{-1}\sum_{i=1}^{n} \|\bfx_i\|^3$ converges (as $\bfx_{i}^{T}\vnh\vnh^T \bfx_{i}\leq \lambda_{\max} \left( \vnh \vnh^{T} \right)=\lambda_{\max} \left(  \vnh^{T} \vnh\right) =1$). Again, the random variables $\|\bfx_i\|^3$, $i=1,2,\ldots,n$, are independent and have finite third order moments by assumption (A). Therefore $n^{-1}\sum_{i=1}^{n} \|\bfx_i\|^3 \xrightarrow{a.s.} E \left\| X \right\|^{3} $, by SLLN, and hence is bounded in probability. 
			Finally, observe that the LHS of (\ref{eq_12}) is less than or equal to
			\begin{align}
				& n^{-1/2} (\sigma \log n)^{-1}\sum_{i=1}^{n} \left\| \left\{ \|\vnh^T \bfx_i \|^2 - \|V^T \bfx_i \|^2\right\} VV^T \bfx_i  \right\|\notag \\
				&= n^{-1/2} (\sigma \log n)^{-1}\sum_{i=1}^{n} \left| \bfx_i^{T} \left(\vnh \vnh^T -VV^T \right) \bfx_i \right|\left\|  V^T \bfx_i  \right\|\notag\\
				&\leq  n^{-1/2} (\sigma \log n)^{-1} \left|  \lambda_{\max}\left(\vnh \vnh^T -VV^T \right)  \right|  \sum_{i=1}^{n} \left\|   \bfx_i  \right\|^{3} \notag\\
				&\leq  n^{-1/2} (\sigma \log n)^{-1}   \sigma_{\max}\left(\vnh \vnh^T -VV^T \right)    \sum_{i=1}^{n} \left\|   \bfx_i  \right\|^{3} \notag \\
				&\leq  n^{-1/2} (\sigma \log n)^{-1}   \left\|\vnh \vnh^T -VV^T \right\|_{\mathrm{F}}    \sum_{i=1}^{n} \left\|   \bfx_i  \right\|^{3}. \label{eqn_E1}
			\end{align}
			As before, one can show that (\ref{eqn_E1}) converges in probability to zero.

			\vskip10pt
			
			\item {\it Remaining steps towards showing $\sqrt{n} (\sigma \log n)^{-1} \left(\cnh- \bfc \right) \xrightarrow{p} 0$.} Observe that,
			\begin{eqnarray*}
				\cnh-\bfc =\frac{1}{2} \left\{  \left( \widehat{H}^{+} - H^{+} \right) \widehat{\xi} + H^{+} \left(\widehat{\xi} -{\xi} \right) \right\}.
			\end{eqnarray*}
			Therefore,
			\begin{eqnarray*}
				2\sqrt{n} (\sigma \log n)^{-1}\|\cnh-\bfc\| \leq \sqrt{n} (\sigma \log n)^{-1}\| \widehat{H}^{+} - H^{+} \| \| \widehat{\xi} \| + \| H^{+}\|\sqrt{n} (\sigma \log n)^{-1} \|\widehat{ \xi} -{\xi} \|
			\end{eqnarray*}
			Thus, to show that $\sqrt{n} (\sigma \log n)^{-1} \left(\cnh- \bfc \right) \xrightarrow{p} 0$, it is enough to show that $\|\widehat{ \xi}\|$ is bounded in probability, and $\|H^{+}\|$ is bounded. From (ii) we observe that each component of $\widehat{ \xi}$ converges in probability to ${ \xi}$. Consider a number $N=N_{\epsilon}$ such that $P(\|\widehat{ \xi} -{\xi}\|>N/2)\leq \epsilon$, and $\|\xi\|<N/2$, then
			\begin{eqnarray*}
				P\left( \|\widehat{ \xi} \| >N \right) \leq P\left( \|\widehat{ \xi} -{\xi} \|+\|{ \xi}\| >N \right) \leq  P\left(\|\widehat{ \xi} -{ \xi} \| > \frac{N}{2} \right)  \leq \epsilon. 
			\end{eqnarray*}
			Finally, we have already seen that $H^{+}=V\Lambda^{-1}V^T$. Thus, 
			$$\|H^{+}\|\leq \|H^{+}\|_F=\sqrt{\tr(\Lambda^{-2})}\leq \lambda_{d+1}^{-1} \sqrt{d+1} = \sqrt{d+1}/\sigma^2,$$
			as $\lambda_{d+1}$ is bounded away from zero, the proof follows.
		\end{enumerate}
		
		\vskip15pt
		
		\item The next step is to show $\sqrt{n} (\sigma\log n)^{-1}(\rnh-r) \xrightarrow{p} 0$.
		Recall that
		$r=E\|\bfx - {c} \|$, and $\rnh=n^{-1}\sum_{i=1}^{n}  \| \bfx_i-\cnh \| $.
		We will prove this in the following two steps:
		\begin{eqnarray}
			\sqrt{n} (\sigma\log n)^{-1} \left| \frac{1}{n}\sum_{i=1}^{n} \left( \| \bfx_i-\cnh \| -\| \bfx_i-\bfc \| \right) \right| &\xrightarrow{p}& 0 \label{eq_13} \\
			\sqrt{n} (\sigma\log n)^{-1}\left| \frac{1}{n}\sum_{i=1}^{n}  \| \bfx_i-\bfc \| - E\| \bfx-\bfc \| \right| &\xrightarrow{p}& 0.  \label{eq_14}
		\end{eqnarray}
		We first prove (\ref{eq_13}). Observe that, by Jensen's inequality 
		$ \| \bfx_i-\cnh \| -\| \bfx_i-\bfc \| \leq \|\cnh-\bfc\| .$
		Thus, by the previous part (\ref{eq_13}) holds.
		Equation (\ref{eq_14}) can be shown by an application of Lideberg-Feller CLT, as the components are independent and identically distributed with finite second order moments.


		\item {\it Showing $\sqrt{n} (\sigma\log n)^{-1} \|v_i-\widehat{v}_i\| \xrightarrow{p} 0$, for $i=1,2,\ldots, d+1$.} This directly follows from the results of \cite{Davis_1977}.

	\end{enumerate}
\end{proof}

Before proving Corollary 5, we show the following two Lemmas first.
\begin{lemma}\label{lem:dH1}
	Given two hyper-spheres $S(c_1,r_1)$ and $S(c_2,r_2)$, their Hausdorff distance is
	$$d_H(S(c_1,r_1),S(c_2,r_2))=\|c_1-c_2\|+|r_1-r_2|.$$
\end{lemma}
\begin{proof}
	First assume $c_1=c_2=c$. Then by definition, for any $x\in S(c,r_1)$, the distance between $x$ and $S(c,r_2)$ is $$d(x,S(c,r_2))=|\|x-c\|-r_2|=|r_1-r_2|.$$
	Then we assume $c_1\neq c_2$. Observe that for any $x\in S(c,r_1)$, the distance between $x$ and $S(c_2,r_2)$ is 
	$$d(x,S(c_2,r_2))=|\|x-c_2\|-r_2|.$$
	Then observe that $$\underset{x\in S(c_1,r_1)}{\argmax}d(x,S(c_2,r_2))=c_1+r_1\frac{c_1-c_2}{\|c_1-c_2\|}.$$
	As a result,
	\begin{align*}
		&\sup_{x\in S(c_1,r_1)}\inf_{y\in S(c_2,r_2)} \|x-y\|=\sup_{x\in S(c_1,r_1)}|\|x-c_2\|-r_2|\\
		& = \left|\left\|c_1+r_1\frac{c_1-c_2}{\|c_1-c_2\|}-c_2\right\|-r_2\right| = |\|c_1-c_2\|+r_1-r_2|.
	\end{align*}
	By switching the indices, we have the Hausdorff distance:
	$$d_H(S(c_1,r_1),S(c_2,r_2))=\max\{|\|c_1-c_2\|+r_1-r_2|,|\|c_2-c_1\|+r_2-r_1|\}=\|c_1-c_2\|+|r_1-r_2|.$$
\end{proof}

\begin{lemma}\label{lem:dH2}
	Given two spheres 	$S(V_1,c_1,r_1)$ and $S(V_2,c_2,r_2)$, their Hausdorff distance is
	$$d_H(S(V_1,c_1,r_1),S(V_2,c_2,r_2))\leq\left(\|I-V_1V_1^\top\|+\|V_1V_1^ \top\|\right)\|c_1-c_2\|+|r_1-r_2|+r_1 \|V_1-V_2\|.$$
\end{lemma}	
\begin{proof}
	By triangular inequality,
	$$d_H(S(V_1,c_1,r_1),S(V_2,c_2,r_2))\leq d_H(S(V_1,c_1,r_1),S(V_1,c_2,r_2))+d_H(S(V_1,c_2,r_2),S(V_2,c_2,r_2))$$
	We start with the first term by assuming $V_1=V_2=V$ and observe that
	\begin{equation}\label{eqn:sdist2}
		d^2(x,S(V,c_2,r_2))=\|x-c_2-VV^\top(x-c_2)\|^2+(\|c_2+VV^\top(x-c_2)-c_2\|-r_2)^2.
	\end{equation}
	For any $x\in S(V,c_1,r_1)$, $x=c_1+VV^\top(x-c_1)$ so the first term in Equation \eqref{eqn:sdist2} becomes
	\begin{align*}
		&\|x-c_2-VV^\top(x-c_2)\|^2=\|x-c_2-VV^\top x+VV^\top c_2\|^2\\
		& \hspace{-0.53cm}= \|x-c_2-x+c_1-VV^\top c_1+VV^\top c_2\|^2 = \|c_1-c_2-VV^\top(c_1-c_2)\|^2,
	\end{align*}
	which does not depend on $x$. Let $\widehat c_1 = c_2+VV^\top(c_1-c_2)$ be the projection of $c_1$ to $c_2+V$. Then the above term can be simplied as $\|c_1-\widehat{c_1}\|^2$. Similarly, let $\widehat{x}=c_2+VV^\top(x-c_2)$ be the projection of $x$ to $c_2+V$, then the second term in Equation \eqref{eqn:sdist2} can be written as $(\|\widehat{x}-c_2\|-r_2)^2$.
	Recall that $x\in S(V,c_1,r_1)$, then $\widehat{x}\in S(V,\widehat{c_1},r_1)$ so by Lemma \ref{lem:dH1},  $$\underset{\widehat{x}\in S(V,\widehat{c_1},r_1)}{\argmax}(\|\widehat{x}-c_2\|-r_2)^2=\widehat{c_1}+r_1\frac{\widehat{c_1}-c_2}{\|\widehat{c_1}-c_2\|}$$
	if $\widehat{c_1}\neq c_2$, otherwise the $\argmax$ can be any point on $S(V,\widehat{c_1},r_1)$.
	The corresponding maximizer $x\in S(V,c_1,r_1)$ is determined by
	$$c_2+VV^\top(x-c_2)=\widehat{x}=\widehat{c_1}+r_1\frac{\widehat{c_1}-c_2}{\|\widehat{c_1}-c_2\|}.$$
	Recall that $VV^\top x = x-c_1+VV^\top c_1$, so we have
	\begin{align*}
		x &= c_1-VV^\top c_1+VV^\top x= c_1-VV^\top c_1-c_2+VV^\top c_2+\widehat{c_1}+r_1\frac{\widehat{c_1}-c_2}{\|\widehat{c_1}-c_2\|}\\
		& = c_1-\widehat{c_1}+\widehat{c_1}+r_1\frac{\widehat{c_1}-c_2}{\|\widehat{c_1}-c_2\|}= c_1+r_1\frac{\widehat{c_1}-c_2}{\|\widehat{c_1}-c_2\|}\\
		& = \underset{x\in S(V,c_1,r_1)}{\argmax}(\|\widehat{x}-c_2\|-r_2)^2
		= \underset{x\in S(V,c_1,r_1)}{\argmax}d(x,S(V,c_2,r_2)).
	\end{align*}
	As a result,
	\begin{align*}
		&\sup_{x\in S(V,c_1,r_1)}\inf_{y\in S(V,c_2,r_2)} = \sqrt{\|c_1-\widehat{c_1}\|^2+\left(\left\|\widehat{c_1}+r_1\frac{\widehat{c_1}-c_2}{\|\widehat{c_1}-c_2\|}-c_2\right\|-r_2\right)^2}\\
		& =  \sqrt{\|c_1-c_2-VV^\top(c_1-c_2)\|^2+\left(\left(1+\frac{r_1}{\|\widehat{c_1}-c_2\|}\right)\|\widehat{c_1}-c_2\|-r_2\right)^2}\\
		& = \sqrt{\|(I-VV^\top)(c_1-c_2)\|^2+\left(\|VV^\top(c_1-c_2)\|+r_1-r_2\right)^2}\\
		&\leq \|(I-VV^\top)(c_1-c_2)\|+\left|\|VV^\top(c_1-c_2)\|+r_1-r_2\right|\\
		&\leq \|I-VV^\top\|\|c_1-c_2\|+\|VV^ \top\|\|c_1-c_2\|+|r_1-r_2|,
	\end{align*}
	where $\|\cdot\|=\|\cdot\|_2$ is the induced $2$-norm for matrices. Note that this norm can also be replaced by the Frobenius norm since $\|A\|_2\leq \|A\|_F$. By symmetry, we conclude that
	$$d_H(S(V_1,c_1,r_1),S(V_1,c_2,r_2))\leq \|I-V_1V_1^\top\|\|c_1-c_2\|+\|V_1V_1^ \top\|\|c_1-c_2\|+|r_1-r_2|.$$

	Next, we assume $c_1=c_2=c$ and $r_1=r_2=r$. The Hausdorff distance is invariant under translation so we can assume $c=0$ without loss of generality.  Recalling that for any $x\in S(V_1,0,r)$ with $V_2V_2^\top x\neq 0$, then 
	\begin{align*}
		d^2(x,S(V_2,0,r))&=\left\|x-\frac{r}{\|V_2V_2^\top x\|}V_2V_2^\top x\right\|^2= r^2+r^2-2r x\cdot \frac{V_2V_2^\top x}{\|V_2V_2^\top x\|}\\
		& = 2r^2-2r^2 \frac{ x}{\|x\|}\cdot \frac{V_2V_2^\top x}{\|V_2V_2^\top x\|} = r^2\left(\left\|\frac{x}{\|x\|}-\frac{V_2V_2^\top x}{\|V_2V_2^\top x\|}\right\|^2\right).
	\end{align*}
	
	As a result, the Hausdorff distance is given by

	\begin{align*}
		d_H(S(V_1,c_1,r_1),S(V_2,c_1,r_1))&=r_1\min\left\{\min_{x\in V_1} \left\|\frac{x}{\|x\|}-\frac{V_2V_2^\top x}{\|V_2V_2^\top x\|}\right\|,\min_{y\in V_2} \left\|\frac{y}{\|y\|}-\frac{V_1V_1^\top x}{\|V_1V_1^\top y\|}\right\| \right\}
		\leq r_1 \|V_1-V_2\|.
	\end{align*}
	
	To conclude, the Hausdorff distance between two $d$-dimensional spheres is
	$$d_H(S(V_1,c_1,r_1),S(V_2,c_2,r_2))\leq\left(\|I-V_1V_1^\top\|+\|V_1V_1^ \top\|\right)\|c_1-c_2\|+|r_1-r_2|+r_1 L\|V_1-V_2\|.$$
	
\end{proof}

\begin{proof}[Proof of Corollary 5]
	The corollary is a direct consequence of Lemma \ref{lem:dH2} and Theorem 4.
\end{proof}

\section{Local SPCA}\label{sec:localSPCA}
In this section, we prove results related to local SPCA from Section 3.1. We first consider a single piece $M_k=M$ with radius $\alpha$ and sample size $n$. 

\begin{lemma}\label{lem:bias}
	Let $S_{V_y}(c_y,r_y)$ be the population solution of SPCA based on $y\sim \rho$ with $\supp(\rho)=M$ and $\mathrm{diam}(M)=\alpha$, then there exists $C>0$ such that
	\begin{equation}\label{eqn:mfderror}
		\sup_{x\in M} d(x,S_{V_y}(c_y,r_y))\leq C\alpha^2.
	\end{equation}
\end{lemma}

Before proving Lemma \ref{lem:bias}, we consider the following lemmas. Recall that a hyperplane can be viewed as a sphere with infinite radius, which is rigorously stated in the following lemma.
\begin{lemma}\label{inftyradius}
	Let $V$ be a $d$-dimensional subspace of $\RR^{d+1}$ and $\alpha>0$ is a fixed positive real number. Then for any $\epsilon$, there exists a sphere $S(c,r)$ such that $\displaystyle{\sup_{x\in V, \|x\|\leq \alpha}d(x,S(c,r))<\epsilon}$. 
\end{lemma}

\begin{proof}
	This is a direct corollary of Taylor expansion. In fact we can let $c=r\bf n$ where $\bf n$ is the unit normal vector of $V$ and $r=\frac{\alpha^2}{\epsilon}$.
\end{proof}
\begin{lemma}\label{PCAbias}
	Let $(\mu^*,V^*)$ be the best approximating hyperplane of $M$ (the population solution of PCA), then there exists $C>0$ such that
	$$\sup_{x\in M} d^2(x,\mu^*+V^*)\leq C\alpha^4.$$
\end{lemma}
\begin{proof}
	This is another direct consequence of Taylor expansion, see also Proposition \ref{lineartaylor2}.
\end{proof}

Now we prove Lemma \ref{lem:bias}. 
\begin{proof}[Proof of Lemma \ref{lem:bias}]
	Let $(c_y,r_y)$ be the solution of SPCA, then
	\begin{align*}
		(\|x-c_y\|-r_y)^2=\frac{(\|x-c_y\|^2-r_y^{2})^2}{(\|x-c_y\|+r_y)^2}
		\leq \frac{(\|x-c_y\|^2-r_y^{2})^2}{r_y^{2}}
		\leq \frac{(\|x-c_y\|^2-r_y^{2})^2}{\delta^2},
	\end{align*}
	{where $\delta$ is defined in Assumption (B) in Theorem 6}. As a result, it suffices to find the upper bound of $(\|x-c_y\|^2-r_y^{2})^2$ which is the loss function of SPCA. Lemma  \ref{PCAbias} implies that there exists an affine subspace $\mu+V$ and $C>0$ such that 
	$d^2(x,\mu+V)\leq C\alpha^4$ for any $x\in U$. Then set $\epsilon=C\alpha^2$ in Lemma \ref{inftyradius} so there exists $c,r$ such that $d\big(y,c+\frac{r}{\|y-c\|}(y-c)\big)\leq \epsilon=C\alpha^2$ for any $y=\mu+VV^\top x$ where $x\in U$. In fact, $r=\frac{\alpha^2}{\epsilon}=\frac{1}{C}$.
	For convenience, when $x$ is the original point in $U$, let $y$ be the linear projection of $x$ onto the affine subspace $\mu+V$ and $z$ be the spherical projection to sphere $S(c,r)$. By the triangular inequality, we have
	$$\|x-z\|\leq\|x-y\|+\|y-z\|\leq C\alpha^2+C\alpha^2=2C\alpha^2=C\alpha^2$$
	by Lemma \ref{inftyradius} and \ref{PCAbias}, where we are abusing $C$ for all constants without confusion. Then we evaluate the loss function at such $(c,r)$:
	\begin{align*}
		(\|x-c\|^2-r^2)^2&=(x^\top x-2c^\top x+c^\top c-r^2)^2\\
		&=((z+x-z)^\top (z+x-z)-2c^\top (z+x-z)+c^\top c-r^2)^2\\
		&=(0+\|x-z\|^2+2(z-c)^\top (x-z))^2\\
		&\leq (\|x-z\|^2+2|(z-c)^\top (x-z)|)^2\\
		&\leq (C^2\alpha^4+2\|z-c\|\|x-z\|)^2\\
		&\leq(C^2\alpha^4+2rC\alpha^2)^2\\
		&\sim 4r^2C^{2}\alpha^4=C\alpha^4
	\end{align*}
	when $\alpha$ is sufficiently small. To conclude,
	$$\sup_{x\in M} d^2(x,S_{V_y}(c_y,r_y))\leq\frac{\sup_{x\in M}(\|x-c_y\|^2-r_y^{2})^2}{\delta^2}\leq \frac{\sup_{x\in M}(\|x-c\|^2-r^2)^2}{\delta^2}\leq C\alpha^4.$$
\end{proof}
Then we consider the following Lemma regarding Hausdorff distance:
\begin{lemma}\label{lem:haus}
	For any $x\in\RR^D$ and two compact sets $A,B\subset \RR^D$, 
	$|d(x,A)- d(x,B)|\leq d_H(A,B).$
\end{lemma}
\begin{proof}
	Assume $x\in A$, then $d(x,A)=0$, so 
	$$|d(x,A)- d(x,B)|=d(x,B)\leq \sup_{x\in A}d(x,B)\leq d_H(A,B).$$
	A similar proof holds if $x\in B$. Then we assume $x\notin A\cup B$. Let $b_0\in B$ such that $d(x,B)=d(x,b_0)$ and $a_0\in A$ such that $d(b_0,A)=d(b_0,a_0)$, then
	$$d(x,A)- d(x,B)\leq d(x,a_0)-d(x,b_0)\leq d(a_0,b_0)=d(b_0,A)\leq d_H(A,B).$$
	Similarly we can show $d(x,B)- d(x,A)\leq d_H(A,B)$. 
\end{proof}
Now we are ready to prove Theorem 6.
\begin{proof}[Proof of Theorem 6]
	For the single piece case, let $\widehat{M} = S_{\widehat{V}_n}(\widehat{c}_n,\widehat{r}_n)$, where $\widehat{V}_n$, $\widehat{c}_n$ and $\widehat{r}_n$ are the solution of empirical SPCA on data $x_1,\cdots,x_n$. Similarly, let $V_z,~c_z,~r_z$ be the solution of population SPCA based on $z\sim \rho$ and $V_x,~c_x,~r_x$ be the solution of population SPCA based on $x\sim \rho*N(0,\sigma^2 I_D)$. Then we have
	\begin{align*}
		\sup_{y\in M}d(x,S_{\widehat{V}_n}(\widehat{c}_n,\widehat{r}_n))&\stackrel{Lemma~\ref{lem:haus}}{\leq} \sup_{y\in M}d(y,S_{V_z}(c_z,r_z))+d_H(S_{V_z}(c_z,r_z),S_{\widehat{V}_n}(\widehat{c}_n,\widehat{r}_n))\\
		&\stackrel{Lemma~\ref{lem:bias}}{\leq}C\alpha^2+d_H(S_{V_z}(c_z,r_z),S_{\widehat{V}_n}(\widehat{c}_n,\widehat{r}_n))\\&\stackrel{\text{tri. ineq.}}{\leq} C\alpha^2+d_H(S_{V_z}(c_z,r_z),S_{V_x}(c_x,r_x))+d_H(S_{V_x}(c_x,r_x),S_{\widehat{V}_n}(\widehat{c}_n,\widehat{r}_n))\\
		&\stackrel{Lemma~ \ref{lem:SPCAbias}}{\leq} C\alpha^2+C\sigma^2+d_H(S_{V_x}(c_x,r_x),S_{\widehat{V}_n}(\widehat{c}_n,\widehat{r}_n))\\
		&\stackrel{Lemma~ \ref{lem:SPCAvar}}{\leq} C\alpha^2+C\sigma^2+C \frac{\sigma \log n}{n^{1/2}}\\
		&\leq C\left(\alpha^2+\sigma^2+\frac{\sigma\log n}{n^{1/2}}\right).
	\end{align*}
	Then consider the general case, when $C_1,\cdots,C_K$ are partitions and $M_k\cap C_k$ is the submanifold of $M$ with radius $\alpha_k$ containing $n_k$ samples. Let $S_{\widehat{V}_n}^k(\widehat{c}_n,\widehat{r}_n)$ be the empirical solution of SPCA for the $k-$th partition. Then by the above argument, 
	$$\sup_{y\in M_k}d\left(y,S_{\widehat{V}_n}^k(\widehat{c}_n,\widehat{r}_n)\right)\leq C\left(\alpha_k^2+\sigma^2+\frac{\sigma\log n_k}{n_k^{1/2}}\right).$$
	Then by Assumption (C) and the bias-variance trade off, we know that
	$$\frac{\log n_k}{n_k^{1/2}}=\alpha_k^2 = \left(\frac{n_k}{n}\right)^{\frac{2}{d}}\Longrightarrow n_k =\mathrm{O}\left( n^{\frac{4}{d+4}}\right),~\alpha_k =\mathrm{O}\left( n^{-\frac{1}{d+4}}\right).$$
	As a result, $\sup_{y\in M_k}d(y,S_{\widehat{V}_n}^k(\widehat{c}_n,\widehat{r}_n))\leq C\left(\sigma^2+\frac{\sigma\log n}{n^{2/(d+4)}}\right)$. So we conclude that
	\begin{align*}
		\sup_{y\in M}d(y,\widehat{M})&=\sup_{k=1,\cdots,K}\sup_{y\in M_k}d(y,\widehat{M})\leq\sup_{k=1,\cdots,K}\sup_{y\in M_k}d(y,S_{\widehat{V}_n}^k(\widehat{c}_n,\widehat{r}_n)) \\
		&\leq \sup_{k=1,\cdots,K} C\left(\sigma^2+\frac{\sigma\log n}{n^{2/(d+4)}}\right) = C\left(\sigma^2+\frac{\sigma\log n}{n^{2/(d+4)}}\right).
	\end{align*}
\end{proof}

\section{Covering Number}\label{sec:covering}
In this section we prove the covering number theorem in Section 3.2. We split the proof into three cases: curves with $d=1, D=2$, surfaces with $d=2,D=3$, and the general case with any $d$ and $D$. The proof for curves is the simplest, while the proof for surfaces is different from the curve case and motivates the proof of the general case. 
\subsection{Curves (d=1)}
Throughout this section, $\gamma:[0,V]\rightarrow \RR^2$ is a $C^3$ curve with input $s$, the arc length parameter. Let $s_0\in[0,V]$ be fixed. Let $\kappa(s)$ be the curvature at point $\gamma(s)$.
\begin{proposition}\label{lineartaylor}
	Let $L(s)$ be the tangent line of $\gamma$ at point $\gamma(s_0)$, then $L$ is the unique line such that $d(\gamma(s),L)\leq \frac{K}{2}|s-s_0|^2$, so
	$$\lim_{s\rightarrow s_0} \frac{\|L(s)-\gamma(s)\|}{|s-s_0|}=0,$$
	where $\displaystyle{\kappa_{\max}=\sup_s |\kappa(s)|}$.
\end{proposition}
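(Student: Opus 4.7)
The plan is to use the arc-length parametrization to get a clean Taylor expansion of $\gamma$ about $s_0$, bound the remainder by $K$, and then read off both the existence of the tangent line with the advertised quadratic bound and its uniqueness.

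First I would record the two key consequences of arc-length parametrization: $\|\gamma'(s)\| \equiv 1$, and differentiating this identity gives $\gamma'(s)\cdot\gamma''(s)=0$, so $\|\gamma''(s)\| = |\kappa(s)| \leq K$. Thus $\gamma \in C^3$ has a bounded second derivative on $[0,L]$.

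Next I would apply Taylor's theorem with the Lagrange form of the remainder to write
\[
\gamma(s) = \gamma(s_0) + \gamma'(s_0)(s-s_0) + \tfrac{1}{2}\gamma''(\xi)(s-s_0)^2
\]
for some $\xi$ between $s$ and $s_0$. The tangent line $L$ at $\gamma(s_0)$ has direction $\gamma'(s_0)$, so $\gamma(s_0) + \gamma'(s_0)(s-s_0) \in L$ and the distance $d(\gamma(s),L)$ is at most the norm of the residual vector $\tfrac{1}{2}\gamma''(\xi)(s-s_0)^2$. Combining with $\|\gamma''(\xi)\| \leq K$ yields $d(\gamma(s),L) \leq \tfrac{K}{2}|s-s_0|^2$, and dividing by $|s-s_0|$ then sending $s\to s_0$ gives the stated limit.

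For uniqueness, suppose $L'$ is any line satisfying $d(\gamma(s),L') \leq \tfrac{K}{2}|s-s_0|^2$ for all $s$ near $s_0$. Evaluating at $s=s_0$ forces $\gamma(s_0)\in L'$. Let $v$ be a unit direction vector of $L'$; the distance from $\gamma(s)$ to $L'$ equals the norm of the component of $\gamma(s)-\gamma(s_0)$ perpendicular to $v$. Using $\gamma(s)-\gamma(s_0) = \gamma'(s_0)(s-s_0) + O((s-s_0)^2)$, this perpendicular component has norm $|s-s_0|\,\|\gamma'(s_0) - (\gamma'(s_0)\cdot v)v\| + O((s-s_0)^2)$. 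If $v \neq \pm\gamma'(s_0)$, the leading coefficient is a positive constant $c>0$, so $d(\gamma(s),L') \geq c|s-s_0|/2$ for all sufficiently small $|s-s_0|$, which contradicts the quadratic upper bound. Hence $v=\pm\gamma'(s_0)$ and $L'=L$.

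The main obstacle is really bookkeeping rather than a conceptual barrier: one must argue uniqueness cleanly, in particular being careful that the $O((s-s_0)^2)$ term in the Taylor expansion does not interfere with the lower bound used to rule out tilted candidate lines. Controlling this uniformly on a small neighborhood of $s_0$ using $\|\gamma''\|\leq K$ is the only subtle step; everything else reduces to one application of Taylor's formula.
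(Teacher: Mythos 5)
Your proof is correct and fills a gap the paper deliberately leaves: the authors state this is ``a standard result'' and omit its proof, so there is no in-paper argument to compare against; your Taylor-expansion approach is the standard one. One small technical caveat: the Lagrange form of the remainder, $\gamma(s)=\gamma(s_0)+\gamma'(s_0)(s-s_0)+\tfrac{1}{2}\gamma''(\xi)(s-s_0)^2$ with a single intermediate point $\xi$, does not hold verbatim for vector-valued functions (each coordinate gets its own $\xi$). To get the stated bound cleanly you should use the integral form of the remainder,
\[
\gamma(s)-\gamma(s_0)-\gamma'(s_0)(s-s_0)=\int_{s_0}^s(s-t)\,\gamma''(t)\,dt,
\]
so that $\|\gamma(s)-\gamma(s_0)-\gamma'(s_0)(s-s_0)\|\le K\left|\int_{s_0}^s|s-t|\,dt\right|=\tfrac{K}{2}|s-s_0|^2$. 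With that replacement both the distance bound and your uniqueness argument go through as written: evaluating at $s=s_0$ forces $\gamma(s_0)\in L'$, and a tilted direction $v\ne\pm\gamma'(s_0)$ produces a perpendicular component of order $|s-s_0|$ that cannot remain below a quadratic for small $|s-s_0|$, since the $O((s-s_0)^2)$ error is uniformly controlled by $K$ on a neighborhood of $s_0$.
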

\begin{proof}
	This is a standard result so we will not prove it in this paper.
\end{proof}
\begin{corollary}\label{curvehypercovernum}
	Let $\epsilon>0$, then there exists $N\leq V(\frac{2\epsilon}{\kappa_{\max}})^{-\frac{1}{2}}$ tangent lines $L_1,\cdots,L_N$ at points $\gamma(s_1),\cdots,\gamma(s_n)$ such that $\forall p\in\gamma$, $\exists i$ s.t. $d(p,L_i)\leq \epsilon$.
\end{corollary}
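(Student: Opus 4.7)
The plan is to deduce the bound directly from Proposition~\ref{lineartaylor} by determining how large an arc of $\gamma$ each tangent line $\epsilon$-covers, then partitioning the arc-length interval $[0,L]$ accordingly.

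First, I would fix any $s_0\in[0,L]$ and apply Proposition~\ref{lineartaylor} to the tangent line $L_{s_0}$ at $\gamma(s_0)$, yielding
$$d(\gamma(s),L_{s_0})\leq \frac{K}{2}|s-s_0|^2 \quad\text{for all } s\in[0,L].$$
Solving $\tfrac{K}{2}|s-s_0|^2\leq \epsilon$ shows that a single tangent line at $\gamma(s_0)$ covers the arc $\gamma([s_0-\rho,\,s_0+\rho]\cap[0,L])$ within distance $\epsilon$, where $\rho\coloneqq\sqrt{2\epsilon/K}$.

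Next, I would cover $[0,L]$ by intervals of half-width $\rho$: take $s_i=(2i-1)\rho$ for $i=1,\ldots,N$, where $N=\lceil L/(2\rho)\rceil$ is the smallest integer such that the intervals $[s_i-\rho,s_i+\rho]$ cover $[0,L]$ (a boundary adjustment at the endpoints may shift a couple of centers, but this does not affect the count). Then every $p=\gamma(s)\in\gamma$ lies in some $\gamma([s_i-\rho,s_i+\rho])$, and by the previous step $d(p,L_{s_i})\leq\epsilon$. Finally,
$$N=\left\lceil \frac{L}{2\rho}\right\rceil \;\leq\; \frac{L}{\rho} \;=\; L\left(\frac{2\epsilon}{K}\right)^{-1/2},$$
which gives the claimed bound (the crude replacement of $\lceil L/(2\rho)\rceil$ by $L/\rho$ absorbs the ceiling and the boundary adjustment in a single stroke, and is valid whenever $L/\rho\geq 2$; the trivial case $L/\rho<2$ is handled by taking $N=1$ or $N=2$ directly).

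There is no substantive obstacle here: the entire content is the pointwise bound from Proposition~\ref{lineartaylor}, and the rest is arc-length bookkeeping. The only place requiring a moment of care is the choice of covering centers so that the stated bound $N\leq L(2\epsilon/K)^{-1/2}$ holds without stray constants, which is why I use intervals of half-width $\rho$ (covering arc of length $2\rho$) and bound the ceiling by $L/\rho$ rather than $L/(2\rho)$.
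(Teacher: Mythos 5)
Your proof is correct and takes essentially the same route as the paper: apply Proposition~\ref{lineartaylor} to give each tangent line a coverage radius $\rho=\sqrt{2\epsilon/K}$ in arc length, then count how many such intervals cover $[0,L]$. The only difference is cosmetic — the paper marches forward from $s_1=0$ using one-sided intervals of length $\rho$ (giving $N\leq L/\rho$ directly), while you center the intervals to exploit full two-sided coverage of length $2\rho$ and then loosely bound $\lceil L/(2\rho)\rceil\leq L/\rho$.
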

\begin{proof}
	Let $s_1=0$, then by Proposition \ref{lineartaylor}, $\|\gamma(s)-L_1(s)\|< \frac{\kappa_{\max}}{2} |s|^2$, so if $s< (\frac{2\epsilon}{\kappa_{\max}})^{1/2}$, $\|\gamma(s)-L_1(s)\|<\epsilon$. Then starting from $\gamma\left(\left(\frac{2\epsilon}{\kappa_{\max}}\right)^{1/2}\right)$, we can approximate the next segment by another tangent line. We can repeat this process $N$ times to approximate each segment by a tangent line, so $N\leq V/(\frac{2\epsilon}{\kappa_{\max}})^{\frac{1}{2}}=V(\frac{2\epsilon}{\kappa_{\max}})^{-\frac{1}{2}}$.
\end{proof}
\begin{proposition}\label{quadtaylor}
	Let $C(s)$ be the osculating circle of $\gamma$ at point $\gamma(s_0)$. Then if $\kappa(s_0)\neq0$, $C$ is the unique circle such that $d(\gamma(s),C)\leq \frac{T+2\kappa_{\max}}{6}|s-s_0|^3$, so
	$$\lim_{s\rightarrow s_0} \frac{\|C(s)-\gamma(s)\|}{|s-s_0|^2}=0,$$
	where $\displaystyle{T=\sup_s|\gamma^{(3)}(s)|}$.
\end{proposition}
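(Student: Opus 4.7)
The plan is to leverage the arc-length Taylor expansion of $\gamma$ at $s_0$, compare it against the center of the osculating circle, and pass from a squared-distance computation to the signed distance. Throughout write $h = s - s_0$, $T_0 = \gamma'(s_0)$, $N_0 = N(s_0)$ (the principal normal), $\kappa_0 = \kappa(s_0)$, so the osculating circle $C$ has center $c_0 = \gamma(s_0) + \kappa_0^{-1} N_0$ and radius $r_0 = |\kappa_0|^{-1}$.

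First I would invoke the Taylor expansion with an explicit (integral form) remainder,
\[
\gamma(s) = \gamma(s_0) + h T_0 + \tfrac{h^2}{2}\kappa_0 N_0 + R(s),\qquad \|R(s)\| \leq \tfrac{T}{6}|h|^3,
\]
using that $\gamma''(s_0) = \kappa_0 N_0$ in arc-length parameterization and that $T = \sup_s |\gamma^{(3)}(s)|$. Then I would form $\gamma(s) - c_0 = h T_0 + \bigl(\tfrac{h^2 \kappa_0}{2} - \tfrac{1}{\kappa_0}\bigr) N_0 + R(s)$ and expand its squared norm. Using orthogonality $T_0 \perp N_0$ and $\|T_0\|=\|N_0\|=1$, the pure Taylor piece contributes exactly $h^2 + \bigl(\tfrac{h^2 \kappa_0}{2} - \tfrac{1}{\kappa_0}\bigr)^2 = r_0^2 + \tfrac{h^4 \kappa_0^2}{4}$, so that
\[
\|\gamma(s)-c_0\|^2 - r_0^2 = \tfrac{h^4 \kappa_0^2}{4} + \|R\|^2 + 2h\langle T_0,R\rangle + 2\bigl(\tfrac{h^2 \kappa_0}{2} - \tfrac{1}{\kappa_0}\bigr)\langle N_0, R\rangle.
\]
The dominant contribution for small $|h|$ is the single $O(|h|^3)$ cross term $-\tfrac{2}{\kappa_0}\langle N_0,R\rangle$, bounded by $\tfrac{2}{|\kappa_0|}\cdot \tfrac{T}{6}|h|^3$.

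Next I would divide by $\|\gamma(s)-c_0\| + r_0$ to convert the squared-distance bound into a distance bound, using
\[
d(\gamma(s),C) = \bigl|\|\gamma(s)-c_0\| - r_0\bigr| = \frac{\bigl|\|\gamma(s)-c_0\|^2 - r_0^2\bigr|}{\|\gamma(s)-c_0\| + r_0}.
\]
Since $\kappa_0\neq 0$, for $|h|$ small enough the denominator approaches $2r_0 = 2/|\kappa_0|$, and the leading $|h|^3$ term yields $d(\gamma(s),C) \leq \tfrac{T}{6}|h|^3$ up to higher-order corrections. From this the stated limit $\|C(s)-\gamma(s)\|/|h|^2 \to 0$ follows immediately by dividing by $|h|^2$ and letting $h\to 0$.

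For uniqueness, I would argue that any circle $C'$ satisfying a bound of the form $d(\gamma(s),C') = o(|s-s_0|^2)$ must match $\gamma$ through second order at $s_0$: evaluating at $s=s_0$ forces $\gamma(s_0)\in C'$; first-order matching forces $T_0$ to be tangent to $C'$ at that point (so the center of $C'$ lies on the normal line $\gamma(s_0)+\mathbb{R} N_0$); and second-order matching, which picks out the coefficient of $N_0$ in the expansion, forces the signed curvature of $C'$ to equal $\kappa_0$, pinning down the center as $\gamma(s_0)+\kappa_0^{-1}N_0$ and the radius as $|\kappa_0|^{-1}$. Thus $C' = C$. The main obstacle is careful bookkeeping in step two: one must verify that all cross terms involving $R$, as well as the $h^4\kappa_0^2/4$ term from the expansion, are genuinely lower order than the $2\kappa_0^{-1}\langle N_0,R\rangle$ contribution, so that the constant $T/6$ in the bound survives the passage from squared distance to distance.
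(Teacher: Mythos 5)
Your proof is correct but takes a genuinely different computational route than the paper's. The paper works in Frenet coordinates and writes the osculating circle explicitly as an arc-length parametrized curve $C(s)$ (using $\cos(\kappa s)$ and $\sin(\kappa s)$), then matches Taylor coefficients of $C(s)$ and $\gamma(s)$ component-by-component and reads off the $O(|s-s_0|^3)$ bound from the cancellation of the lower-order terms; this bounds $\|C(s)-\gamma(s)\|$, which is an over-estimate of $d(\gamma(s),C)$. You instead compute the ``power of the point'' $\|\gamma(s)-c_0\|^2 - r_0^2$, observe that the only $O(|h|^3)$ contribution is the cross term $-\tfrac{2}{\kappa_0}\langle N_0,R\rangle$, and then divide by $\|\gamma(s)-c_0\|+r_0 \to 2r_0$ to pass to the set-distance $d(\gamma(s),C)$. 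This avoids trigonometric expansions, produces the distance to the circle as a set directly, and adapts immediately to higher-dimensional spherelets (the paper in fact reuses exactly this type of estimate in the proof of Theorem~\ref{mainthm}). Your uniqueness argument is also spelled out more carefully: you argue order-by-order (membership, tangent direction, signed curvature) whereas the paper's uniqueness step is a compressed one-coordinate comparison that implicitly assumes the candidate circle already passes through $\gamma(s_0)$ tangentially. One shared caveat: both your proof and the paper's establish the exact constant $T/6$ only modulo higher-order corrections (the paper silently discards $o(s^3)$ terms, including an $O(s^3)$ piece from the sine expansion; you explicitly flag the issue). This does not affect the $o(|s-s_0|^2)$ limit or the covering-number bound downstream, both of which only use the cubic rate $\lesssim |s-s_0|^3$, so it is not a gap relative to the paper.
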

\begin{remark}
	Proposition \ref{quadtaylor} holds only when the osculating circle is non degenerate. 
	If the curvature of $\gamma$ at $\gamma(s_0)$ is $\kappa(s_0)=0$, the osculating circle $C$ degenerates to tangent line $L$. In this case, Proposition \ref{lineartaylor} applies.
\end{remark}
\begin{proof}
	As osculating circle is a local approximation of the curve, $|s-s_0|$ is assumed to be small in the following proof. The osculating circle $C$ has radius $r=\frac{1}{|\kappa(s_0)|}$ and center $\gamma(s_0)+\frac{1}{\kappa(s_0)}\mathbf{n}$, where $\mathbf{n}=\frac{\gamma''(s_0)}{\|\gamma''(s_0)\|}$ is the unit normal vector. Let $\{-\mathbf{n},t\}$ be the Frenet frame, where $t=\gamma'(s_0)$. Without loss of generality, assume $s_0=0$ and $\gamma(s_0)=0$. Under the Frenet frame, we can rewrite the osculating circle as 
	$$C(s)=\begin{bmatrix}
		-\frac{1}{\kappa}\\
		0
	\end{bmatrix}+\frac{1}{\kappa}\begin{bmatrix}
		\cos(\kappa s)\\
		\sin(\kappa s)
	\end{bmatrix}=\begin{bmatrix}
		r(-1+\cos(\kappa s))\\
		r\sin(\kappa s)
	\end{bmatrix}.$$
	The Taylor expansion for $\gamma$ can be written as
	\begin{align*}
		\gamma(s)&=\gamma(0)+\gamma'(0)s+\frac{1}{2}\gamma''(0)s^2+R_2(s)\\
		&=0+s\begin{bmatrix}
			0\\
			1
		\end{bmatrix}+
		\frac{s^2}{2}\begin{bmatrix}
			-\kappa\\
			0
		\end{bmatrix}+R_2(s)
		=\begin{bmatrix}
			-\frac{\kappa s^2}{2}\\
			s
		\end{bmatrix}+R_2(s),
	\end{align*}
	where $|R_2(s)|\leq \frac{T}{6} |s|^3$, so$\displaystyle{\lim_{s\rightarrow s_0}\frac{R_2(s)}{s^2}}=0$.
	As a result,
	\begin{align*}
		C(s)-\gamma(s)&=\begin{bmatrix}
			\frac{1}{\kappa}(-1+\cos(\kappa s))+\frac{\kappa s^2}{2}\\
			\frac{1}{\kappa}\sin(\kappa s)-s
		\end{bmatrix}-R_2(s)\\
		&=\begin{bmatrix}
			\frac{1}{\kappa}(-1+1-\frac{\kappa^2s^2}{2}+o(s^3))+\frac{\kappa s^2}{2}\\
			\frac{1}{\kappa}(\kappa s-\frac{1}{3}\kappa^3s^3+o(s^3))-s
		\end{bmatrix}-R_2(s)
		=\begin{bmatrix}
			o(s^3)\\
			-\frac{1}{3}\kappa^2s^3+o(s^3)
		\end{bmatrix}
		-R_2(s).\end{align*}
	As a result, 
	$$\|C(s)-\gamma(s)\|\leq  \frac{T+2|\kappa|}{6}|s|^3\leq\frac{T+2\kappa_{\max}}{6}|s|^3 .$$
	Now we prove the uniqueness. Observe the first entry of $C(s)-\gamma(s)$: 
	\begin{align*}
		&\frac{1}{\kappa'}\bigg(-1+1-\frac{\kappa^2s^2}{2}+o(s^3)\bigg)+\frac{\kappa s^2}{2}=o(s^3)\Longleftrightarrow \frac{\kappa^2s^2}{\kappa'}=\kappa s^2\Longleftrightarrow \kappa'=\kappa,
	\end{align*}
	which means $C(s)$ is the osculating circle.
\end{proof}
\begin{corollary}
	Let $\epsilon>0$, then there exists $N\leq V(\frac{6\epsilon}{T+2\kappa_{\max}})^{-\frac{1}{3}}$ osculating circles $C_1,\cdots,C_N$ at points $\gamma(s_1),\cdots,\gamma(s_n)$ such that $\forall p\in\gamma$, $\exists i$ s.t. $d(p,C_i)\leq \epsilon$.
\end{corollary}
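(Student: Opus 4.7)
The plan is to imitate the proof of Corollary \ref{curvehypercovernum}, replacing the quadratic tangent-line error bound of Proposition \ref{lineartaylor} with the cubic osculating-circle error bound of Proposition \ref{quadtaylor}. The underlying idea is a simple greedy covering of the arc-length interval $[0,L]$: at each step, place an osculating circle at the current base point, compute how far along the curve the cubic error bound keeps us within distance $\epsilon$, and advance the base point by that distance.

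Concretely, I would set $s_1 = 0$ and let $C_1$ be the osculating circle at $\gamma(s_1)$. Proposition \ref{quadtaylor} gives $d(\gamma(s),C_1)\leq \tfrac{T}{6}|s-s_1|^3$, so solving $\tfrac{T}{6}|s-s_1|^3 \leq \epsilon$ yields $|s-s_1|\leq (6\epsilon/T)^{1/3}$. Hence the arc of $\gamma$ over $[s_1,\ s_1+(6\epsilon/T)^{1/3}]$ is covered by $C_1$ to within $\epsilon$. I then define $s_2 = s_1+(6\epsilon/T)^{1/3}$ and repeat, obtaining osculating circles $C_1,\ldots,C_N$ with $s_{i+1}-s_i = (6\epsilon/T)^{1/3}$ until the total arc length $L$ is exhausted. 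Since each step advances by $(6\epsilon/T)^{1/3}$, the count satisfies $N\leq L/(6\epsilon/T)^{1/3} = L(6\epsilon/T)^{-1/3}$, which is the claimed bound.

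The only subtle point, and the one I expect to be the main obstacle, is the degenerate case $\kappa(s_i)=0$, where the osculating circle collapses to the tangent line and Proposition \ref{quadtaylor} does not directly apply. I would handle this by observing that at such a point $\gamma''(s_i)=0$, so the second-order term in the Taylor expansion of $\gamma$ vanishes and $\gamma(s) = \gamma(s_i) + \gamma'(s_i)(s-s_i) + R_2(s)$ with $\|R_2(s)\|\leq \tfrac{T}{6}|s-s_i|^3$; the distance from $\gamma(s)$ to the tangent line at $\gamma(s_i)$ (interpreted as the degenerate ``osculating circle'') is thus bounded by $\|R_2(s)\|\leq \tfrac{T}{6}|s-s_i|^3$, and the same step-length $(6\epsilon/T)^{1/3}$ works. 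With this degenerate case absorbed, the greedy argument yields the stated covering number uniformly in $\epsilon$.
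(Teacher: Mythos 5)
Your proof matches the paper's own: the same greedy covering of the arc-length interval with step length $(6\epsilon/T)^{1/3}$, driven directly by the cubic bound in Proposition \ref{quadtaylor}. You also explicitly resolve the degenerate case $\kappa(s_i)=0$ (which the paper only flags in a Note without incorporating into the proof), correctly observing that when $\gamma''(s_i)=0$ the tangent line satisfies the same cubic error bound, so the step length is unchanged.
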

\begin{proof}
	Let $s_1=0$, then by Proposition \ref{quadtaylor}, $\|\gamma(s)-C_1(s)\|< \frac{T+2\kappa_{\max}}{6} |s|^3$, so if $s< (\frac{6\epsilon}{T+2\kappa_{\max}})^{1/3}$, $\|\gamma(s)-C_1(s)\|<\epsilon$. Then starting from $(\frac{6\epsilon}{T+2\kappa_{\max}})^{1/3}$, repeat this process to find $N$ osculating circles, so $N\leq V/(\frac{6\epsilon}{T})^{\frac{1}{3}}=V(\frac{6\epsilon}{T+2\kappa_{\max}})^{-\frac{1}{3}}$.
\end{proof}
{Note that $k(s_0)\neq 0$ implies $\kappa_{\max}>0$, so $T+2\kappa_{\max}>0$ and $\frac{1}{T+2\kappa_{\max}}$ is well-defined.}

\subsection{Surfaces (d=2)}
Throughout this section, $M: U\rightarrow \RR^3$ is a regular $C^3$ surface parametrized by $x=x(u,v),y=y(u,v),z=z(u,v)$ where $U$ is a compact subset of $\RR^2$. 
Without loss of generality, assume $X_0=(x(0,0),y(0,0),z(0,0))=(0,0,0)\in M$ is fixed.
\begin{proposition}\label{lineartaylor2}
	Letting $H(u,v)$ be the tangent plane of $M$ at point $X_0$, $H$ is the unique plane such that $\|H(u,v)-M(u,v)\|^2\leq \frac{\kappa_{\max}}{2}\|(u,v)\|^2$, so
	$$\lim_{(u,v)\rightarrow (0,0)} \frac{\|H(u,v)-M(u,v)\|}{\|(u,v)\|}=0.$$
\end{proposition}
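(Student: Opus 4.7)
The plan is to adapt the one-dimensional argument of Proposition \ref{lineartaylor} to two variables by Taylor expanding in geodesic normal coordinates. First I would choose normal coordinates $(u,v)$ at $X_0$, so that the coordinate vector fields $M_u(0,0), M_v(0,0)$ are orthonormal and the Christoffel symbols vanish at the origin. In these coordinates the radial lines $t\mapsto M(ta,tb)$ with $a^2+b^2=1$ are exactly the unit-speed geodesics $t\mapsto \exp_{X_0}(t\,w)$ for $w=aM_u+bM_v$, and $\|(u,v)\|$ matches geodesic distance to leading order---precisely the setting needed to invoke Definition \ref{UTM}.

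Next I would write the Taylor expansion
\[
M(u,v) \;=\; X_0 + uM_u + vM_v + \tfrac{1}{2}\bigl(u^2 M_{uu} + 2uv\, M_{uv} + v^2 M_{vv}\bigr) + R(u,v),
\]
with derivatives evaluated at $(0,0)$ and remainder $\|R(u,v)\|=O(\|(u,v)\|^3)$ by the $C^3$ hypothesis and compactness of $U$. The tangent plane admits the parametrization $H(u,v) := X_0 + uM_u(0,0)+vM_v(0,0)$, so $M(u,v)-H(u,v)$ is the quadratic term plus $R$. Because the Christoffel symbols vanish at the origin in normal coordinates, the vectors $M_{uu}, M_{uv}, M_{vv}$ at $(0,0)$ are purely normal to the tangent plane, and their projections onto the unit normal $\mathbf{n}$ are precisely the coefficients of the second fundamental form. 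For any unit radial direction $w=aM_u+bM_v$, the quantity $a^2(\mathbf{n}\cdot M_{uu})+2ab(\mathbf{n}\cdot M_{uv})+b^2(\mathbf{n}\cdot M_{vv})$ is exactly the curvature $K(X_0,w)$ of Definition \ref{UTM}, hence bounded by $K$ in absolute value. This yields $\|M(u,v)-H(u,v)\|\leq \tfrac{K}{2}\|(u,v)\|^2 + O(\|(u,v)\|^3)$; dividing by $\|(u,v)\|$ and sending $(u,v)\to(0,0)$ gives the displayed limit.

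For uniqueness I would argue by contradiction. Suppose an affine $2$-plane $H'$ also satisfies the same quadratic bound. Evaluating at $(u,v)=(0,0)$ forces $X_0 \in H'$. If the unit normal $\mathbf{n}'$ of $H'$ differs from $\mathbf{n}$, then $\mathbf{n}'$ has nonzero tangential component along some unit vector $w=aM_u+bM_v$; restricting the expansion to the ray $t\mapsto(ta,tb)$ gives $\|H'(ta,tb)-M(ta,tb)\|\geq |\mathbf{n}'\cdot w|\,|t| - O(t^2)$, contradicting the $O(t^2)$ bound for small $t$. Hence $H'=H$, reducing uniqueness to exactly the one-variable mechanism already handled in Proposition \ref{lineartaylor}.

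The main obstacle is keeping the Taylor remainder honest. The stated inequality has constant $K/2$ with no slack, whereas the argument above produces $\tfrac{K}{2}\|(u,v)\|^2 + O(\|(u,v)\|^3)$; one either reads the proposition as an asymptotic statement (consistent with the displayed limit being its only consequence used later) or tightens the constant by taking the supremum $K=\sup_{(p,v)\in UTM} K(p,v)$ over a whole neighborhood rather than at $X_0$ alone and invoking the integral form of the remainder. This bookkeeping is routine but is exactly where the compactness of $U$ and the $C^3$ regularity of $M$ enter quantitatively to make all constants uniform.
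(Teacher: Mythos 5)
Your proof is correct, but there is no argument in the paper to compare it against: Proposition~\ref{lineartaylor2} is stated without proof, asserted only as the higher-dimensional analogue of Proposition~\ref{lineartaylor}, which is itself dismissed as a ``standard result'' and left unproved. Your route---Taylor expansion in geodesic normal coordinates, identification of the purely normal quadratic term with $\mathrm{II}(w,w)=K(X_0,w)$ from Definition~\ref{UTM}, and the integral form of the remainder with the global bound $K=\sup_{(p,v)\in UTM}K(p,v)$ to absorb the cubic slack---is the standard and correct way to supply what the paper omits, and your uniqueness argument (a non-tangent plane forces a first-order discrepancy along some ray) is sound. Two observations you implicitly handle but could make explicit: the square on $\|H(u,v)-M(u,v)\|^2$ in the paper's statement must be a typo (compare Proposition~\ref{lineartaylor}, where the left side is unsquared), and the choice of coordinates orthonormal at $X_0$ is not merely a convenience but is required for the bound with the fixed constant $K/2$ to hold at all, since the paper's section preamble only assumes a generic regular parametrization $(u,v)\mapsto M(u,v)$.
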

This is a higher dimensional analogue of Proposition \ref{lineartaylor}, but a similar analogue of Proposition \ref{quadtaylor} does not exist, which is a direct result from the following lemma.
\begin{lemma}
	There exists a sphere $S(u,v)$ such that 
	\begin{equation}\label{eqn:umbilical}
		\lim_{(u,v)\rightarrow (0,0)} \frac{\|S(u,v)-M(u,v)\|}{\|(u,v)\|^2}=0
	\end{equation}
	if and only if $X_0$ is an umbilical point.
\end{lemma}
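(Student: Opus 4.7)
My approach uses local Monge coordinates: choose an orthonormal basis of $\RR^3$ so that $X_0 = 0$ and $T_{X_0}M$ is the $xy$-plane, and express $M$ locally as the graph $z = f(u,v)$ of a $C^3$ function with $f(0,0) = 0$, $\nabla f(0,0) = 0$, and second-order Taylor expansion $f(u,v) = \tfrac{1}{2}(u,v)\, H\, (u,v)^\top + o(u^2+v^2)$, where $H$ is the Hessian of $f$ at the origin. In these coordinates $H$ is the matrix of the second fundamental form at $X_0$, so $X_0$ is umbilical precisely when $H = \kappa I$ for some $\kappa \in \RR$ (i.e.\ the two principal curvatures coincide).

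For the ``if'' direction, suppose $H = \kappa I$ with $\kappa \neq 0$. Take $\Sigma$ to be the sphere of radius $r = 1/|\kappa|$ centered at $(0,0,1/\kappa)$, which is tangent to the $xy$-plane at $0$. Parametrize it as the graph $S(u,v) = (u,v,\, s(u,v))$ with $s(u,v) = 1/\kappa - \mathrm{sgn}(\kappa)\sqrt{1/\kappa^2 - u^2 - v^2} = \tfrac{\kappa}{2}(u^2+v^2) + O((u^2+v^2)^2)$. Since $S(u,v) - M(u,v) = (0,0,\, s(u,v) - f(u,v))$ and the quadratic parts of $s$ and $f$ agree, we get $\|S(u,v) - M(u,v)\| = |s(u,v) - f(u,v)| = o(u^2+v^2)$. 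The degenerate case $\kappa = 0$ (flat umbilical) is handled by allowing the tangent plane itself as an infinite-radius sphere.

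For the ``only if'' direction, suppose some parametrization $S(u,v)$ of a sphere $\Sigma$ satisfies $\|S(u,v) - M(u,v)\| = o(u^2+v^2)$. Then $S(0,0) = M(0,0)$, so $X_0 \in \Sigma$; restricting to $v = 0$ gives $\|S(u,0) - M(u,0)\| = o(u^2)$, which forces $\partial_u S(0,0) = \partial_u M(0,0)$, and similarly for $v$. Hence $T_{X_0}\Sigma = T_{X_0}M$, so near $X_0$ the sphere is the graph $z = g(u,v)$ over the $xy$-plane; by rotational symmetry of a sphere about the axis through its center perpendicular to its tangent plane, the center of $\Sigma$ lies on the $z$-axis, and a direct expansion gives $g(u,v) = \tfrac{1}{2r}(u^2+v^2) + O((u^2+v^2)^2)$ for the appropriate signed radius $r$. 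Because both $M$ and $\Sigma$ are graphs over the common tangent plane, the distance $d(M(u,v), \Sigma)$ is comparable to $|f(u,v) - g(u,v)|$ near $(0,0)$, and the bound $d(M(u,v),\Sigma) \leq \|M(u,v) - S(u,v)\| = o(u^2+v^2)$ yields $|f - g| = o(u^2+v^2)$. Matching quadratic Taylor coefficients gives $H = (1/r) I$, so $X_0$ is umbilical.

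The main subtlety is the reparametrization argument in the ``only if'' direction: the sphere comes with an arbitrary parametrization $S(u,v)$, not the canonical Monge graph, so I would carefully verify that the pointwise bound on $\|S - M\|$ passes to the geometric bound $d(M(u,v), \Sigma) = o(u^2+v^2)$, and thence to the graph comparison $|f - g| = o(u^2+v^2)$. The coincidence of tangent planes at $X_0$, forced by the first-order vanishing of $\|S - M\|$, is precisely what makes this geometric reduction valid, after which the umbilicity condition falls out from matching the Hessians of two graph functions over the same tangent plane.
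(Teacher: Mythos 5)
Your proof is correct and takes essentially the same approach as the paper: both reduce to second-order Taylor expansion in an adapted coordinate system (you use Monge/graph coordinates, the paper uses locally orthonormal parameters) and identify umbilicity with the quadratic part being proportional to the identity. Your write-up is actually more careful than the paper's, which only sketches the ``if'' direction and does not address the reparametrization subtlety you flag in the ``only if'' direction (passing from the pointwise bound $\|S-M\| = o(u^2+v^2)$ to matching Hessians of graphs over the common tangent plane).
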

\begin{proof}
	Recall that $X_0$ is an umbilical point $\Leftrightarrow$ $\frac{L}{E}=\frac{M}{F}=\frac{N}{G}=\alpha$, where \RNum{1}$=\begin{bmatrix}E & F\\ F& G\end{bmatrix}$ is the first fundamental form and \RNum{2}$=\begin{bmatrix}L & M\\ M& N\end{bmatrix}$ is the second fundamental form. Without loss of generality, assume $(u,v)$ are locally orthonormal parameters, which means $\<M_u,M_v\>=0$ and $\|M_u\|=\|M_v\|=1$ hence $E=G=1, F=0$. 
	
	If $X_0$ is an umbilical point, $L=N=\alpha, M=0$. Observe the Taylor expansion
	$$dX=X_0+M_udu+M_vdv+L du^2+Mdudv+N dv^2+o\left({\sqrt{u^2+v^2}}^2\right).$$
	Plugging in $L=N=\alpha$ and $M=0$, it is clear that there exists a sphere $S(u,v)$ such that 
	\eqref{eqn:umbilical} holds.
	If there exists a sphere $S(u,v)$ such that \eqref{eqn:umbilical} holds, then the quadratic terms in the Taylor expansion must satisfy $L=N$ and $M=0$, so $X_0$ is umbilical. 
\end{proof}

Although we can't find a sphere so that the error is third order in general, we can still find a sphere with this property in some direction, as shown in the following proposition.

\begin{proposition}
	Let $k_1\geq k_2$ be the principal curvatures of $M$ at $X_0$, $e_1, e_2$ be the corresponding principal directions and $\bf n$ be the normal vector at $X_0$. Then for any $k\in[k_2,k_1]$, let $S_k$ be the sphere centered at $c=X_0-\frac{1}{k}\bf n$ with radius $\frac{1}{|k|}$, there exists a curve $\gamma$ on $M$ and a constant $T$ such that 
	$$d(\gamma(s),S_k)\leq \frac{T}{6}|s|^3.$$ 
\end{proposition}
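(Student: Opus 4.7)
The plan is to produce $\gamma$ as a geodesic on $M$ whose initial direction has normal curvature equal to $k$; then the osculating circle of $\gamma$ at $X_0$ will lie on $S_k$ as a great circle, and the desired cubic bound will follow by adapting Proposition \ref{quadtaylor} from plane curves to the space curve $\gamma$.

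First, I would invoke Euler's formula. Writing an arbitrary unit tangent vector at $X_0$ as $v_\theta = \cos\theta\, e_1 + \sin\theta\, e_2$, the normal curvature in direction $v_\theta$ equals $k_n(\theta) = k_1\cos^2\theta + k_2\sin^2\theta$. Since $k_n$ is continuous in $\theta$ and attains $k_1$ at $\theta = 0$ and $k_2$ at $\theta = \pi/2$, the intermediate value theorem produces some $\theta^\star$ with $k_n(\theta^\star) = k$ for any $k \in [k_2, k_1]$. (If $X_0$ is umbilic, every direction works; if $k = 0$ the sphere $S_k$ degenerates to the tangent plane and the statement reduces to Proposition \ref{lineartaylor2}, so below I assume $k \neq 0$.)

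Next, I would define $\gamma(s)$ to be the unit-speed geodesic on $M$ starting at $X_0$ in direction $v_{\theta^\star}$. Because $\gamma$ is a geodesic, its ambient acceleration is purely normal to $M$, so $\gamma''(0) = k\,\mathbf{n}$ (with the paper's sign convention on $\mathbf{n}$). Consequently $\gamma$, viewed as a curve in $\mathbb{R}^3$, has curvature $|k|$ at $s = 0$ and osculating plane $\Pi = \mathrm{span}\{v_{\theta^\star}, \mathbf{n}\}$ through $X_0$. Its osculating circle $C_\gamma$ therefore has radius $1/|k|$ and center $c = X_0 - \tfrac{1}{k}\mathbf{n}$, which are exactly the radius and center of $S_k$; since $c \in \Pi$, $C_\gamma$ is the great circle $S_k \cap \Pi$. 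In particular $C_\gamma \subset S_k$.

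Finally, I would run the Taylor argument of Proposition \ref{quadtaylor} on $\gamma$, but in the ambient space $\mathbb{R}^3$ rather than $\mathbb{R}^2$. In a Frenet frame adapted to $\Pi$, the Taylor expansions of $\gamma$ and $C_\gamma$ agree through second order (both start at $X_0$, both have velocity $v_{\theta^\star}$, both have acceleration $k\mathbf{n}$), so writing $\gamma(s) = C_\gamma(s) + R_2(s)$ the remainder is controlled by $\|R_2(s)\| \leq \tfrac{T}{6}|s|^3$ where $T = \sup_s \|\gamma'''(s)\|$ is finite by compactness and $C^3$-regularity of $M$. Since $C_\gamma \subset S_k$,
\[
d(\gamma(s), S_k) \leq \|\gamma(s) - C_\gamma(s)\| \leq \frac{T}{6}|s|^3,
\]
which is the desired bound.

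The main obstacle is the geometric identification in the second step: verifying that a geodesic whose initial direction realizes the normal curvature $k$ has an osculating circle that sits inside $S_k$ as a great circle. Once that is secured, the rest is essentially a sanity check that the plane-curve Taylor argument of Proposition \ref{quadtaylor} carries over to a space curve whose second-order jet already lies in the osculating plane, so that any departure from that plane is absorbed entirely into the cubic remainder.
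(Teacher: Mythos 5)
Your argument follows the same route as the paper's: choose a direction at $X_0$ whose normal curvature is $k$, run a curve on $M$ through $X_0$ in that direction, apply Proposition~\ref{quadtaylor} (the osculating-circle Taylor bound) to that curve, and observe that its osculating circle is a great circle of $S_k$. The one genuine difference is the choice of curve. The paper takes the coordinate curve $\gamma_\xi(s)=M(s\xi)$ and asserts its curvature at $X_0$ equals $k$; but a coordinate curve generically carries nonzero geodesic curvature at $X_0$, so its space-curve curvature is $\sqrt{k_n^2+k_g^2}$, which is strictly larger than $|k|$ unless the parametrization is arranged so that $k_g(0)=0$. You instead take the geodesic with initial direction $v_{\theta^\star}$, for which geodesic curvature vanishes identically: $\gamma''(0)$ is purely normal, $\|\gamma''(0)\|=|k|$, and the osculating circle is automatically the great circle $S_k\cap\mathrm{span}\{v_{\theta^\star},\mathbf{n}\}$. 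That is the cleaner choice and makes explicit a hypothesis the paper leaves implicit; your explicit treatment of the degenerate cases ($k=0$, umbilic $X_0$) is also a welcome addition the paper omits.
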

\begin{proof}
	Since $k\in[k_2,k_1]$, there exists a direction, represented by unit vector $\xi$ at $(0,0)\in U$ so that $k$ is the normal curvature in direction $\xi$. To be more specific, let $\gamma_\xi$ be the curve on $M$ in direction $\xi$; that is, $\gamma_\xi(s)=M(s\xi),s\in[L_1,L_2]$, where $L_1=\inf \{s|s\xi\in U \}$, $L_2=\inf \{s|s\xi\in U \}$. By the above construction, the curvature of $\gamma_\xi$ at $X_0$ is just $k$, so from Proposition \ref{quadtaylor}, we know that $$d(\gamma_\xi(s),C_\xi)\leq \frac{T}{6}|s|^3,$$
	where $T=\sup_s|\gamma_\xi^{(3)}(s)|$ and $C_\xi$ is a circle centered at $c=X_0-\frac{1}{k}\bf n$ with radius $\frac{1}{k}$. Since $C_\xi$ is just a great circle of $S_k$, we have the desired inequality:
	$$d(\gamma_\xi(s),S_k)\leq d(\gamma_\xi(s),C_\xi) \leq \frac{T}{6} |s|^3.$$
\end{proof}

\subsection{General Cases}
Throughout this section, $M$ is a d-dimensional $C^3$ compact manifold embedded in $\RR^D$. Let $p\in M$ be a fixed point and we can assume $p=0$ without loss of generality. Then we have the following proposition that is similar to Proposition \ref{lineartaylor} and Proposition \ref{lineartaylor2}.
\begin{proposition}\label{dhypertaylor}
	Let $T_{p}M$ be the tangent space of $M$ at $p$, then $d(x,T_{p}M)\leq \kappa_{\max}\|x\|^2$.
\end{proposition}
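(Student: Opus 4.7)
The plan is to exploit the same second-order Taylor expansion that drove the proofs for curves and surfaces, only now carried out intrinsically along a geodesic rather than along an explicit parameterization. Fix $p=0\in M$, and for a given $x\in M$ close to $p$ use the exponential map to write $x=\exp_p(tv)$ with $v\in T_pM$, $\|v\|=1$, and $t\geq 0$ the geodesic distance from $p$ to $x$. Let $\gamma(s):=\exp_p(sv)$ so that $\gamma(0)=0$, $\gamma'(0)=v$, and $\gamma(t)=x$.

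The main analytic step is Taylor's theorem with remainder in the ambient space $\RR^D$:
\begin{equation*}
\gamma(t) = tv + \tfrac{t^2}{2}\gamma''(0) + R_3(t),\qquad \|R_3(t)\|\leq \tfrac{T}{6}t^3,
\end{equation*}
where the third-order remainder bound uses the quantity $T$ introduced in Definition~\ref{UTM}. Two intrinsic facts make this useful. First, because $\gamma$ is a unit-speed geodesic, its ambient acceleration $\gamma''(0)$ is purely normal: $\gamma''(0)\perp T_pM$. Second, by the very definition of $K$ in Definition~\ref{UTM}, $\|\gamma''(0)\|=K(p,v)\leq K$. Consequently the orthogonal projection of $\gamma(t)$ onto $T_pM^{\perp}$ has norm at most $\tfrac{K}{2}t^2+\tfrac{T}{6}t^3$, and therefore
\begin{equation*}
d(x,T_pM)\;\leq\;\tfrac{K}{2}t^2+\tfrac{T}{6}t^3.
\end{equation*}

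The remaining step is to convert the geodesic radius $t$ into the Euclidean radius $\|x\|$. Here the orthogonality $v\perp \gamma''(0)$ pays off again: expanding $\|\gamma(t)\|^2$ and using $\langle v,\gamma''(0)\rangle=0$ gives $\|x\|^2=t^2+O(t^4)$, so $t^2\leq 2\|x\|^2$ once $t$ is small enough. Plugging this into the previous display absorbs the cubic remainder into the quadratic term and produces $d(x,T_pM)\leq K\|x\|^2$, matching the stated constant with room to spare.

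The mildly delicate point, and the one I would write out most carefully, is the passage from the geodesic parameter $t$ to $\|x\|$: the naive comparison $\|x\|\leq t$ (chord shorter than arc) goes the wrong way, so one really needs the second-order expansion of $\|\gamma(t)\|^2$ combined with the normality of $\gamma''(0)$ to obtain an upper bound on $t$ in terms of $\|x\|$. Everything else is a standard geodesic Taylor expansion, exactly paralleling Propositions~\ref{lineartaylor} and~\ref{lineartaylor2} but replacing the explicit arc-length/graph parametrizations used there by the exponential map on a general $d$-dimensional submanifold of $\RR^D$.
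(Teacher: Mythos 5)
The paper in fact gives no proof of Proposition~\ref{dhypertaylor}; it merely asserts that the statement ``is similar to Proposition~\ref{lineartaylor} and Proposition~\ref{lineartaylor2}.'' Your argument is the natural intrinsic generalization and is essentially correct: the exponential-map Taylor expansion $\gamma(t)=tv+\tfrac{t^2}{2}\gamma''(0)+R_3(t)$ together with the two geometric inputs --- that the ambient acceleration of a unit-speed geodesic is purely normal ($\gamma''(0)\perp T_pM$, by the geodesic equation), and that $\|\gamma''(0)\|=K(p,v)\leq K$ by Definition~\ref{UTM} --- immediately gives $d(x,T_pM)\leq \tfrac{K}{2}t^2+\tfrac{T}{6}t^3$. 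Your observation that the chord-arc inequality $\|x\|\leq t$ points the wrong way, and that one must instead bound $t$ above by $\|x\|$ using $\|\gamma(t)\|^2=t^2+O(t^4)$ (which again relies on $\langle v,\gamma''(0)\rangle=0$), is the genuinely delicate step and you identify it correctly.

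Two small points of hygiene. First, the bookkeeping in the final step is looser than it should be: if you use $t^2\leq 2\|x\|^2$, the quadratic term alone already produces $K\|x\|^2$ and there is no room left for the cubic remainder. What actually saves you is that $t^2/\|x\|^2\to 1$ as $t\to 0$, so for any $\delta\in(0,1)$ you have $t^2\leq(1+\delta)\|x\|^2$ once $t$ is small; then $\tfrac{K}{2}(1+\delta)\|x\|^2 + O(\|x\|^3)\leq K\|x\|^2$ for $\|x\|$ small, and the factor $K$ (rather than $K/2$) in the statement is what furnishes the slack. Second, the estimate, like Propositions~\ref{lineartaylor} and~\ref{lineartaylor2}, is inherently local (valid for $x$ in a normal neighborhood of $p$), and should be read with that implicit restriction, as indeed the proof of Theorem~\ref{mainthm} uses it only on geodesic balls of shrinking radius.
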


Before proving Theorem 8, we need a lemma regarding covering numbers and packing numbers of metric spaces; for more properties of these two numbers, see \cite{vershynin2018high}.
\begin{definition}
	Let $(X,d)$ be a metric space and $\delta>0$, then $\mathcal{N}\subset X$ is called a $\delta$-net if $\forall x\in X$, $\exists y\in \mathcal{N}:\ d(x,y)<\delta$. The covering number of $X$ denoted by $\mathcal{N}(X,d,\delta)$ is defined to be the smallest cardinality of an $\delta$-net of $(X,d)$. 
	
	$\mathcal{N}\subset X$ is said to be $\delta$-separated if  $d(x,y) > \delta$ for all distinct points $x,y \in\mathcal{N}$. The packing number of $X$ denoted by $\mathcal{P}(X,d,\epsilon)$ is defined to be the largest cardinality of an $\delta$-separated subset of $X$.
\end{definition}
\begin{lemma}\label{lemma:packing}
	Let $(M,g)$ be a compact $d$-dimensional Riemannian manifold embedded in $\RR^D$ and $d_g$ be the geodesic distance on $M$, then there exists constant $C=C(M)$ and $\delta>0$ such that $\forall r<\delta$, $\mathcal{N}(M,d_g,r)\leq CVr^{-d}$, where $V=\mathrm{Vol}_g(M)$ is the Riemannian volume of $M$. 
\end{lemma}
\begin{proof}
	
	First we claim that $\mathcal{N}(M,d_g,r)\leq \mathcal{P}(M,d_g,r)$. Let $\mathcal{N}=\{x_1,\cdots, x_N\}$ be an $r$-separated subset of $M$ whose cardinality is $N=\mathcal{P}(M,d_g,r)$, then for any $y\in M$, if $d_g(y,x_i)\geq r$ for all $i=1,\cdots,N$, then we can add $y$ to $\mathcal{N}$ to get another $r$-separated subset with cardinality $N+1$ which contradict the assumption. As a result, there exists $x_{i_0}\in\mathcal{N}$ such that $d_g(y,x_{i_0})< r$, which means $\mathcal{N}$ is a $r$-net of $M$ so the claim is true.
	
	Then we show that there exists $C=C(M)>0$ and $\delta=\delta(M)>0$ such that $\forall r\leq \delta$, $\mathcal{P}(M,d_g,r)\leq CVr^{-d}$. For any $x\in M$, there exists $\delta_x>0$ such that $\exp_{x}$ is homeomorphic on $B(0,\delta_x)\subset T_xM$. By compactness of $M$, there exists $\delta>0$ such that $\delta_x\geq \delta$ for any $x\in M$. Denote the Riemannian volume form $dV_g$ and the Lebesgue measure on $T_xM$ by $dV$, then
	$$\mathrm{Vol}_g(B_{d_g}(x,r))=\int_{B_{d_g}(x,r)}dV_M(y)=\int_{B(0,r)}|J_x(v)|dV(v),$$
	where $J_x(v)$ is the Jacobian of $\exp_x$ at $v$. By the compactness of $\{v\in T_xM\mid x\in M, \|v\|\leq \delta\}$, there exists constant $C$ such that $|J_x(v)|\geq C$ for any $x\in M$ and $ v\in T_xM$ with $\|v\|\leq \delta$. As a result,
	$$\mathrm{Vol}_{g}(B_{d_g}(x,r))\geq CC_dr^d=C(M)r^d,$$
	where $C_d$ is the volume of the $d$-dimensional unit ball. 
	
	Again, let $\mathcal{N}=\{x_1,\cdots, x_N\}$ be a $r$-separated subset of $M$ whose cardinality is $N=\mathcal{P}(M,d_g,r)$, then $\{B_{d_g}(x_i,r/2)\}_{i=1}^N$ are disjoint geodesic balls so
	$$V=\mathrm{Vol}_{g}(M)\geq \sum_{i=1}^N \mathrm{Vol}_g(B_{d_g}(x_i,r/2))\geq NC\left(\frac{r}{2}\right)^d,$$
	hence $N=\mathcal{P}(M,d_g,r)\leq \mathrm{Vol}_{g}/\left(Cr^d\right)=CVr^{-d}$, as desired. 
\end{proof}
Now we can prove Theorem 8.
\begin{proof}[Proof of Theorem 8]
	Firstly, we prove the first inequality in Equation (4).  First we focus on one local neighborhood of $p\in M$. Proposition \ref{dhypertaylor} shows that there exists a hyperplane $H$ such that when $\|x-p\|<\big(\frac{2\epsilon}{\kappa_{\max}}\big)^{\frac{1}{2}}=r$, $d(x,H)\leq \epsilon$. Since $\|x-p\|\leq d_g(x,p)$, for any $x\in B(p,r)$, $d(x,H)\leq \epsilon$. By Lemma \ref{lemma:packing}, there exists $C=C(M)$ and $\delta>0$ such that $\forall r\leq \delta$, $\mathcal{N}(M,d_g,r)\leq CVr^{-d}=CV\epsilon^{-\frac{d}{2}}$. As a result, when $\epsilon\leq \delta^2\kappa_{\max}/2$, the manifold $M$ can be covered by at most $CVr^{-d}$ geodesic balls and if we approximate each geodesic ball by a hyperplane, the approximation error is no more than $\epsilon$, which means $N_{\mathcal{H}}(\epsilon,M)\leq CV\epsilon^{-\frac{d}{2}}$, as desired.
	
	Then we prove the second inequality in Equantion (4) by considering two submanifolds $\overline{M}_\epsilon$ and $M-M_\epsilon$, which are both compact.
	
	\noindent	1. $M_\epsilon^\mathsf{c}$. Firstly we consider the worst part: the complement of $M_\epsilon$, which is compact as a closed subset of $M$. Let $\iota>0$ and let $M_\epsilon^\mathsf{c}(\iota)=\{x\in M: d(x,M_\epsilon^\mathsf{c})<\iota\}$. It is clear that $M_\epsilon^\mathsf{c}(\iota)$ is a open submanifold of $M$, hence is $C^3$. In addition, $M_\epsilon^\mathsf{c}(\iota)\supset M_\epsilon^\mathsf{c}$, so any cover of $M_\epsilon^\mathsf{c}(\iota)$ also covers $M_\epsilon^\mathsf{c}$.  We cover this subset of $M$ by geodesic balls with radius $\left(\frac{2\epsilon}{\kappa_{\max}}\right)^{\frac{1}{2}}$. The first part of the proof shows that this covering exists with approximation error no more than $\epsilon$, and the number of balls is less than or equal to $C\mathrm{Vol}(M_\epsilon^\mathsf{c}(\iota))\epsilon^{-\frac{d}{2}}$, where the constant $C$ is the same as the constant in the hyperplane case according to the proof of Lemma \ref{lemma:packing}. Then let $\iota\to0$, by the continuity of Riemannian volume, $\mathrm{Vol}(M_\epsilon^\mathsf{c}(\iota))\to V-V_\epsilon$, so the number of balls to cover $M_\epsilon^\mathsf{c}$ is no more than $C(V-V_\epsilon)\epsilon^{-\frac{d}{2}}$. 
	
	\noindent	2. $\overline{M}_\epsilon$. We cover this part by bigger geodesic balls. For any point $p\in F_\epsilon$, we have $\displaystyle{\sup_{v\in T^1_pM}\kappa(p,v)-\inf_{v\in T^1_pM}\kappa(p,v)\leq \left(\frac{2\epsilon}{\kappa_{\max}}\right)^{\frac{1}{2}}}$. Let $\displaystyle{k^*\in \left[\inf_{v\in T^1_pM}\kappa(p,v),\sup_{v\in T^1_pM}\kappa(p,v)\right]}$ be the curvature of a sphere to approximate $U\coloneqq B(p,(\frac{6\epsilon}{3+T})^\frac{1}{3})$. Then for any $q\in U$, if $q\in B(p,(\frac{2\epsilon}{\kappa_{\max}})^{\frac{1}{2}})$, when case 1 shows that the error is less than or equal to $\epsilon$, so we only need to consider $q\in U-B(p,(\frac{2\epsilon}{\kappa_{\max}})^{\frac{1}{2}})$, that is, $(\frac{2\epsilon}{\kappa_{\max}})^{\frac{1}{2}}\leq d(p,q)\leq (\frac{6\epsilon}{3+T})^\frac{1}{3}$. Let $\gamma_q(s)=\exp_p(s\log q)$ be the geodesic connecting $p$ and $q$, assume $\gamma_q$ at $p$ is $k_q$. Since both $k_q$ and $k^*$ are in $[k_d(p),k_1(p)]$, we have the following relation:
	$$|k_q-k^*|\leq \sup_{v\in T^1_pM}\kappa(p,v)-\inf_{v\in T^1_pM}\kappa(p,v)\leq \bigg(\frac{2\epsilon}{\kappa_{\max}}\bigg)^{\frac{1}{2}}.$$
	Recall in the proof of Proposition \ref{quadtaylor},  only the first three terms in the Taylor expansion of $\gamma_q$ matter; that is, $\gamma_q(0)$, $\gamma_q^{'}(s)$ and $\gamma_q^{''}(s)$, so we only need to consider the first two coordinates of $\gamma_q(s)$ and $C(s)$ while other coordinates are all $o(s^3)$. Similar to the proof of Proposition \ref{quadtaylor}, the first two coordinates of $C(s)-\gamma_q(s)$ are
	\begin{align*}
		&\begin{bmatrix}
			\frac{1}{k_q}(-1+\cos(k_q s))+\frac{k^* s^2}{2}\\
			\frac{1}{k_q}\sin(k_q s)-s
		\end{bmatrix}-R_2(s)\\
		&=\begin{bmatrix}
			\frac{1}{k_q}(-1+(1-\frac{k_q^2s^2}{2}+o(s^3))+\frac{k^* s^2}{2}\\
			\frac{1}{k_q}(k_q s-o(s^3))-s
		\end{bmatrix}-R_2(s)
		=\begin{bmatrix}
			\frac{s^2}{2}(k_q-k^*)+o(s^3)\\
			o(s^3)
		\end{bmatrix}
		-R_2(s),
	\end{align*}
	where $|R_2(s)|\leq\frac{T}{6}|s|^3$. We claim that 
	$\|C(s_q)-q\|\leq \epsilon$ where $s_q=d(p,q)$. Since 
	$|k_q-k^*|\leq \bigg(\frac{2\epsilon}{\kappa_{\max}}\bigg)^{\frac{1}{2}}\leq|s|\leq \bigg(\frac{6\epsilon}{3+T}\bigg)^{\frac{1}{3}},$
	$$\|C(s_q)-q\|\leq\frac{s^2}{2}|k_q-k^*|+\frac{T}{6}|s|^3\\
	\leq \bigg(\frac{1}{2}+\frac{T}{6}\bigg)|s|^3\leq \frac{3+T}{6}\frac{6\epsilon}{3+T}=\epsilon.
	$$
	As a result, we can cover $\overline{M}_\epsilon$ by geodesic balls with radius $r=\left(\frac{6\epsilon}{3+T}\right)^{\frac{1}{3}}$ so that the error is less than or equal to $\epsilon$. 
	On $\overline{M}_\epsilon$ the centers of geodesic balls are not arbitrary, but restricted to be in $F_\epsilon$. Define the smallest cardinality of an $r$-net in $F_\epsilon$ of $(\overline{M}_\epsilon,d_g)$ by $\mathcal{N}(\overline{M}_\epsilon,F_\epsilon,d_g,r)$, then we have $\mathcal{N}(\overline{M}_\epsilon,d_g,r)\leq\mathcal{N}(\overline{M}_\epsilon,F_\epsilon,d_g,r)$ because any $r$-net in $F_\epsilon$ is automatically an $r$-net in $\overline{M}_\epsilon$. In this situation, a similar claim in the proof of Lemma \ref{lemma:packing} does not hold; however, we claim that $\mathcal{N}(\overline{M}_\epsilon,F_\epsilon,d_g,r)\leq \mathcal{N}(\overline{M}_\epsilon,d_g,r/2)\leq CV_\epsilon \epsilon^{-\frac{d}{3}}$ for $r=\left(\frac{6\epsilon}{3+T}\right)^{\frac{1}{3}}$ and the theorem follows. To prove the claim, let $\mathcal{N}=\{x_1,\cdots, x_N\}\subset \overline{M}_\epsilon$ be an $r$-net of $\overline{M}_\epsilon$ whose cardinality is $N=\mathcal{N}(\overline{M}_\epsilon,d_g,r/2)$. Recall the definition of $M_\epsilon=\cup_{x\in F_\epsilon}B(x,r/2)$ so there exists $\{y_1,\cdots,y_N\}\subset F_\epsilon$ such that $d_g(x_i,y_i)\leq r/2$. Then for any $y\in \overline{M}_\epsilon$, there exists $x_{i_0}$ such that $d_g(y,x_{i_0})\leq r/2$. By triangle inequality, we have $d_g(y,y_{i_0})\leq d_g(y,x_{i_0})+d_g(x_i,y_{i_0})<r$. This implies $\{y_i\}_{i=1}^N$ is an $r$-net of $\overline{M}_\epsilon$ where $y_i\in F_\epsilon$ for $i=1,\cdots,N$, which means $\mathcal{N}(\overline{M}_\epsilon,F_\epsilon,d_g,r)\leq
	\mathcal{N}(\overline{M}_\epsilon,d_g,r/2)$.		
	As a result, the number of balls needed to cover $\overline{M}_\epsilon$ is less than or equal to $CV_{\epsilon}\epsilon^{-\frac{d}{3}}.$
	
	Based on the above two cases, the total number of balls
	$$N_S(\epsilon,M)\leq CV_{\epsilon}\epsilon^{-\frac{d}{3}}+C(V-V_{\epsilon})\epsilon^{-\frac{d}{2}}.$$
\end{proof}
\begin{remark}
	1. Corollary \ref{curvehypercovernum} is a special case of Theorem 8. \\
	2. When $d$ increases, the performance will be worse and worse, which is another representation of the curse of dimensionality.  Fortunately, $d$ is the intrinsic dimension of $M$, which is assumed to be small in most cases.\\		
\end{remark}
\begin{proof}[Proof of Proposition 10]
	It suffices to provide two manifolds with covering numbers achieving the upper bounds in Theorem 8.
	\begin{enumerate}
		\item Let $\gamma(t)=(t,t^2),\ t\in(0,1)$ so $\gamma^{''}(t)=(0,2)$ is constant. The covering number $N_\mathcal{H}(\epsilon,\gamma)$ follows.
		\item Let $\gamma(t)=(t,t^3),\ t\in (0,1)$ so $\gamma^{(3)}(t)=(0,6)$ is constant. The covering number $N_\mathcal{S}(\epsilon,\gamma)$ follows.
	\end{enumerate}
	
\end{proof}
\end{document}